\documentclass[]{article}
\usepackage[utf8]{inputenc} 
\usepackage[T1]{fontenc}    
\usepackage{url}            
\usepackage{amsfonts}       
\usepackage{nicefrac}       
\usepackage{microtype}      
\usepackage[dvipsnames]{xcolor}         
\usepackage{pifont}
\usepackage{subcaption}
\usepackage{fullpage}
\usepackage{amsmath}
\usepackage{amsthm}
\usepackage{wrapfig}
\usepackage{diagbox}
\usepackage{algorithm}
\usepackage{algpseudocode}

\usepackage{mleftright}
\usepackage[authoryear,round]{natbib}
\definecolor{ForestGreen}{rgb}{0.1333,0.5451,0.1333}
\definecolor{DarkRed}{rgb}{0.8,0,0}
\definecolor{Red}{rgb}{0.9,0,0}
\usepackage[linktocpage=true,
pagebackref=true,colorlinks,
    urlcolor=black,
linkcolor=DarkRed, citecolor=ForestGreen,
bookmarks,bookmarksopen,bookmarksnumbered]
{hyperref}

\usepackage{graphicx}       
\usepackage{amssymb}
\usepackage{tikz}
\usetikzlibrary{automata}
\usetikzlibrary{arrows.meta, positioning, shapes.geometric}
\usetikzlibrary{decorations.pathreplacing,}
\usetikzlibrary{shapes, shadows, fit, calc, backgrounds}
\definecolor{gold}{RGB}{212,175,55}

\usepackage[capitalize,noabbrev]{cleveref}
\usepackage{aliascnt}

\usepackage{dsfont}
\theoremstyle{plain}
\Crefname{assumption}{Condition}{Conditions}
\crefname{assumption}{Condition}{Conditions}

\theoremstyle{plain}

\newtheorem{theorem}{Theorem}

\newtheorem{definition}{Definition}
\newtheorem{lemma}{Lemma}

\newtheorem{remark}{Remark}

\theoremstyle{definition}

\newtheorem{example}{Example}

\newcommand{\R}{\mathbb R}

\renewcommand{\tilde}{\widetilde}

\newcommand{\pop}{\mathnormal{pop}}
\newcommand{\emp}{\mathnormal{emp}}

\usepackage{bm}
\usepackage{dsfont}
\usepackage{nicematrix}

\usepackage{todonotes}
\usepackage{authblk}

\title{Learning through Internalization}
\date{\today}
\author[1]{Nikolaos Tsilivis}
\author[2]{\quad Nirmit Joshi}
\author[3]{\quad Marko Medvedev}
\author[1]{\quad Julia Kempe}
\author[2]{\quad Nati Srebro}

\affil[1]{\small New York University. \texttt{nt2231@nyu.edu}}
\affil[2]{\small Toyota Technological Institute at Chicago.}
\affil[3]{\small University of Chicago.}

\begin{document}

\maketitle

\begin{abstract}

    We study internalization processes, by which neural-network-based systems absorb an explicit computational procedure into their own weights, and how they facilitate learning. We investigate how transformers internalize the simulation of semiautomata by internalizing chain-of-thought (CoT) tokens, which classes of semiautomata are harder to internalize, and expose the flip side of internalization, that is, a progressive degradation of out-of-distribution performance. We then provide the first provable analysis of successful internalization: for the task of learning parities, we show that a simplified one-layer transformer provably first learns the target with explicit CoT supervision and then internalizes the autoregressive generation as CoT tokens are progressively removed, learning to directly compute the parity. This task is computationally hard to learn from data without CoT supervision. Finally, we discuss how learning through internalization relates to the \textit{Positive Distribution Shift} phenomenon recently introduced by~\citet{Med+26}.
\end{abstract}

\section{Introduction}\label{sec:intro}

Humans often find it difficult to learn some tasks directly. When a teenager wants to learn how to drive a car, they benefit from a teacher who meticulously explains the many intermediate steps of the process and does not ask them to start practicing driving right off. Driving students begin by learning each of these subtasks (e.g. the role of each pedal, how to press the right button or clutch, how to steer the wheel) and only later do they become ready to operate the vehicle on the road. In fact, only after repetition and effort do some manage to drive without actively thinking about it. For these people, driving has become automatic, and they do not have to always think about all the intermediate steps. They have internalized their execution and have mastered driving, which was the task they ultimately cared about; all the other subtasks were merely stepping stones to it. And while doing things faster does not always mean doing them better, since doing things automatically can make people more prone to mistakes under varying conditions, we still recognize this internalization process as something positive and desirable. The whole process (explicit subtask learning, composition and finally internalization) permits the learning of difficult tasks that seem otherwise impenetrable, while the final stage of internalization allows humans to reduce execution time, enabling them to move on to other pursuits.

In machine learning, there have been many instances where a neural network has benefited from such a process. We define \textbf{internalization} as the process by which a neural-network-based system initially implements a procedure over many steps, but later its weights are reconfigured to implement the same procedure in far fewer steps. For example, neural-network based systems can learn how to play board games by internalizing the output of an expensive next-move search in their own weights, as in Samuel’s original work on checkers~\citep{Sam59} or Google’s AlphaGo Zero~\citep{Sil+17}.

\begin{figure}
    \centering
    \begin{minipage}{0.4825\textwidth}
        \begin{tikzpicture}[
            scale=0.55,
            transform shape,
            >=Latex,
            font=\small,
            box/.style={
                draw,
                thick,
                minimum width=1.3cm,
                minimum height=1.0cm,
                align=center
            },
            cloud/.style={
                draw,
                thick,
                rounded corners=8pt,
                minimum width=2.2cm,
                minimum height=1.5cm,
                align=center
            },
            lab/.style={font=\small\itshape},
            every node/.style={inner sep=2pt}
        ]
        
        \node[lab] at (4.5,5.75) {Explicit to Implicit CoT};
        \node[lab] at (4.5,5.4) {\citep{DCS24}};

        \node[box] (tf1) at (0,3) {TF};
        \node[box] (tf2) at (3,3) {TF};
        \node[box] (tf3) at (6,3) {TF};
        \node[box] (tf4) at (10,3) {TF};
        
        \draw[->,thick] (0,2.1) -- (tf1.south);
        \node at (0,1.8) {\texttt{10101}};
        
        \draw[->,thick] (3,2.1) -- (tf2.south);
        \node at (3,1.8) {\texttt{10101\textcolor{red}{0}}};
        
        \draw[->,thick] (6,2.1) -- (tf3.south);
        \node at (6,1.8) {\texttt{10101\textcolor{red}{00}}};
        
        \draw[->,thick] (10,2.1) -- (tf4.south);
        \node at (10,1.8) {\texttt{10101}};
        
        \draw[->,thick] (tf1.north) -- ++(0,0.6);
        \node at (0,4.5) {$\texttt{\textcolor{red}{0}}$};
        
        \draw[->,thick] (tf2.north) -- ++(0,0.6);
        \node at (3,4.5) {$\texttt{\textcolor{red}{0}}$};
        
        \draw[->,thick] (tf3.north) -- ++(0,0.6);
        \node at (6,4.5) {$\texttt{\textcolor{gold}{1}}$};
        
        \draw[->,thick] (tf4.north) -- ++(0,0.6);
        \node at (10,4.5) {$\texttt{\textcolor{gold}{1}}$};
        
        \draw[->,thick]
            ($(0,4.25)+(0.25,0.25)$)
            to[out=0,in=270]
            (3.,1.35);
        
        \draw[->,thick]
            ($(3,4.25)+(0.25,0.25)$)
            to[out=0,in=270]
            (6.,1.35);
        
        \draw[->,thick] (7.0,3.0) -- (8.8,3.0);
        \node[lab] at (7.9,3.35) {internalization};
        
        \begin{scope}[on background layer]
            \filldraw[
                rounded corners=4pt,
                draw=red!70!black,
                fill=red!45,
                fill opacity=0.10,
                line width=0.8pt
            ]
            (-.9,.75) rectangle (6.75,4.85);
    
            \filldraw[
                rounded corners=4pt,
                draw=blue!70!black,
                fill=blue!45,
                fill opacity=0.10,
                line width=0.8pt
            ]
            (9.25,.75) rectangle (10.75,4.85);
        \end{scope}
        \end{tikzpicture}
    \end{minipage}
    \begin{minipage}{0.51\textwidth}
        \includegraphics[width=\linewidth]{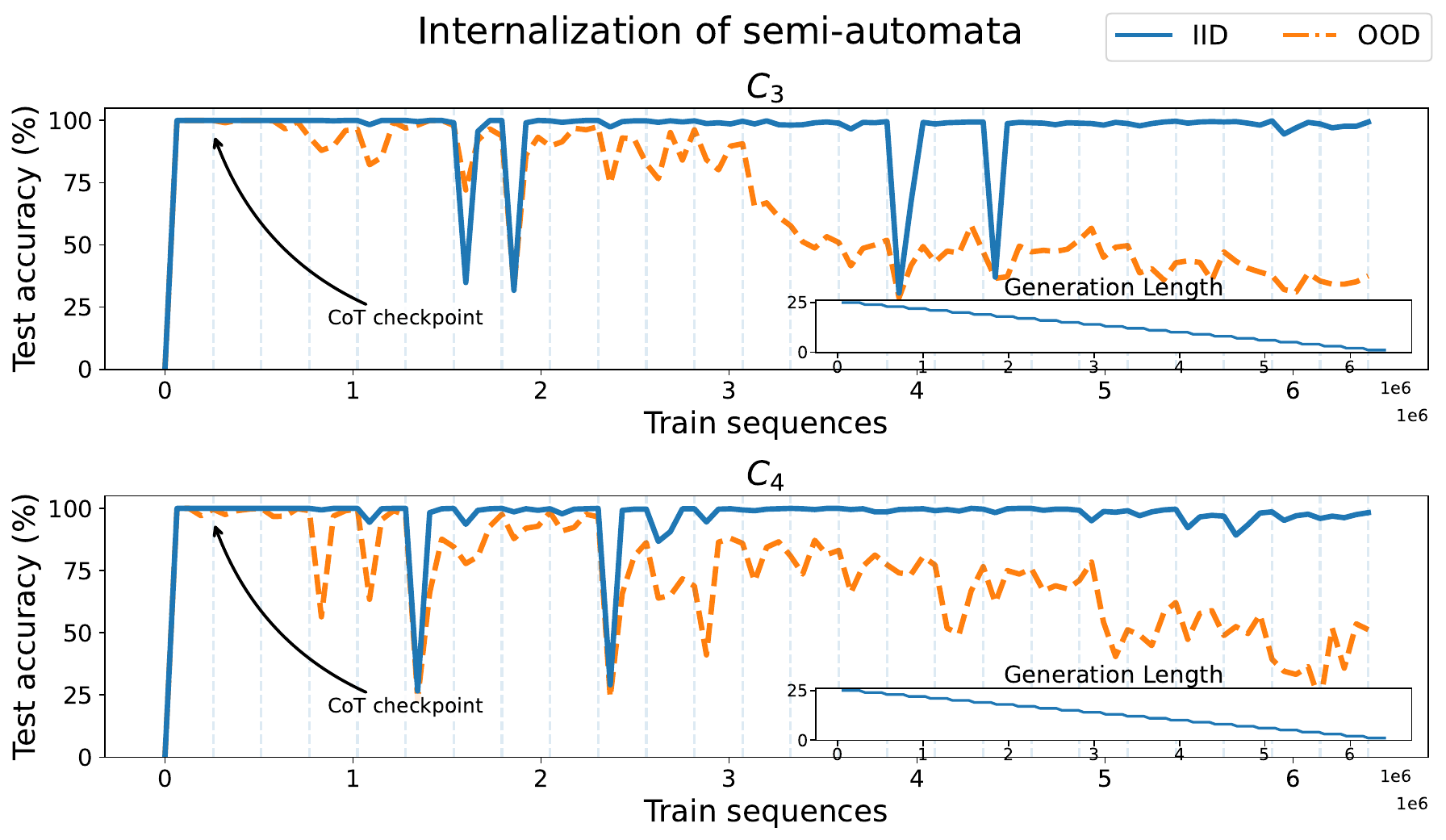}
    \end{minipage}
    \caption{\textbf{Internalization of chain-of-thought in autoregressive transformers.} \textit{Left}: An illustration of the outcome of the internalization process. \textit{Right}: Two examples of internalization of semiautomaton simulation over $T=25$ steps. As the transformer internalizes more chain-of-thought tokens, its out-of-distribution test accuracy (induced by a different sampling process for input tokens) degrades. Each dashed lines corresponds to one additional token dropped from the chain-of-thought.}
    \label{fig:ad_figure}
\end{figure}

More recently, several works have explored internalization schemes for machine reasoning with autoregressive large language models (LLMs)~\citep{Deng+23,DCS24,Yu+24}. While training LLMs to solve complex problems ``directly'', i.e., without chain-of-thought (CoT) supervision, might often be difficult, these works present an alternative: a transformer first learns to autoregressively generate answers based on step-by-step CoT supervision, and is then later optimized to internalize the outcome of the lengthy generation in its weights, thereby generating responses with fewer, or perhaps even just one, forward pass (Figure 1, left). In particular, \citet{DCS24} introduced a curriculum method for the internalization part that progressively removes CoT tokens during training, and showed that the whole process (explicit CoT training and then gradual internalization) can be very effective in arithmetic tasks, such as number multiplication, which otherwise appear impenetrable if attempted to be learned directly. 

While this approach appears powerful and has seen success on several tasks~\citep{DCS24}, other works have reported mixed results on more sophisticated reasoning problems~\citep{Yu+24, lin2025implicit}, with ~\citet{lin2025implicit} identifying that ``internalized reasoning’’ can overfit to specific patterns and may fail to generalize more broadly. Motivated by the promising empirical results on internalization of CoT, in our work we pose and attempt to answer several questions regarding internalization. We ask:
\begin{center}
    \emph{Which tasks are easy to internalize, and which tasks are difficult? How do network representations evolve during internalization? Does internalization lead to representations that mirror explicit reasoning but unfold vertically across network layers, or does it instead lead to shortcut solutions that exploit the model's parallel computation? What are the principles that enable efficient learning through internalization?}
\end{center}

\paragraph{Our contribution.} First, in \Cref{sec:prelim}, we define what constitutes an internalization process throughout the paper, and explain how internalization can sometimes be used for learning purposes. In \Cref{sec:automata}, we empirically study internalization of CoT in a canonical sequential reasoning setting: simulating a semiautomaton for \(T\) timesteps and outputting its final state. Then, in \Cref{sec:parity_one_layer_tf}, we provide a theoretical result on the sparse parity task that shows that internalizing CoT can enable a model to learn representations that are computationally hard to find directly. Finally, in ~\Cref{sec:pds-learning-internalization}, we explore connections between learning through internalization and a recently introduced \textit{Positive Distribution Shift} learning framework~\citep{Med+26}. Our main results are summarized below.
\begin{itemize}
	\item We identify several \textit{method specific factors}, such as architecture or curriculum choices, that affect the probability of successful internalization. For example, we observe that, across different semiautomata tasks, internalization of CoT is consistently more successful with wider rather than deeper architectures at a fixed parameter budget. This perhaps contradicts the mental picture that implicit reasoning tokens are ``represented vertically'' in the network post-internalization.
	\item We identify \textit{task specific factors} that control whether a semiautomaton is easy or hard to internalize. In particular, amongst the class of modular counter semiautomata those with \textit{prime} order appear to be the most difficult to internalize. We conjecturally relate this to the ability of transformers to compactly represent counter semiautomata whose order has small prime-power factors (\Cref{thm:log_depth_log_width_prime_informal}).
	\item We expose the flip side of internalization: much like humans, transformers that internalize operations often tend to perform worse in OOD settings (Figure~\ref{fig:ad_figure}, right). In particular, the internalization process for semiautomata can lead the model to learn shortcut solutions~\citep{Liu+23}, which can generalize worse than the representation outputted by CoT training.
	\item We provide the first provable analysis of learning through internalization: for the sparse parity task, we prove that a stylized one-layer transformer with linear attention and a fixed nonlinear MLP first learns the target parity of hidden support, using explicit CoT, and then internalizes this computation as CoT tokens are removed one at a time (\Cref{thm:provable-parity-main}), thereby directly computing the parity. Learning such a representation directly from data without internalization of CoT is computationally hard.
\end{itemize}
Finally, throughout the paper, and in greater detail in~\Cref{sec:pds-learning-internalization}, we formally explain how learning through internalization provides a viable strategy for sidestepping the hardness of ``direct’’ learning. We explicitly connect this perspective to the Positive Distribution Shift phenomenon~\citep{Med+26}. Motivated by this, we also explore a new internalization scheme for MLPs that enables learning hard parity functions (\Cref{subsec:internalization-MLP}). For autoregressive transformers, we further empirically demonstrate that training on a mixture of distributions may already suffice, without requiring an explicit curriculum that progressively removes tokens (\Cref{subsec:mixture}).

\section{Internalization Processes}\label{sec:prelim}

We start off by defining what we call an internalization process throughout the paper. We then describe the main learning phenomenon of study: instead of learning a distribution directly, we first learn a separate distribution with a ``slow'' system and then later internalize the ``slow'' system within a ``faster'' one.

\subsection{Definition of internalization} 

Let $\Sigma$ be an alphabet, $\mathcal{X} \subseteq \Sigma^\ast$ be an input space, $\Theta$ be an encoding (parameter) space, and let $\mathcal{Y} \subseteq \Sigma^\ast$ be an output space.
Let $\mathcal{H} = \left\{ h_\theta \middle | \theta \in \Theta \right\} \subseteq \mathcal{Y}^{\mathcal{X}}$ be a hypothesis class of functions each indexed by $\theta \in \Theta$, and, for any $\theta \in \Theta$, let $\mathfrak{h}_\theta$ denote an algorithm that given input $x \in \mathcal{X}$ computes $h_\theta(x)$.
Furthermore, for any $\theta \in \Theta$, denote the runtime of $\mathfrak{h}_\theta$ on input $x \in \mathcal{X}$ by $\mathrm{Time}_{\mathfrak{h}_\theta}(x)$. Let $P: \mathcal{Y}^{\mathcal{X}} \to \mathcal{Y}^{\mathcal{X}}$ be a procedure that, given access to a hypothesis $h_\theta$, produces a new mapping $P(h_\theta): \mathcal{X} \to \mathcal{Y}$. For any $\theta \in \Theta$, let $\mathfrak{P}(\mathfrak{h}_\theta)$ be an algorithm that given input $x \in \mathcal{X}$ computes $P(h_\theta)(x)$ and denote its runtime by $\mathrm{Time}_{\mathfrak{P}(\mathfrak{h}_\theta)}(x)$. We assume that $\mathfrak{P}(\mathfrak{h}_\theta)$ has only query access to $\mathfrak{h}_\theta$ and does not know $\mathcal{H}$ nor can it modify $\theta$.

\begin{definition}[Internalization]\label{def:internalization}
    Fix $\theta \in \Theta$. If there exists $\theta^\prime \in \Theta$ with $\theta^\prime \neq \theta$ such that
    \begin{enumerate}
        \item \textbf{Output agreement:}
        \begin{equation}
            h_{\theta^\prime}(x) = P(h_\theta)(x) \qquad \forall x \in \mathcal{X}.
        \end{equation}
        \item \textbf{$\mathfrak{h}_{\theta^\prime}$ is faster than $\mathfrak{P}(\mathfrak{h}_\theta)$:} For all $x \in \mathcal{X}$, it holds:
        \begin{equation}
            \mathrm{Time}_{\mathfrak{h}_{\theta^\prime}}(x) < \mathrm{Time}_{\mathfrak{P}(\mathfrak{h}_\theta)}(x),
        \end{equation}
    \end{enumerate}
    then we say that $\mathfrak{h}_{\theta^\prime}$ \textbf{internalizes} $\mathfrak{P}(\mathfrak{h}_\theta)$.
\end{definition}

In words, if we have a program (e.g., a neural network) $\mathfrak{h}_\theta$ and some other system that uses this program (potentially many times) to produce a new program $\mathfrak{P}(\mathfrak{h}_\theta)$, then we say that $\mathfrak{h}_{\theta^\prime}$ internalizes $\mathfrak{P}(\mathfrak{h}_{\theta})$ if there exists a new, fast, program $\mathfrak{h}_{\theta^\prime}$, induced by a hypothesis $h_{\theta^\prime}$ belonging to the same class of hypotheses $\mathcal{H}$ as the original $h_\theta$, that agrees with the previous slow $\mathfrak{P}(\mathfrak{h}_\theta)$ on all inputs. 
Some remarks are in order.
\begin{remark}
    We assume that the initial slow program and the final fast one are induced by hypotheses belonging to the same class. We believe this is analogous to the human case, where a single brain is reconfigured during internalization. This choice also distinguishes internalization from other notions previously studied in machine learning, such as model distillation~\citep{HVD15}, which is typically viewed as distilling a source class $\mathcal{F}$ into a target class $\mathcal{G}$~\citep{Boi24}.
\end{remark}
\begin{remark}
    In our definition, we prohibit the program $\mathfrak{P}(\mathfrak{h}_\theta)$ from interacting with $\mathcal{H}$ or updating $\theta$, and we allow it only query access to $\mathfrak{h}_\theta$. That is, $\mathfrak{P}(\mathfrak{h}_\theta)$ is an inference-type algorithm. Without these assumptions, the definition would permit the slow program to be an iterative algorithm initialized at $\mathfrak{h}_\theta$, such as gradient descent initialized at $\theta$, which is a more general notion than the one we seek to isolate in this work. Note that internalization, as defined here, needs not be a learning process. It does not assume access to a training dataset, nor do we evaluate absolute performance with respect to a test distribution. We will describe later how it can be used for learning.
\end{remark}

\begin{example}[Internalizing autoregressive chain-of-thought]\label{ex:int-cot}
We follow the notation of~\citet{Jos+25}.
Let $\Sigma$ be a finite token alphabet, let $\mathcal{X} = \Sigma^\ast$ be the space of prompts, and let $\mathcal{Y} = \Sigma$ be the space of output tokens. Let $\mathcal{H} = \left\{ h_\theta \middle | \theta \in \Theta \right\} \subseteq \mathcal{Y}^\mathcal{X}$ be a class of next-token generators indexed by $\theta \in \Theta$ for some set $\Theta$.

Fix a $\theta \in \Theta$. For the generator $h_\theta$, define its apply-and-append map
$\bar h_\theta: \mathcal{X} \to \mathcal{X}$ by $\bar h_\theta(x) := \texttt{append}\bigl(x, h_\theta(x)\bigr)$. For a fixed generation length $T \in \mathbb{N}_+$, define
\begin{equation}\label{eq:P(h)_autoregressive}
    P(h_\theta)(x) = h_\theta^{\mathrm{e2e}-T}(x) : =  \underbrace{\bar h_\theta \circ \bar h_\theta \ldots \circ \bar h_\theta}_{T \text{ times}}(x) [-1],
\end{equation}
where $z[-1]$ denotes the last token of a string $z$. This is the mapping $P(h_\theta)$ from prompt to final token, ignoring the generated chain-of-thought. Also, define corresponding programs $\mathfrak{h}_\theta, \mathfrak{P}(\mathfrak{h}_\theta)$ for functions $h_\theta, P(h_\theta)$, respectively. Assume that the runtime of $\mathfrak{h}_\phi$ on a prompt depends only on the prompt length and not on $\phi$, that is $\mathrm{Time}_{\mathfrak h_{\phi}}(x) = \mathfrak h_{\theta}$ for all $x, \phi, \theta$, and that $\mathfrak P(\mathfrak h_\theta)$ computes $P(h_\theta)(x)$ from an input $x$ by performing the $T$ calls to $\mathfrak{h}_\theta$ explicitly.

Suppose there exists $\theta^\prime \neq \theta$ such that 
\begin{equation}
    h_{\theta^\prime}(x) = P(h_\theta)(x), \qquad \forall x \in \mathcal X.
\end{equation}
Let $r(n)$ denote the runtime of $\mathfrak h_\phi$ on a prompt of length $n$. Then, we have $\mathrm{Time}_{\mathfrak h_{\theta'}}(x)=r(|x|)$,
whereas the explicit autoregressive implementation satisfies
$\mathrm{Time}_{\mathfrak P_T(\mathfrak h_\theta)}(x) \geq \sum_{t=0}^{T-1} r(|x|+t)$. Hence, if $T>1$ and $r(n)>0$ for all $n$, then
\begin{equation}
    \mathrm{Time}_{\mathfrak h_{\theta'}}(x) < \mathrm{Time}_{\mathfrak P_T(\mathfrak h_\theta)}(x).    
\end{equation}
Thus, according to~\Cref{def:internalization}, $\mathfrak{h}_{\theta^\prime}$ \textit{internalizes} $\mathfrak{P}(\mathfrak{h}_\theta)$.
In words, the updated generator $h_{\theta^\prime}$ produces in one forward pass exactly the same final answer that the original generator $h_\theta$ would have produced after $T$ autoregressive steps.
\end{example}


While the above definition and example ground our discussion of internalization (see also~\Cref{sec:internalization_more} for further discussion), they do not explain how to internalize, nor do they concern themselves with learning any of the hypotheses. Next, we describe how an ``indirect'' approach to learning (learning of the slow inference system and then internalization) can sometimes be used instead of ``direct'' learning of a hard distribution.

\subsection{Learning through Internalization}\label{sec:internalization-learning-perspective-intro}

Indeed, an internalization process, as defined above, can itself be a part of a greater learning process.

Let $\mathcal{X}$ be an input space, let $\mathcal{Y}$ be an output space, let 
$f^\star:\mathcal{X}\to\mathcal{Y}$ be an unknown target function, and let 
$\mathcal{D}$ be a distribution over $\mathcal{X}$. For a hypothesis 
$h:\mathcal{X}\to\mathcal{Y}$, define its error by $L_{\mathcal{D}}(h) := \mathbb{P}_{x\sim\mathcal{D}}\left[h(x)\neq f^\star(x)\right]$. The usual end-to-end learning approach is to collect samples 
$(x,f^\star(x))$ with $x\sim\mathcal{D}$ and use them to find a hypothesis 
$h_\theta\in\mathcal{H}$ with small error $L_{\mathcal{D}}(h_\theta)$. 
However, for many natural hypothesis classes and target functions, such as parities, this direct 
learning problem may be computationally difficult 
\citep{Val84,KeVa94}.

An alternative to such an ``end-to-end'' approach would be to capitalize on internalization: instead of trying to learn the final 
map directly, one may first learn a predictor with slower inference time $P(h_\theta):\mathcal{X}\to\mathcal{Y}$, induced by an $h_\theta \in \mathcal{H}$, with small error $L_{\mathcal{D}}(P(h_\theta))$. This slower inference system may be easier to obtain because we can exploit additional supervision or search.  The second step is to then find a hypothesis  $h_{\theta'}\in\mathcal{H}$ that internalizes $P(h_\theta)$. Thus, the original goal of ``end-to-end'' learning can be split into two: learning of a ``slower'' system and then internalization.
For example, \citet{DCS24} showed that such a learning strategy can sidestep the difficulties of learning certain reasoning tasks with transformers: explicit chain-of-thought supervision was used to train the slow transformer-based autoregressive system, followed by an internalization process consisting of curriculum training with progressively amputated chain-of-thought data, eventually yielding a transformer that can respond correctly in a single forward pass.

\paragraph{Positive Distribution Shift.}
This strategy of learning through internalization can be related to a learning framework recently formalized by~\citet{Med+26}. In particular,~\citet{Med+26} introduced a \textit{Positive Distribution Shift (PDS)} framework to reconcile theoretical hardness of learning with empirical success of deep learning practice. The perspective of PDS here is that even though learning certain distributions can be ``hard'' (e.g., difficult to learn directly with transformers and standard gradient descent methods, or impossible with computationally efficient algorithms), one can succeed in learning by coming up with another distribution that is not only itself easier to learn from, but, upon learning it, one is certain that is has learned the original ``hard'' distribution, too. 

In this terminology, the distribution used for learning the slow system $P(h_\theta)$, together with any other distributions accessed during internalization, forms a ``positive distribution shift'' relative to the potentially hard ``end-to-end'' distribution $\mathcal{D}$. For example, the sequence of distributions in the curriculum of \citet{DCS24} constitutes a ``positive distribution shift'' to the end-to-end distribution associated with the hard reasoning problems they study. We note that the form of PDS studied by \citet{Med+26} concerns ``standard'' supervised learning and predominantly considers covariate shifts in a single shifted distribution, whereas the distribution shift considered here also affects the target (due to chain-of-thought supervision) and consists of a sequence of distributions. In Section~\ref{sec:pds-learning-internalization}, we introduce a setting in which learning through internalization facilitates the learning of hard distributions with feed forward networks, and we explicitly connect both this approach and autoregressive learning of semiautomata to the strict notion of PDS.

\section{Internalizing Semiautomata with Transformers}\label{sec:automata}

In this section, we investigate whether and under what conditions transformers can internalize the simulation of semiautomata.   

A \textit{semiautomaton} consists of a state space, an input space, and a transition function that, given the current state and an input symbol, specifies the next state. We are interested in predicting the state of the semiautomaton after $T$ transition steps, specified by inputs $\sigma_1, \ldots, \sigma_T$ and an initial state $q_0$, and consider various types of training that differ in the type of supervision they provide beyond the input tokens:
\begin{enumerate}
    \item \textit{End-to-end} (E2E) training, where only the final state $q_T$ is appended to the input,
    \item \textit{Chain-of-thought} (CoT) training, where sequences contain all the intermediate states $q_1, \ldots, q_{T-1}$ together with the initial and final states,
    \item \textit{Internalization} via a curriculum, where we first perform CoT training and then progressively start omitting intermediate states from the training sequences~\citep{DCS24}. We consider internalization via a \textit{left} and a \textit{right} curriculum~\citep{DCS24}, which differ in whether the CoT starts getting amputated from the beginning or the end of the sequences, as well as an \textit{inductive} curriculum that we introduce, which trains on sequences with progressively larger leaps between states.
\end{enumerate}

In our experiments, we mainly consider modular counter semiautomata $C_n$, which have inputs and states from $\mathbb{Z}_n = \left\{0, \ldots, n-1\right\}$ and the state at time $t+1$ is computed as $q_{t+1} = (q_t+\sigma_t) \bmod n$. We also consider the non-commutative semiautomaton associated with the permutation group $S_3$. The input distribution draws each symbol uniformly at random from the input alphabet. To probe out-of-distribution generalization during training, we also consider distributions that bias the total sum of the input tokens. See Appendix~\ref{sec:app:experimental-details-semiautomata} for more details and formal definitions.

Simulating semiautomata presents a clean setting for studying the internalization of CoT in transformers. Recall that, during internalization, we start training with explicit CoT containing $T-1$ intermediate states/sequential reasoning steps. Such a CoT sequence can be compactly represented by a single transformer layer (attention on the current state and the corresponding input symbol, while the MLP implements the transition function). However, the transformer needs to change its representation during internalization and could, in principle, converge to either of two qualitatively different ones: either to a representation that mirrors explicit reasoning but is unfolded vertically across $\Theta(T)$ transformer layers, or to a shallower representation that exploits the parallel computation inherent to the transformer. Indeed, we know that such shallow \textit{shortcut} solutions, utilizing only $\Theta(\log T)$ transformer layers of reasonable width, exist for all semiautomata~\citep{Liu+23}. We ask: which type of representation is favored by SGD-type internalization? Can we internalize with shallow transformers, or does successful internalization of $T$ steps demand $\Theta(T)$ transformer layers to begin with?


\subsection{Internalizing chain of thought over model width}\label{ssec:method-specific-factors}

\begin{figure}
    \centering
    \includegraphics[scale=0.445]{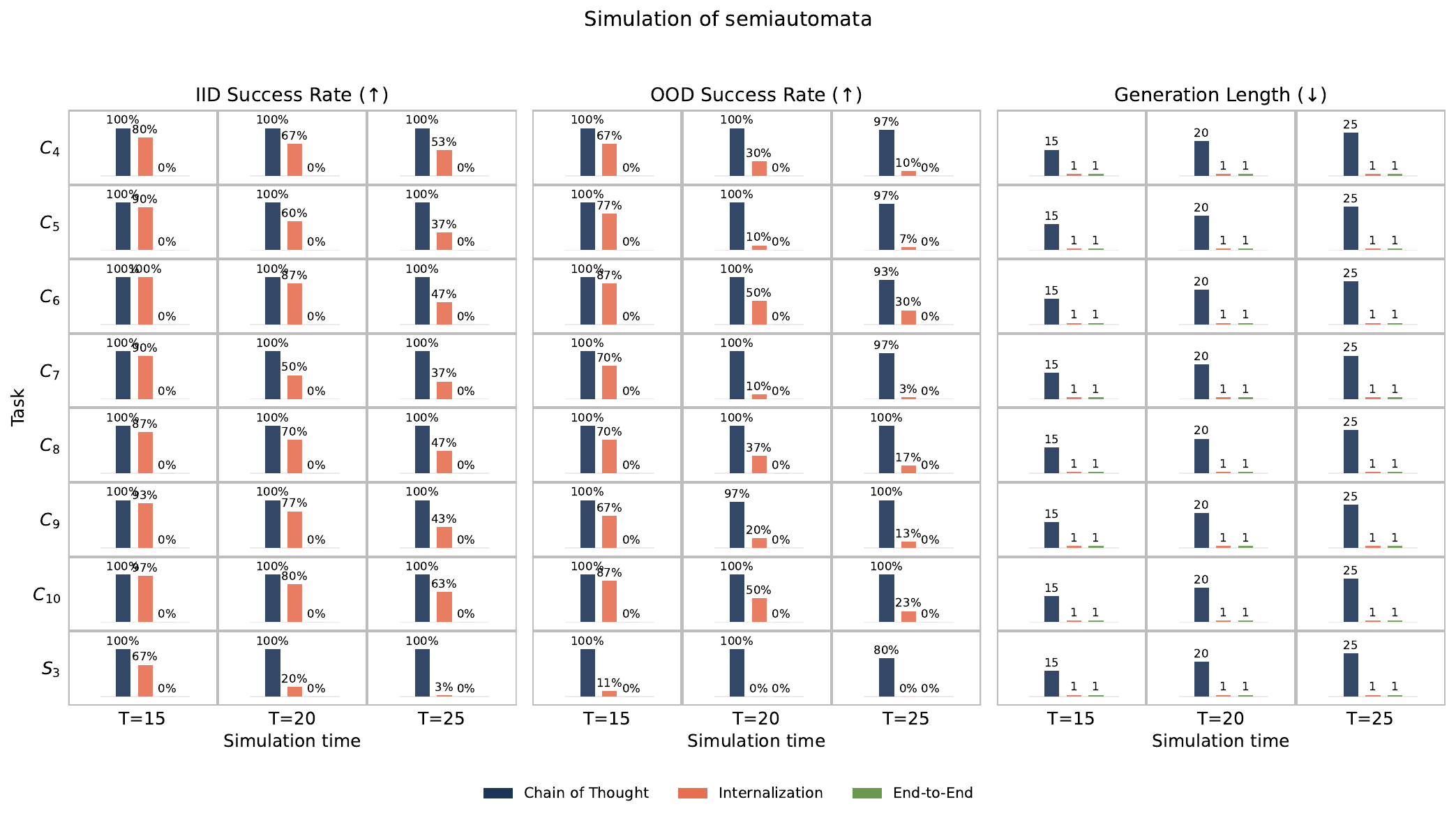}
    \caption{\textbf{Comparison between chain-of-thought (CoT), end-to-end (E2E) and (left) internalization training for semiautomata simulation with transformers.} We report in-distribution (ID) and out-of-distribution (OOD) success rates that are defined as the percentage of runs that resulted in >95\% test accuracy (ID or OOD, respectively), as well as the length of the generated responses.}
    \label{fig:cot_e2e_internalization_comparison}
\end{figure}

We first present results of training relatively shallow transformers (depth $L=4$) compared to the simulation time $T \in \{15, 20, 25\}$ on a range of different semiautomata (\Cref{fig:cot_e2e_internalization_comparison}). We compare CoT and E2E training with internalization via left curriculum. Note that in each experiment the number of possible input sequences far exceeds the number of training sequences, so we can rule out learning via ``memorization''. We found that internalization runs were effectively binary in achieving solid test accuracy\footnote{In this section, test accuracy refers to final-state accuracy after autoregressive generation.} or being on par to random chance, and for this reason we opted to report success rates (fraction of runs with test accuracy > 95\%) instead of aggregated accuracy numbers. We ran each configuration for 30 seeds -- see \Cref{sec:app:experimental-details-semiautomata} for details. We list our observations:

\begin{itemize}
    \item \textbf{E2E training fails.} We observe that transformers are not able to learn to simulate semiautomata with E2E training. The end-to-end tasks exhibit long-range dependencies that appear difficult to learn even for relatively ``simple'' local operations, such as $\bmod n$ addition. 
    \item \textbf{CoT training succeeds.} On the other hand, the task becomes easy with chain-of-thought training. There, the model leverages the additional supervision and generalizes on novel inputs; both in-distribution and out-of-distribution. Indeed, we observe very small drops (if any) from ID to OOD success rates across tasks for CoT training. The downside, of course, is the high inference time, since the model has to generate all the intermediate states before generating the final one.
    \item \textbf{Internalization succeeds with shallow transformers.} Remarkably, we find that the internalization process with the left curriculum often produces networks that generalize perfectly in distribution, despite them not generating any intermediate tokens; this holds even for non-commutative semiautomata. In fact, the internalization process can succeed even with depth-1 transformers (see, for example, \Cref{fig:scatter_C2_C3_C5}), demonstrating that an ability to explicitly represent all the intermediate reasoning steps is not necessary for ``implicit'' reasoning in the model. However, we observe a tradeoff: the OOD success rates of the internalization runs are significantly lower than those of CoT training. We return to this point in \Cref{ssec:automata_OOD}. 
\end{itemize}
\paragraph{Width helps more than depth.}
We next study the role of depth in internalization.
\begin{figure}
    \centering
    \includegraphics[scale=0.48]{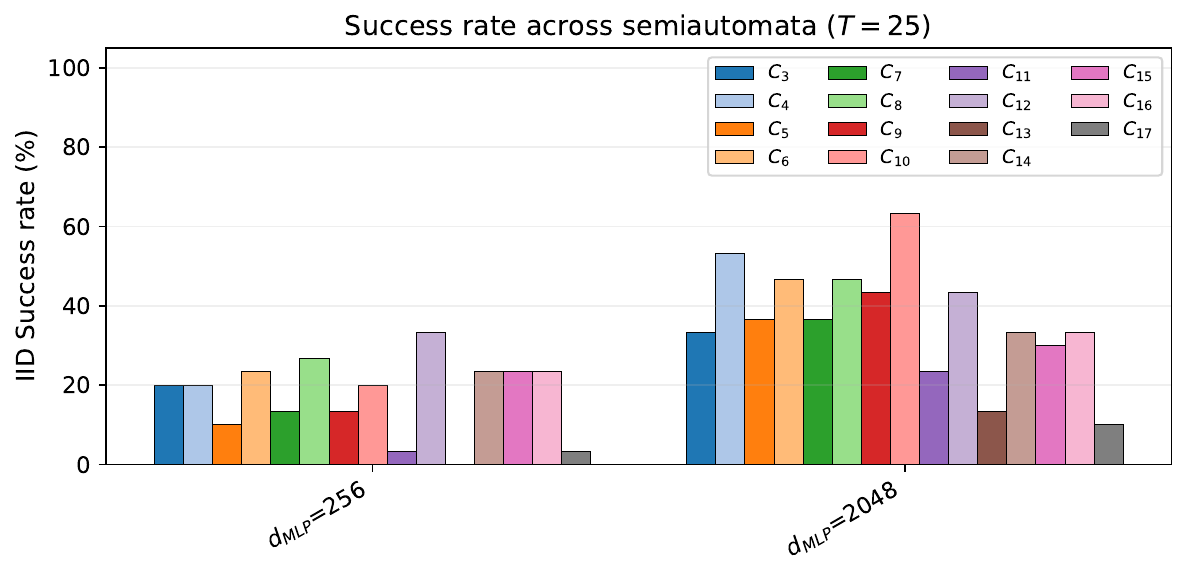}
    \hfill
    \raisebox{0.35cm}{
    \includegraphics[scale=0.38]{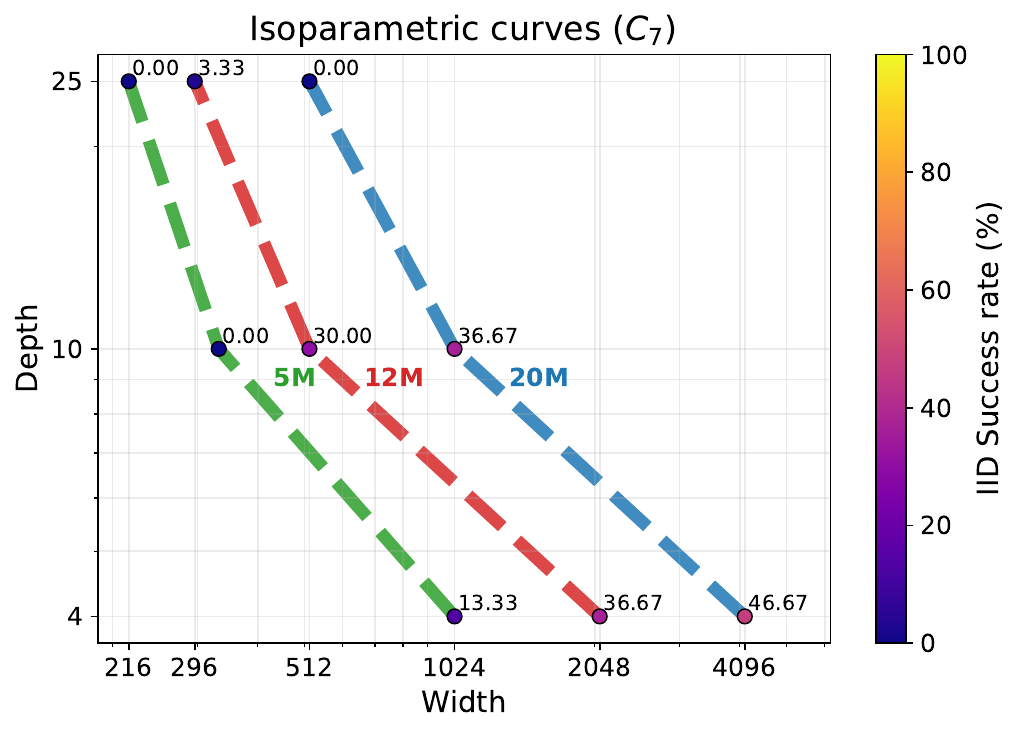}}
    \caption{\textit{Left:} \textbf{The probability of internalization depends on the order of the cyclic semiautomaton.} We show IID success rates across various counter semiautomata for two different width configurations (depth is fixed to $L=4$). We observe that increasing the MLP width of the transformer improves the odds of successful internalization, but to a lesser extent for $C_n$ with $n$ being a prime number. \textit{Right:} \textbf{The probability of internalization increases with width rather than depth.} We show in-distribution success rates across various transformer configurations. Each line corresponds to an isoparametric curve (scales: 5M, 12M, and 20M parameters).}
    \label{fig:isoparams_and_cprime}
\end{figure}
In \Cref{fig:isoparams_and_cprime} (right), we show internalization results for simulating the $C_7$ semiautomaton over $T=25$ steps where we vary the depth (number of layers) and width (maximum of embedding dimension and MLP width) of the transformer while keeping the number of total parameters (approximately) fixed. We observe that, across different parameter scales, increasing depth (4 $\to$ 10 $\to$ 25) is less beneficial for internalization than increasing width. In fact, deeper models ($L=25$) do not appear well suited for internalization. See also Figure~\ref{fig:isoparams_more} for similar plots on $C_{11}$ and $C_{13}$. These findings support the hypothesis that the intermediate states/reasoning steps are internalized ``horizontally'' over the width of the model rather than being explicitly represented ``vertically'' over its depth. 

\subsection{The flip side of internalization: worse OOD performance}\label{ssec:automata_OOD}

Despite the previous positive internalization results with shallow transformers, we observe what appears to be the flip side of internalization: the decline of OOD performance.
Indeed, \Cref{fig:cot_e2e_internalization_comparison} shows that across tasks and simulation times, internalized runs are far less probable to have good OOD test accuracy in comparison to transformers produced by CoT training.

In Figure~\ref{fig:ad_figure} (right), we show two internalization runs for $C_3$ and $C_4$, respectively, that demonstrate how the OOD performance progressively gets worse as we remove more tokens from the chain-of-thought supervision during internalization.

Furthermore, in \Cref{fig:scatter_C2_C3_C5}, we present a scatter plot of internalization runs for various random seeds, showing the final test accuracy (in-distribution and out-of-distribution) of the transformer on $C_2, C_3,$ and $C_5$. For all three tasks, we observe that succeeding in internalizing in distribution does not necessarily imply good out-of-distribution performance. For $C_2$, we observe that increasing depth has a positive effect on OOD performance, while for $C_3$ and $C_5$ this OOD weakness persists even for deeper networks. These results imply that in our settings transformers fail to exactly internalize in the strict sense of~\Cref{def:internalization}.
\begin{wrapfigure}{r}{0.28\textwidth}
    \centering
    \includegraphics[scale=0.31]{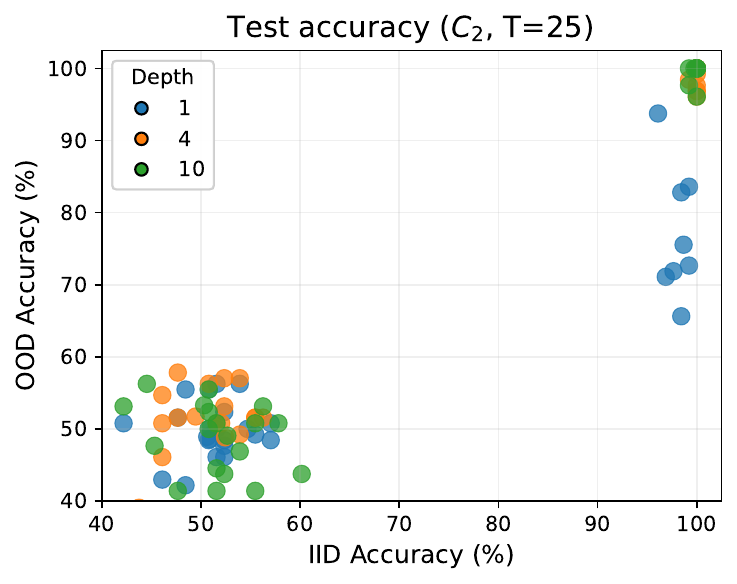}
    \includegraphics[scale=0.31]{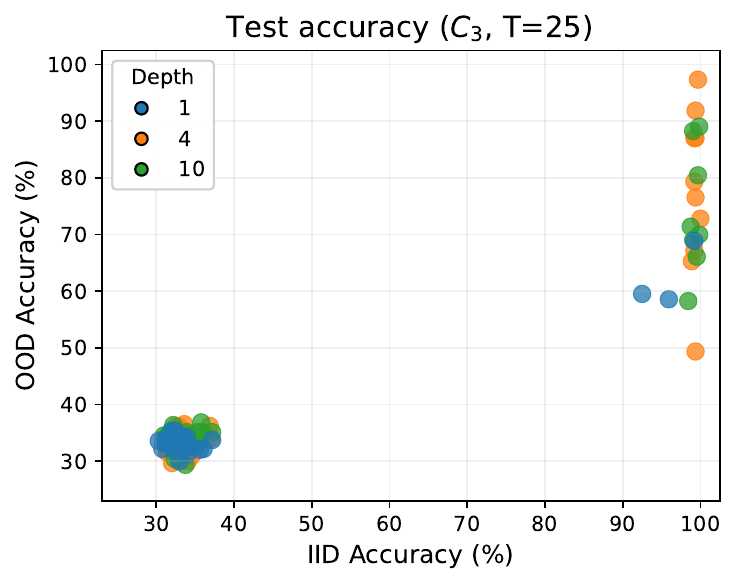}
    \includegraphics[scale=0.31]{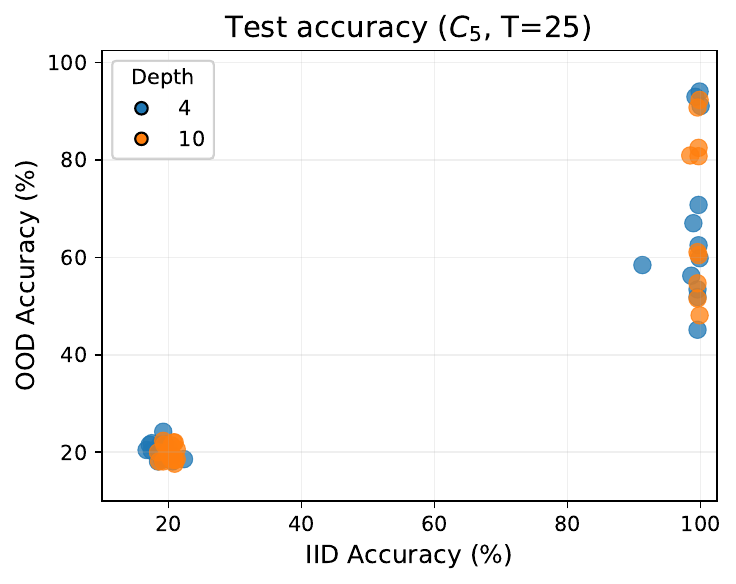}
    \caption{Scatter plots of test accuracy at the end of internalization for several seeds across transformers with various depths.}
    \vspace{-15mm}
    \label{fig:scatter_C2_C3_C5}
\end{wrapfigure}

We hypothesize that the decline in OOD performance during internalization occurs because the shallow transformer is forced to implement the $\bmod n$ operation over more than two input tokens (as initially done in the CoT representation), and this may be difficult for a single MLP to robustly learn from uniformly sampled sequences.

\subsection{Prime \texorpdfstring{$C_n$}{} are harder to internalize}\label{subsec:prime-hard-to-internalize}
Semiautomata such as $C_2, C_7, C_{18}$ all share the same state space structure, but they differ in the transition function, i.e., the local operation rule, which depends on the order of the semiautomaton. The exact value of the order does not seem to affect whether a transformer can learn under CoT or E2E training. But, could the ``complexity'' of the local operation matter for successful internalization?

In \Cref{fig:isoparams_and_cprime} (left), we plot success rates on all counter semiautomata from $C_3$ all the way to $C_{17}$ for two different width configurations. We observe a noticeable drop in the probability of internalization for semiautomata $C_n$, where $n$ is a prime number. For example, semiautomata $C_{13}$ and $C_{17}$ appear to be more difficult to internalize than $C_{14}, C_{15}$ and $C_{16}$, even though they have a comparable number of possible inputs and states. This difference persists even for wider networks (MLP width: 256 $\to$ 2048). Why might this be the case?

In order to internalize a semiautomaton, the transformer needs at the very least to be able to represent the final end-to-end mapping from $T$ inputs to final state. In the regime we are studying, where the number of layers is much smaller than the horizon of the simulation, prior work has established that generic semiautomata with $|Q|$ states can be represented by a transformer of $O\left(|Q|^2\right)$ MLP width~\citep{Liu+23}. While this suggests that a transformer might need larger width to internalize counter semiautomata of larger order, it unfortunately tells us very little on how to differentiate between semiautomata whose orders differ only sligthly, e.g., $C_{16}$ vs $C_{17}$.

We show next that a transformer can represent some counter semiautomata $C_n$ much more compactly than what was known before, using an MLP width that depends on the sizes of the \textit{prime-power factors} (prime factors with their multiplicity taken into account) of $n$. In particular, this implies that semiautomata $C_n$, where $n$ has small prime-power factors, can be represented very efficiently by a small transformer.

\begin{theorem}[Representation of $C_n, \, n = \prod_{j = 1}^k p_j^{v_j}$, with $O(\log T)$ depth and $O(\sum_{j = 1}^k p_j^{v_j})$ width; informal]\label{thm:log_depth_log_width_prime_informal}
    Consider a counter semiautomaton $C_n, \, n = \prod_{j = 1}^k p_j^{v_j}$. Then, for any $T$, there exists a transformer that simulates $C_n$ over $T$ steps using $O\left(\log T \right)$ layers, embedding dimension $O\left(k \right)$ and MLP width $O\left(\sum_{j = 1}^k p_j^{v_j}\right)$.
\end{theorem}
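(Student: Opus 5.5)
The plan has two parts. First, use the Chinese Remainder Theorem to split the counter into independent counters of prime-power order. Then simulate each such counter with a prefix-sum (parallel-scan) construction in $O(\log T)$ layers.

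\textbf{CRT decomposition.} Since $\mathbb{Z}_n \cong \prod_{j=1}^k \mathbb{Z}_{p_j^{v_j}}$, the state $q_t = q_0 + \sum_{s \le t}\sigma_s \bmod n$ is determined by the $k$ residues $q_t \bmod p_j^{v_j}$. So it suffices to track $k$ independent counters $C_{p_j^{v_j}}$ in parallel. Each counter will use $O(1)$ embedding coordinates, which gives embedding dimension $O(k)$ in total. Each counter will also use its own block of MLP neurons of size $O(p_j^{v_j})$, and placing these blocks side by side gives MLP width $O(\sum_j p_j^{v_j})$.

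\textbf{Representing each residue.} For the counter with $m = p_j^{v_j}$, I would encode a residue $r \in \mathbb{Z}_m$ by a constant number of scalars, for example $r/m$, or equivalently the pair $(\cos 2\pi r/m, \sin 2\pi r/m)$. I would not use a one-hot vector of size $m$, which would cost width $m$ per coordinate in the embedding. The input embedding would map the token $\sigma_t$ to $\sigma_t \bmod m$ in this encoding. A final readout MLP would recombine the $k$ residues into $q_T$ through a lookup over residue tuples. Alternatively, the readout can be left per-component, since the output head can be linear over a product encoding; if needed it could be folded into the last layer.

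\textbf{Parallel prefix sums.} I would use a standard binary-tree (Hillis--Steele style) scan. At level $\ell = 1, \dots, \lceil \log_2 T\rceil$, position $t$ attends to position $t - 2^{\ell-1}$ and adds that position's partial sum to its own. With positional encodings, hard or sharply peaked attention can select that offset exactly, as in the shortcut constructions of~\citet{Liu+23}. The attention produces the sum of two residues $a/m + b/m \in [0, 2)$ as a scalar. The MLP then reduces it mod $m$, which only needs to subtract $1$ when the sum is at least $1$, that is a single threshold. Each layer therefore needs only $O(1)$ neurons per component. The MLP width of $O(m)$ is reserved for a robust version in which the sum is treated as a lookup over $m$ possible values using $O(m)$ ReLU bumps or indicator neurons. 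This is the natural construction if one wants the sum reconstructed exactly from a bounded-precision encoding, and it matches the stated width bound.

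\textbf{Main obstacle.} The hardest step is making the modular reduction exact at finite precision while keeping per-layer width $O(m)$ and embedding $O(1)$ per component. My first attempt would use the $O(m)$-piece piecewise-linear map on $[0, 2)$ sending $x \mapsto x - \mathbb{1}[x \ge 1]$. This map can be implemented with ReLUs that are exact on the grid $\{0, 1/m, \dots, (2m-1)/m\}$, with margins of $1/(2m)$ that tolerate small attention errors. I would then check that errors do not accumulate across the $O(\log T)$ layers, because each layer re-snaps its value to the grid. The last thing to verify is that attention can exactly retrieve the offset position $t - 2^{\ell-1}$ with $O(\log T)$-bit positional encodings, and that positions $t < 2^{\ell-1}$ are handled with a zero contribution, for example by attending to a BOS token.
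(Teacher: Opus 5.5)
Your plan is essentially the paper's proof: a CRT embedding into the $k$ prime-power residues, and a doubling scan as in \citet{Liu+23} in which layer $\ell$ attends back $2^{\ell-1}$ positions via a rotated sinusoidal positional encoding. It also matches the paper's per-residue ReLU modular adder of width $O(p_j^{v_j})$, which is flat near each integer so that softmax errors are re-snapped rather than accumulated, and its decoding by the inverse CRT map in the un-embedding. The differences are cosmetic (the paper uses integer rather than $r/m$ residues, and prepends $T$ zero tokens instead of using a BOS for out-of-range offsets), but drop your ``lookup over residue tuples'' readout, since it would cost width on the order of $n$ rather than $\sum_j p_j^{v_j}$.
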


See Theorem~\ref{thm:log_depth_log_width_construction} in the appendix for a formal statement and proof. The proof exploits the fact that addition of two numbers $\bmod n$ can be performed ``in parallel'' in the basis of each prime-power factor of $n$ due to the Chinese Remainder Theorem. \Cref{thm:log_depth_log_width_construction} implies that if $n$ is a smooth number, i.e, $n = \prod_{i = 1}^k p_i, \, p_i = O(1)$, then $C_n$ can be simulated using an MLP of width $\sum_{j = 1}^k p_j = O(\log n)$. For this class of counter semiautomata, this construction implies an exponential improvement on the size of the transformer, compared to the (general) result of~\citet{Liu+23}.

Based on this result, we conjecture that semiautomata such as $C_{10}$ and $C_{12}$ are easier to learn through internalization than $C_{11}$ because the size of their final representation is smaller and hence they require fewer samples to learn. Speculating even further, we could hypothesize that the more ``parallelizable'' the local operation of a semiautomaton is, the easier it might be to internalize. We defer a deeper investigation of this fascinating question to future work.

We discuss additional optimization and architectural choices and how they they affect the internalization of semiautomata in Appendix~\ref{sec:app:experimental-details-semiautomata}.

\section{Provable Internalization of Parities in 1-layer Transformer with Linear Attention}
\label{sec:parity_one_layer_tf}
In this section, we present our main theoretical result: a simplified one-layer transformer can provably learn a hidden parity with explicit CoT supervision and then internalize the computation, outputting the parity in a single generation step (\Cref{thm:provable-parity-main}).

We study the problem of learning a hidden parity over an unknown support
\( 
S_\star=\{i_1,\ldots,i_k\}\subseteq[d],
\)
where the input
\( x=(x_1,\ldots,x_d)\sim \textnormal{Unif}(\{0,1\}^d)
\), and the final target is
\( 
y=\bigoplus_{i\in S_\star}x_i=x_{i_1}\oplus\cdots\oplus x_{i_k}.
\)
This class is hard to learn in the statistical query model~\citep{Kea98}, which is often viewed as capturing limitations of gradient-based learning, and is also believed to be hard to learn in the noisy PAC setting. \citet{KiSu24} showed that, with CoT supervision, transformers can provably learn parity efficiently, but the learned model still explicitly generates the CoT tokens autoregressively at inference time. Here, we show that this computation can instead be internalized: after a CoT-shortening curriculum (from left), the model learns to directly output the parity. 
This gives a provable example where learning through internalization enables learning a task that is computationally hard for ``direct'' learning.

\paragraph{CoT and Curriculum.}
In addition to the final label, we use a CoT sequence given by the prefix parities
\( 
    z_t=x_{i_1}\oplus\cdots\oplus x_{i_t},
\qquad t\in[k],
\) so that \(z_k=y\). The curriculum begins with the full sequence $a^{[1]}=(x_1,\dots,x_d,z_1,\dots,z_k)$ and progressively removes tokens from left such that in Phase $t$, the input sequence is
\begin{equation}\label{eq:a1-main}
     a^{[t]}= (x_1,\ldots,x_d,z_t,z_{t+1},\ldots,z_k).
\end{equation}
After all CoT tokens are removed, the model must directly output the parity given input \((x_1,\ldots,x_d)\).


\paragraph{Architecture.}
We analyze a simplified one-layer linear-attention transformer, similar in spirit to prior work of \citet{KiSu24}. The architecture uses a single attention head that collapses the key-query parameterization into a single matrix \(W_{QK}\), and fixes the value matrix and MLP. For a content sequence
\( 
a=(a_1,\ldots,a_D)\in\mathbb{R}^D,
\)
we represent each token in dimension \(D+1\), with one scalar content coordinate and \(D\) one-hot positional coordinates:
\[
X(a)
=
\begin{pmatrix}
a_1 & 1 & 0 & \cdots & 0\\
a_2 & 0 & 1 & \cdots & 0\\
\vdots & \vdots & \vdots & \ddots & \vdots\\
a_D & 0 & 0 & \cdots & 1
\end{pmatrix}
\in\mathbb{R}^{D\times(D+1)}.
\]
The attention scores are bilinear (without softmax attention) that have been also popularized as a variant to softmax \cite[e.g.,][]{katharopoulos2020transformers,shen2021efficient}:
\[
S=X(a)W_{QK}X(a)^\top,
\]
where \(W_{QK}\in\mathbb{R}^{(D+1)\times(D+1)}\). We fix the value matrix to be the identity, \(W_V=I_{D+1}\), and apply the causal mask to obtain the attention output
\[
H(X(a))=(M\odot X(a)W_{QK}X(a)^\top)X(a),
\]
where \(M_{ij}=\mathbf{1}\{j\le i\}\). For any position $i$, the fixed MLP reads the first coordinate of the residual stream \(H_i(X(a))\in\mathbb{R}^{D+1}\), which corresponds to the content coordinate, and applies a nonlinearity \(\sigma:\mathbb{R}\to\mathbb{R}\):
\[
f_W(a)_i=\sigma\!\left((H_i(X(a)))_1\right).
\]
We choose \(\sigma\) to be a piecewise-linear parity activation, equal to \(0\) on even integers and \(1\) on odd integers, with linear interpolation between consecutive integers; see \Cref{eq:non-linearity}. Following the same simplification as in \citet{KiSu24}, we assume that \(W_{QK}\) ignores the scalar content coordinate and is parameterized only through the positional block:
\[
W_{QK}
=
\begin{pmatrix}
0 & 0\\
0 & W
\end{pmatrix},
\qquad
W\in\mathbb{R}^{D\times D}.
\]
See \Cref{subsec:architecture} for how this choice greatly simplifies the architecture: for any position \(i\in [D]\),
\begin{equation}\label{eq:output-at-i}
    f_W(a)_i
=\sigma(H_i(X(a))_1)=
\sigma\!\left(\sum_{j\le i}W_{ij}a_j\right).
\end{equation}
\paragraph{Training Algorithm.}
We now specify how the model is trained in plain language, with pseudocode deferred to \Cref{alg:internalize-parity-shortening}. In Phase \(1\), we use the full teacher-forced input sequence
\(
a^{[1]}=(x_1,\ldots,x_d,z_1,\ldots,z_k)
\)
from \Cref{eq:a1-main}
and train on an empirical i.i.d. batch for the following population objective:
\[
L_1(W)
:= \frac{1}{2}\mathbb{E}_{a}\left[\sum_{l=1}^{k}\left( f_W(a^{[1]})_{d+l-1}-a^{[1]}_{d+l}
\right)^2 \right]
=\frac{1}{2}\mathbb{E}_{a}\left[\sum_{l=1}^{k}\left( f_W(a^{[1]})_{d+l-1}-z_{l}
\right)^2 \right]\,.
\]
This amounts to CoT training on the output part of the sequence, as in supervised fine tuning (SFT). In subsequent phases \(t=2,\ldots,k\), we use the left-amputated sequence
\(
a^{[t]}=(x_1,\ldots,x_d,z_t,\ldots,z_k)
\)
from \Cref{eq:input-phase-t} and, after drawing a fresh batch, train on the empirical version of the following objective:
\[
L_t(W)
=\frac{1}{2}\mathbb{E}_{a}\left[
\left(
f_W(a^{[t]})_d-a^{[t]}_{d+1}
\right)^2 \right]
=\frac{1}{2}\mathbb{E}_{a}\left[
\left(
f_W(a^{[t]})_d-z_t
\right)^2 \right].
\]
Note that we only train on the first output position for stages $t=2, \ldots, k$. A fresh batch of size \(B\) is drawn in each phase, and we train on the same batch for \(T\) iterations within that phase, with step size \(\eta\). At the end of each phase, we round the entries of \(W\) to the nearest integer.\footnote{Such rounding was also used in \cite{KiSu24}, though for a different technical reason, and is also used empirically to reduce memory storage requirements \citep{wu2020integer,jacob2018quantization} and for efficient fine-tuning \citep{dettmers2022gpt3}. In our analysis, it seems to be needed only for a minor technical reason, and the entries before rounding are already close to integers. Experiments show full success without any rounding; cf. \Cref{fig:attention-parity}.}
Finally, we use an initialization scheme with mostly zero entries, except \(W_{j,j}^{(0)}=-1\) for \(j=d+l\) for all \(l\in [k-1]\). This encodes the bias that the model should attend to the token at position \(d+l\), since in Phase \(1\), we know that \(z_{l+1}=z_l\oplus x_{i_{l+1}}\), and attending to \(z_l\) is important.\footnote{We can also work with the all-zero initialization \(W^{(0)}=0\), in which case the final internalization guarantee in \Cref{eq:only-final} from \Cref{thm:provable-parity-main} still holds. This is because the final output at the \(d^\mathrm{th}\) position involves only the \(d^\mathrm{th}\) row \(W_{d,:}\), and by \Cref{eq:output-at-i}, it does not depend on the initialization of other rows. Initializing these entries with \(-1\) is only needed for the guarantee of outputting explicit CoT in \Cref{eq:full-CoT}.}

We formally state this training algorithm in~\Cref{alg:internalize-parity-shortening} and show the following performance guarantee.
\begin{theorem}\label{thm:provable-parity-main}
There exist universal constants \(C,c>0\) such that running \Cref{alg:internalize-parity-shortening} with
\[
\eta=\frac{c}{d},
\qquad
T=\lceil Cd\rceil,
\qquad
B\ge Cd^3\log\frac{kT}{\delta},
\]
satisfies the following with probability at least \(1-\delta\).
\begin{itemize}
    \item After Phase \(1\), the model \(W^{(1)}\) computes all explicit CoT tokens autoregressively: for every \(l\in[k]\) and for every \(x\in \{0,1\}^d\),
    \begin{equation}\label{eq:full-CoT}
        f_{W^{(1)}}\left(\mathtt{append}(x, f_{W^{(1)}}(x)[d+1:d+l-1])\right)
        =
        x_{i_1}\oplus \dots\oplus x_{i_l}\,.
    \end{equation}
    \item At the end of the \(k^\mathrm{th}\) phase of the curriculum, the final returned model \(f_{W^{(k)}}\) has internalized $P\left(f_{W^{(1)}}\right) = f^{\mathrm{e2e}-k}_{W^{(1)}}$ (in the sense of \Cref{def:internalization} and \Cref{eq:P(h)_autoregressive}) and, in particular, it holds for every \(x\in \{0,1\}^d\)
    \begin{equation}\label{eq:only-final}
        f_{W^{(k)}}(x)_d
        =
        \bigoplus_{i\in S_\star}x_i.
    \end{equation}
\end{itemize}
\end{theorem}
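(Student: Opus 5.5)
The plan is to exploit the row-decoupling in \Cref{eq:output-at-i}: the output at position $i$ depends only on row $W_{i,:}$ restricted to columns $j\le i$. Hence every loss $L_t$ splits into independent least-squares-type problems, one per trained output row. Phase $1$ trains rows $d,\dots,d+k-1$, and phases $t\ge 2$ train only row $d$. For each row I will find an integer target vector $w^\star$ that reproduces the required label exactly. I will then show that gradient descent on the empirical loss, started from the given initialization, ends within distance $<1/2$ of $w^\star$ in $\ell_\infty$, so that rounding recovers $w^\star$ exactly.

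\textbf{Phase 1.} Row $d+l-1$ must output $z_l=z_{l-1}\oplus x_{i_l}$, with the convention $z_0=0$ and row $d$ outputting $x_{i_1}$. The natural solution sets weight $1$ on column $i_l$, and for $l\ge2$ weight $1$ on column $d+l-1$, where $z_{l-1}$ sits. The sum then lies in $\{0,1,2\}$ and $\sigma$ returns the XOR. The initialization puts $-1$ on the diagonal, and since $\sigma$ is $2$-periodic on integers (it is $0$ on evens and $1$ on odds), the weight $-1$ computes the same XOR; this is why the initialization is harmless. The key computation is the population gradient near the initialization. On the region where the pre-activation sums lie in $\{-1,0,1\}$, $\sigma$ acts like $|\cdot|$, and the correlation $\mathbb{E}[(\sigma(\cdot)-z_l)\,a_j]$ is nonzero only for $j=i_l$, by orthogonality of parities under uniform inputs. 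I will compute this explicitly for the piecewise-linear $\sigma$ in the interpolation regime where the weights are fractional, which is where $\sigma$'s slopes $\pm1$ matter. The goal is to show that the dynamics reduce to a low-dimensional ODE. In it, the coordinate $i_l$ grows linearly at rate $\Theta(1)$, while the other coordinates receive zero population gradient, or a contraction $-\Theta(1/d)\cdot w_j$ coming from the second-moment term. With $\eta=c/d$ and $T=Cd$, the right coordinate should reach magnitude $\approx 1$ while the others stay near $0$.

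\textbf{Concentration.} The gradients are averages of bounded terms over $D\le d+k$ coordinates. Hoeffding plus a union bound over the coordinates, the $T$ iterations and the $k$ phases gives empirical-versus-population gradient error $O\big(\sqrt{\log(kT/\delta)/B}\big)$. Accumulated over $T$ steps with step $\eta$, this error is $O(1/d)\cdot T\cdot\sqrt{\cdot}$. The condition $B\ge Cd^3\log(kT/\delta)$ makes the total drift $\ll 1/2$ per coordinate, including the at most $d$ spurious coordinates that could otherwise add up in the pre-activation. Since a single fixed batch is reused within a phase, I will bound the gradient uniformly over the region where the iterates live, via a net or via the piecewise-linear structure, rather than per step.

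\textbf{Phases $t\ge2$ and conclusion.} After phase $t-1$, row $d$ is exactly $\mathbb{1}_{i_1}+\dots+\mathbb{1}_{i_{t-1}}$, and it outputs $z_{t-1}$ because $\sigma$ of an integer sum gives its parity. Phase $t$ asks for $z_t=z_{t-1}\oplus x_{i_t}$. The same one-coordinate analysis as in phase $1$ applies, now with a fixed integer offset inside $\sigma$. By $2$-periodicity, the gradient in coordinate $i_t$ again has a definite sign, which drives the weight to $\pm1$, while all other coordinates stay near their integer values; rounding then gives the next exact integer row. Induction yields a row $d$ with odd weights exactly on $S_\star$, which proves \Cref{eq:only-final}. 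Exactness on all $x$, combined with \Cref{eq:full-CoT} for $W^{(1)}$ and the fact that one forward pass is faster than $k$, gives internalization in the sense of \Cref{def:internalization}.

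The main obstacle is the gradient computation through the nonsmooth, nonmonotone $\sigma$ in the intermediate regime. Here the pre-activation $\sum_j W_{ij}a_j$ is fractional and varies with $x$, and I must show that the sign of the $i_t$-gradient persists and that no spurious coordinate gains signal. I would first try to keep all off-target weights within $O(1/d)$ of integers, so that the pre-activation stays within one linear piece of $\sigma$ in a neighbourhood of the integer offset. On that piece, the loss is an exact quadratic in the single moving coordinate, and the analysis becomes linear regression.
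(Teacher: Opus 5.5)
Your skeleton matches the paper's: row decoupling, phase-by-phase analysis of row $d$, reduction to one affine piece of $\sigma$ where the loss is least squares, convergence to within $1/2$ of an integer row, rounding, and induction. The gap is in the central step: your description of the gradient is wrong, and the invariant you propose does not imply what you need.

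\textbf{The gradient.} With $x\in\{0,1\}^d$ there is no orthogonality, since $\mathbb{E}[x_jx_{i_t}]=\frac14\neq 0$ for $j\neq i_t$. Write $w=w^{(t-1)}-u$ and $m=\langle w^{(t-1)},x\rangle\in\mathbb{Z}$. As long as $0\le\langle u,x\rangle\le 1$, you have $\sigma(m-\langle u,x\rangle)=\sigma(m)-\beta\langle u,x\rangle$ and $z_t=\sigma(m)-\beta x_{i_t}$, with the \emph{same} $\beta=\sigma'(m)\in\{\pm1\}$ (left derivative). So $\beta$ cancels, and the per-example loss is exactly $\frac12(\langle u,x\rangle-x_{i_t})^2$. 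This is full $d$-dimensional linear regression with gradient $\hat\Sigma(u-e_{i_t})$ and $\Sigma=\frac14(I+\mathbf{1}\mathbf{1}^\top)$.
\begin{itemize}
\item At $u=0$ every off-target coordinate receives population gradient $-\frac14$, half of the target's $-\frac12$, so spurious coordinates do gain signal.
\item The eigenvalue $(d+1)/4$ along $\mathbf{1}$ is what forces $\eta=O(1/d)$, and the eigenvalue $\frac14$ on $\mathbf{1}^\perp$ is what forces $T=\Theta(d)$.
\item Sign-independence across examples comes from this $\beta$-cancellation, not from $2$-periodicity.
\item The left-derivative convention at the kink sends every example into the same (left) piece. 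This is also why the row ends at $-\sum_\ell e_{i_\ell}$ rather than $+\sum_\ell e_{i_\ell}$.
\item The same reduction handles the Phase-$1$ rows $d+l-1$, with the initialized $-1$ on $z_{l-1}$ as the integer offset.
\end{itemize}

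\textbf{The affine-piece invariant.} Keeping off-target weights within $O(1/d)$ of integers does not keep the pre-activation in one linear piece. On that piece the loss is quadratic in all of $u$, not in a single coordinate. Solving the population recursion gives $u^{(s)}_{\mathrm{pop}}=a_se_{i_t}+(b_s-a_s)\mathbf{1}/d$, where $a_s=1-(1-\eta/4)^s\le b_s=1-(1-\eta(d+1)/4)^s\le 1$. The off-target weights are indeed $O(1/d)$, but they all share a sign. On typical inputs they contribute $(b_s-a_s)(|x|-x_{i_t})/d\approx(b_s-a_s)/2=\Theta(1)$ to the pre-activation, so per-coordinate smallness gives no control.

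What you need is the joint bound $0\le\langle u^{(s)},x^{(b)}\rangle\le 1$ for the actual iterates on every batch point. It follows from two ingredients:
\begin{itemize}
\item The closed form gives $\langle u^{(s)}_{\mathrm{pop}},x\rangle=a_sx_{i_t}+(b_s-a_s)|x|/d\in[0,b_s]$, with a constant margin on typical batch points.
\item Tracking $\Delta_s=u^{(s)}_{\mathrm{emp}}-u^{(s)}_{\mathrm{pop}}$ through $\Delta_{s+1}=(I-\eta\hat\Sigma)\Delta_s-\eta(\hat\Sigma-\Sigma)(u^{(s)}_{\mathrm{pop}}-e_{i_t})$ gives $\|\Delta_s\|_2\le T\eta\|\hat\Sigma-\Sigma\|_{\mathrm{op}}$. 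Operator-norm control of $\hat\Sigma-\Sigma$ with $B\gtrsim d^3\log(kT/\delta)$ then makes $|\langle\Delta_s,x^{(b)}\rangle|\le\sqrt d\,\|\Delta_s\|_2$ a small constant.
\end{itemize}
After that, $\|u^{(T)}-e_{i_t}\|_2\le(1-\eta\lambda_{\min}(\hat\Sigma))^T$, and rounding completes the phase.
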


The overall runtime of the algorithm is \(\textnormal{poly}(d,\log(1/\delta))\), and it finds a representation \(W^{(k)}\) that directly computes the target parity in one forward pass. For the proof of \Cref{thm:provable-parity-main}, see \Cref{sec:app:proof-provable-parity-main}.

\begin{remark}
We note that the guarantee in \Cref{thm:provable-parity-main} ensures exact internalization for all inputs \(x\in \{0,1\}^d\), whereas in \Cref{ssec:automata_OOD}, we report poor OOD performance on related tasks. While the theoretical result involves simplifications, we attribute the main source of this mismatch to the fixed choice of the MLP for local operations. In our empirical study, these local operations also need to be adjusted, which we believe is primarily responsible for the OOD performance degradation. We leave it to future work to theoretically prove internalization results that exhibit worse OOD performance.
\end{remark}

\section{Learning through Internalization and Positive Distribution Shift}\label{sec:pds-learning-internalization}

\begin{figure}
    \centering
    \includegraphics[width=0.7\linewidth]{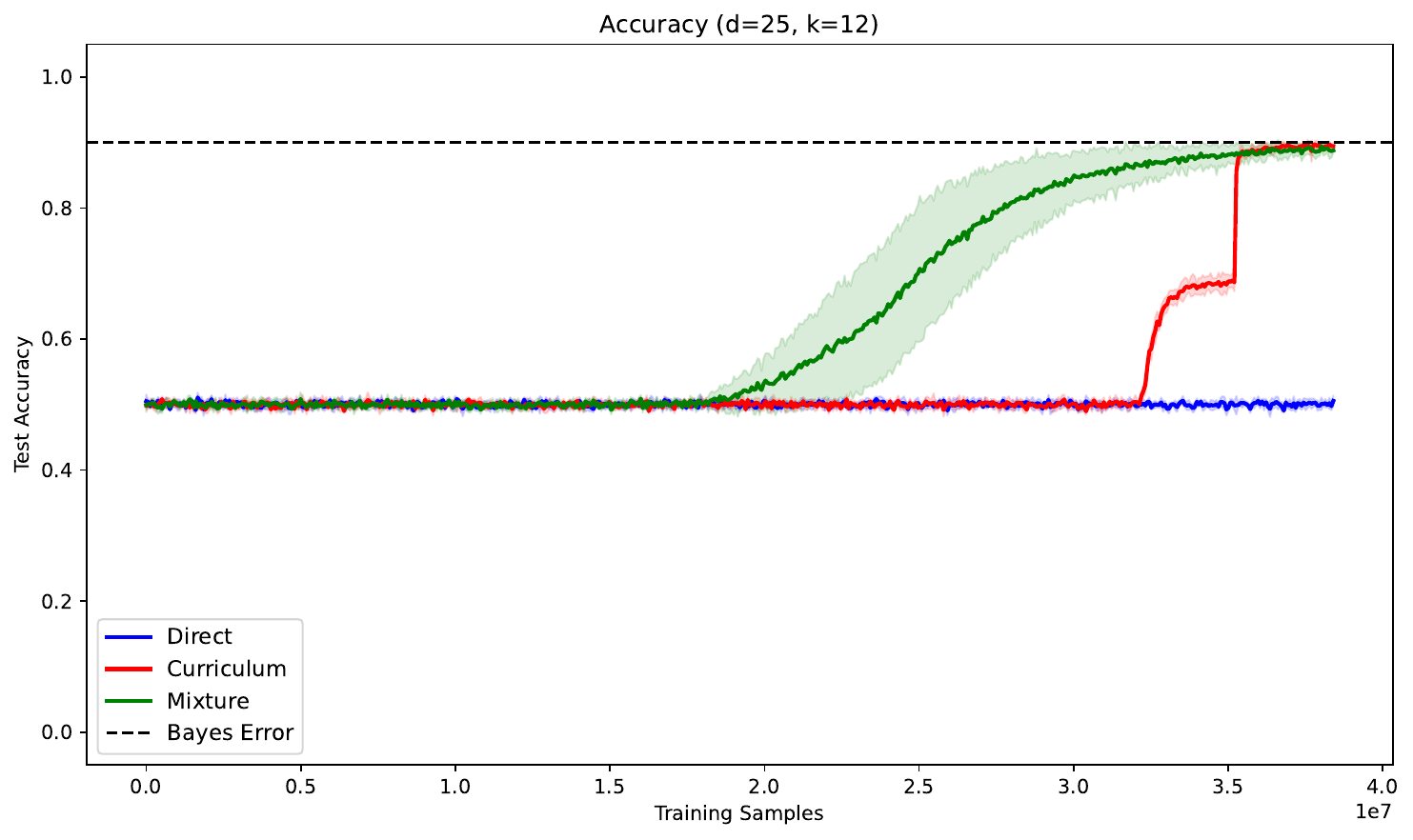}
    \includegraphics[width=0.485\linewidth]{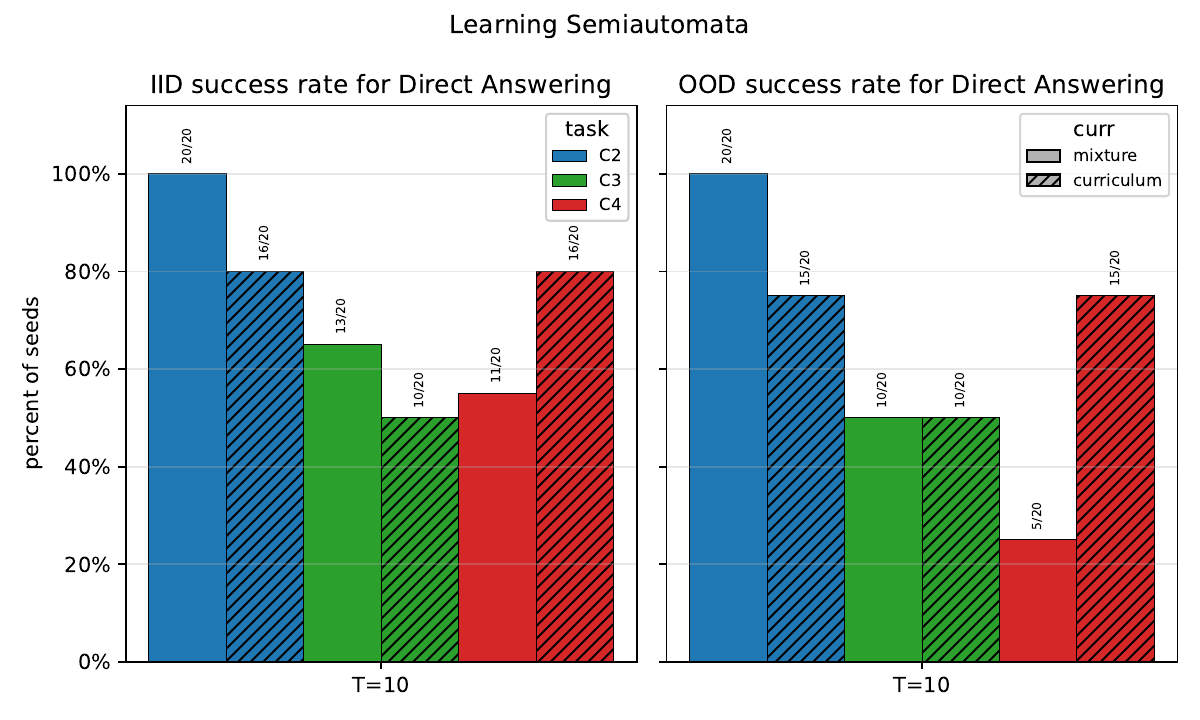}
    \includegraphics[width=0.485\linewidth]{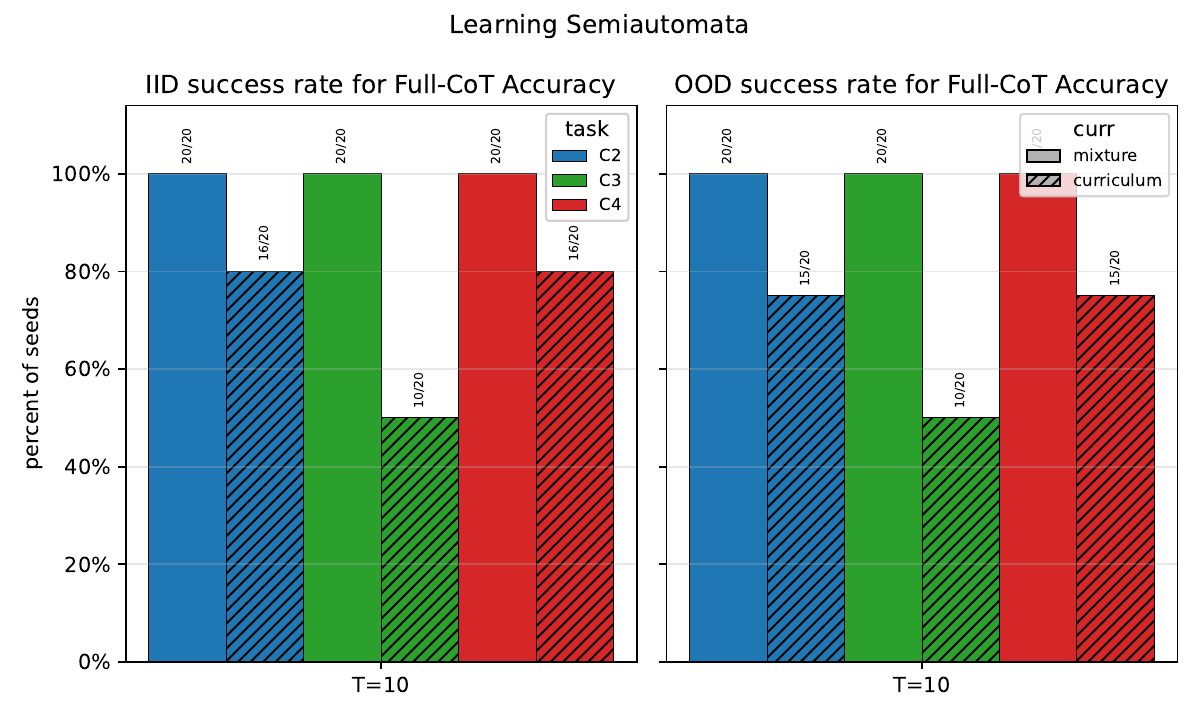}

    \caption{\textit{Top:} \textbf{Learning through Internalization in MLPs.}
    We plot the test accuracy for curriculum and PDS learning of parities described in
    \Cref{subsec:internalization-MLP} and \Cref{subsec:mixture}. That is, we plot the
    dependence of test accuracy on the number of training samples for three different
    training strategies for learning an unknown noisy parity over $k=\frac{d}{2}$ bits
    (label noise $\eta=0.1$) over the input space of $d$ bits expanded with $k-1$ hints
    $\tilde{x} = (x_1,\dots,x_d,z_1,\dots,z_{k-1})$, with $t$-th hint being the parity
    of first $t$ indices of the support, and we plot the unhinted accuracy, i.e. accuracy
    of predicting $y$ from $( x_1, \ldots, x_d, 0, \ldots, 0 )$. We consider
    (i) direct training (in blue) where all the hints are zeroed out,
    (ii) curriculum training (in red) where in stage $l$ we have $l-1$ hints zeroed out
    from the left, and (iii) mixture (in green) over the inputs from the $k-1$ stages of
    the curriculum, where we draw a stage uniformly at random. We see that direct training
    (blue) fails to learn, while both the curriculum (red) and the mixture (green) succeed
    in learning. 
    \textit{Bottom:} \textbf{Learning $C_n$ from a single mixture distribution over various CoT lengths with a hint for the length of answer.}
    We train a transformer on
    $\mathcal D_{\mathrm{mix}}=\frac{1}{T+1} \sum_{l=0}^{T} \mathcal D_{l}$,
    where $\mathcal D_l$ is the distribution over examples containing CoT of length $l$,
    and each example includes a hint for the length of the answer. We plot the percentage
    of seeds that achieve $>0.95$ accuracy for (i) on the left, direct answering - the model
    is given the hint for answer length equal to $1$, and (ii) on the right, the \emph{same}
    model is allowed to generate full CoT. We see that when training with mixture, we can
    learn to both directly answer and to generate the full CoT with a single model.}
    \label{fig:mixture}
\end{figure}


Finally, in this section, we first introduce and study a novel internalization scheme that enables learning hard functions with feed forward neural networks that is inspired by internalization of chain-of-thought. Here, the slower system does not involve multiple calls to the same neural network, as in the autoregressive case, nor does it search for better outputs according to some fixed rule, but rather maintains an additional set of weights that operate on \textit{hints} provided together with the input. The role of the internalization process is then to transfer the computation onto the weights that operate on the original input space, by relying less and less on hints. We describe the setup for the case of a hidden noisy parity target function, but note that similar setups can be defined for any computable function, with the hints consisting of the gates of a boolean circuit that computes the function.

In the rest of the section, we use this example to explain the connection between learning through internalization and the Positive Distribution Shift framework introduced in \cite{Med+26}. Motivated by this connection, we show that learning through a curriculum-based internalization procedure can often be replaced by a more straigthforward mixture learning procedure. We validate this in both the feed forward and the autoregressive setting.

\subsection{Learning through Internalization with MLPs}\label{subsec:internalization-MLP}

\sloppy
Let $k$ denote the size of the unknown support of the parity function, as in~\Cref{sec:parity_one_layer_tf}. In this section, we consider added label noise in the target (we flip the final label with probability $\eta$).

First, we augment the $d$-bit input $x=(x_1,\dots, x_d)$, which is uniformly distributed, with $k-1$ hints $z_i$ of intermediate computations to obtain $\tilde{x} = (x_1,\dots,x_d,z_1,\dots,z_{k-1})$. The $t$-th hint, $t\in [k-1]$, is the partial parity of the first $t$ indices of the support, i.e., $z_t = \oplus_{j=1}^{t} x_{i_j}$. The augmented inputs are similar to the CoT sequences of the previous section. During the first stage of training, we learn the mapping $\tilde{x} =\left( x_1, \ldots, x_d, z_1, \ldots, z_{k-1} \right) \mapsto y$ by training a ReLU neural network of depth $L$: $
g(\tilde{x}; \left\{W_L, b_L, \ldots, W_1, b_1\right\}) = W_L \,\sigma\!\left(W_{L-1}\,\sigma\!\left(\cdots \sigma\!\left(W_1 \tilde{x} + b_1\right) \cdots \right) + b_{L-1}\right) + b_L$. Then, we take only the subnetwork of g that operates on the first $d$ bits
$h_\theta: \left\{ 0, 1 \right\}^d \mapsto \mathbb{R}$, $h_\theta(x) = W_L \,\sigma\!\left(W_{L-1}\,\sigma\!\left(\cdots \sigma\!\left(W_1[:, :\!d] x + b_1\right) \cdots \right) + b_{L-1}\right) + b_L$, as our predictor.

Similar to the autoregressive case and the curriculum of~\citet{DCS24}, we now devise a curriculum that teaches this subnetwork to internalize the parity computation as performed initially by $g$. In particular, at the $l$'th stage, $l = 2, \dots, k-1$, we continue training $g$ on mappings of the form $\left( x_1, \ldots, x_d, z_1, \ldots, z_{k-1-l+1}, \underbrace{0, \ldots, 0}_{l-1} \right) \mapsto y.$
That is, we progressively zero-out some of the hints provided to the model, while keeping the target fixed. At the final stage of the curriculum, we train on data with all hints zeroed out, i.e. $\left( x_1, \ldots, x_d, 0, \ldots, 0 \right) \mapsto y$. 

\sloppy
We show that this strategy indeed succeeds in learning parity target functions. In~\Cref{fig:mixture}, we show experiments for $d=25, k=12, \eta = 0.1$ and a 2-layer ReLU network. We observe that direct training on the target parity fails (as expected from any computationally efficient algorithm), while learning with the slower neural network and then internalizing succeeds in learning the target. Note that this scenario falls within~\Cref{def:internalization} if we define:  (a) the ``slow'' system $P(h_{\theta_1})(x) : = g\left(\left(x, x_{i_1}, \ldots, \oplus_{j = 1}^{k-1} x_{i_j}\right); \left\{W_{L}^{[1]}, b_L^{[1]}, \ldots, W_1^{[1]}, b_1^{[1]} \right\}\right)$, where $W_{L}^{[1]}, b_L^{[1]}, \ldots, W_1^{[1]}, b_1^{[1]}$ are the weights at the end of the first training stage, and (b) the ``fast'' system $h_{\theta_{k-1}}(x) : = W_L^{[k-1]} \,\sigma\!\left(W_{L-1}^{[k-1]}\,\sigma\!\left(\cdots \sigma\!\left(W_1^{[k-1]}[:, :\!d] x + b_1^{[k-1]}\right) \cdots \right) + b_{L-1}^{[k-1]}\right) + b_L^{[k-1]}$, where $W_{L}^{[k-1]}, b_L^{[k-1]}, \ldots, W_1^{[k-1]}, b_1^{[k-1]}$  are the weights at the end of the last stage of the curriculum.

\subsection{From Internalization Curricula to Positive Distribution Shift: Learning with a Single Mixture Distribution}\label{subsec:mixture}

To connect this to the positive distribution shift view introduced in \Cref{sec:internalization-learning-perspective-intro}, we focus on the learning aspects of the internalization procedure with MLPs introduced above. In this example, our goal is to learn the parity distribution $(\mathcal D, f^*)=( \text{Unif}(\{0,1\}^d), x_{i_1} \oplus \dots \oplus x_{iu_k} \oplus \xi)$ (where $\xi$ represents the label noise) tractably, i.e. find a predictor $h_{\theta}$ which achieves low error $L_{(\mathcal D,f^*)}(h_{\theta})$. Noisy parity is computationally hard to learn - it is known to be hard in the SQ framework \citep{Kea98, BlumFurstJacksonKearnsMansourRudich94}, there is no known tractable PAC learning algorithm for it \citep{BlumKalaiWasserman03,GoelKanadeKlivansThaler17}, and even fixed parities are difficult to learn with standard neural-networks trained with standard gradient descent \citep{Shoshani25,Abb+25}. However, as shown above, the procedure of learning  $(\mathcal D,f^*)$ from the curriculum $(\mathcal D_1,f^*),\dots, (\mathcal D_{k-1},f^*)$, where $\mathcal D_i$ is given by $\tilde x \sim \left( x_1, \ldots, x_d, z_1, \ldots, z_{k-1-l+1}, \underbrace{0, \ldots, 0}_{l-1} \right)$, by the predictor $h_{\theta}$ is (experimentally) tractable. This learning procedure with the curriculum $(\mathcal D, f^*) \to \{(D_i,f^*)\}_{i=1}^{k-1} \to h_{\theta}$ is similar to the target dependent positive distribution shift, \textit{f-PDS}, introduced in \citet{Med+26}, where we modify the training distribution depending on the target. However, that type of positive distribution shift only involves a single training distribution (and only a covariate shift on the original input space\footnote{So in that regard, the curriculum considered here does not fall under the strict definition of f-PDS; see \Cref{app:pds-view} for details.}). In addition, \citet{Med+26} broadly conjecture that a curriculum can be replaced with learning from a single distribution. Motivated by this, in top \Cref{fig:mixture}, we show (in green) that we can replace the curriculum for learning parities with MLPs from \Cref{subsec:internalization-MLP} with a mixture distribution $\mathcal D_{\mathrm{mix}}=\frac{1}{k}\sum_{l=1}^{k-1} \mathcal D_l$, i.e. we can efficiently learn $(\mathcal D, f^*)$ using the subnetwork of the network that learns`` $(\mathcal D_{\mathrm{mix}},f^*)$. Therefore, $(\mathcal D_{\mathrm{mix}},f^*)$ represents a (generalized\footnote{Since the input space changes, it is not only a covariate shift; see \Cref{app:pds-view} for details.}) f-PDS for learning $(\mathcal D,f^*)$, which can be constructed intuitively and tractably from $f^*$ and $\mathcal D$: to learn $(\mathcal D,f ^*)$, train $g$ on $(\mathcal D_{\mathrm{mix}},f^*)$ then take the subnetwork $h_{\theta}$ that operates on the first $d$ bits as our subnetwork.

Inspired by the PDS view, we ask whether the autoregressive learning of semiautomata simulation from \Cref{sec:automata} using a curriculum can also be replaced by a single distribution. Indeed, in \Cref{fig:mixture}, we show that we can autoregressively learn semiautomata by training on a single mixture distribution of curriculum stages, with added hints for answer length (see \Cref{app:mixture} for more details). This presents an alternative to learning through internalization for sidestepping the hardness of learning.


\paragraph{Benefits of Learning from a Mixture: Direct Answering and OOD}

There are three main benefits to learning semiautomata with an autoregressive model from a mixture, as in \Cref{fig:mixture}, over training with a curriculum as in \Cref{sec:automata}. First, learning from a mixture removes the need to choose the curriculum schedule details and parameters, which can impact success of learning as we discussed in \Cref{sec:automata}. Conversely, learning from a single mixture distribution is simpler. Second, a mixture distribution with CoTs of varying lengths more closely models the structure of natural data where some examples contain long explanations and other contain short explanations. Our results indicate that this heterogeneity of length of answers can itself act as a positive distribution shift for learning hard tasks. Third, with the answer hint, a single model trained on the mixture supports multiple inference budgets. This lets us force the model to answer directly or let it generate a longer CoT, and as \Cref{fig:mixture} shows, the mixture trained model can recover better OOD performance in the full-CoT regime, while the curriculum trained model does not.

\section{Discussion}\label{sec:discussion}

We saw how internalization can be a viable strategy not only for reducing inference costs while maintaining strong performance, but also for enabling the learning of difficult tasks that otherwise appear hard to learn through a more “direct,” end-to-end approach. We investigated such internalization processes in autoregressive transformers and developed a better understanding of their strengths and shortcomings. Already for counter automata, we observed that, while learning without CoT is hard and learning with explicit CoT supervision is easy, the success of internalization varies across tasks and training methods (\Cref{fig:cot_e2e_internalization_comparison}). This suggests that understanding internalization requires accounting for new factors that are captured neither by the intrinsic difficulty of directly learning the end-to-end task nor by the ease of learning with step-by-step supervision.

While the focus of this paper has primarily been on internalizing autoregressive chain-of-thought reasoning, there have been many other instances in which neural networks have benefited from similar processes, as already discussed in the introduction. One such example is AlphaGo Zero~\citep{Sil+17}, a neural-network-based system equipped with a tree-search procedure that could suggest next moves and evaluate board positions. At each round of training, the neural network was optimized to match the next-move probabilities initially proposed by the computationally expensive tree search, as well as position values derived from game outcomes. In doing so, it internalized the results of self-play into its weights, enabling even stronger search in subsequent rounds (see also Example~\ref{ex:self-play}, where we formalize this). Arguably, \citet{Sam59}’s AI checkers player also learned through a similar internalization process.

In general, as AI inference costs continue to increase, the internalization of expensive scaffolded procedures will likely become increasingly common. While we have explained how this approach can be beneficial both for reducing computation time and for facilitating learning, some of our findings suggest that internalization may at times have negative and unexpected effects on the broader capabilities of a system.

\paragraph{Acknowledgments.} NT would like to acknowledge useful discussions with Eran Malach and Elisabetta Cornacchia. JK thanks the Simons Foundation for support through the Collaborative Grant ``The Physics of Learning and Neural Computation''. NT acknowledges support from a grant by the Coefficient Giving. NT and JK acknowledge support by the NSF through NRT Award 1922658. 

\bibliographystyle{plainnat}  
\bibliography{refs}

\newpage

\appendix




\section{Related Work}
\paragraph{Internalized and compressed reasoning.}
Recent works have explored reducing the cost of explicit CoT by replacing, compressing, or skipping reasoning tokens using latent reasoning states, compressed CoT, or step-skipping objectives~\citep{Hao+24,Shen+25,ChengVanDurme24,Liu+24,Chen+25}. These works primarily propose mechanisms for making reasoning traces shorter or latent, focusing on the inference speed aspect of internalization. \citet{Lin+25} made observations similar to ours on the poor OOD generalization of internalized CoT representations in arithmetic tasks with LLMs. 


\paragraph{Positive Distribution Shift.} \citet{DCS24,Yu+24} show that internalized reasoning can lead autoregressive networks to learn representations that are otherwise difficult to discover through “direct” training. In particular, \citet{DCS24} devise a curriculum-based internalization method that achieves this. A similar phenomenon has previously been observed in feedforward networks for computationally hard function classes~\citep{ACL,CoMo23}. In particular, \citet{ACL,CoMo23} show that a carefully chosen curriculum helps a two-layer MLP learn a hidden parity in a way that bypasses traditional computational lower bounds. More generally, and as we explain further throughout the paper, we view these phenomena as falling under the broader learning phenomenon of \textit{Positive Distribution Shift}, recently identified by \citet{Med+26}. The statistical advantage of a distribution shift was first observed in~\citet{Med+25}.

\section{Internalization processes: discussion and examples}\label{sec:internalization_more}

Next, we provide an additional definition that seems to better align with internalization in machine learning practice as well as an additional example of an internalization process.
In particular, relaxing \Cref{def:internalization}, we define an approximate internalization process with respect to some underlying input distribution and loss function.

\begin{definition}[Distributional Internalization]\label{def:dist_internalization}
    Let $\mathcal{D}$ be a distribution over $\mathcal{X} \times \mathcal{Y}$ and let $l: \mathcal{Y} \times \mathcal{Y} \to \mathbb{R}$ be a measurable\footnote{We also assume that all hypotheses $h_\theta \in \mathcal{H}$ and all predictors $P(h_\theta)$ are measurable and that the written expectations are finite.} loss function.
    Fix $\theta \in \Theta$ and let $\varepsilon > 0$. If there exists $\theta^\prime \in \Theta$ with $\theta^\prime \neq \theta$ such that
    \begin{enumerate}
        \item \textbf{Approximate risk agreement:}
        \begin{equation}
            \left| \mathbb{E}_{\left(x, y\right) \sim \mathcal{D}} \left[ l\left(h_{\theta^\prime}(x), y\right) \right] - \mathbb{E}_{(x, y) \sim \mathcal{D}} \left[ l\left(P(h_\theta)(x), y\right) \right] \right| \leq \varepsilon.
        \end{equation}
        \item \textbf{$\mathfrak{h}_{\theta^\prime}$ is faster than $\mathfrak{P}(\mathfrak{h}_\theta)$:} For all $x \in \mathcal{X}$, it holds:
        \begin{equation}
            \mathrm{Time}_{\mathfrak{h}_{\theta^\prime}}(x) < \mathrm{Time}_{\mathfrak{P}(\mathfrak{h}_\theta)}(x),
        \end{equation}
    \end{enumerate}
    then we say that $\mathfrak{h}_{\theta^\prime}$ $\left(\varepsilon, \mathcal{D}\right)$-\textbf{distributionally internalizes} $\mathfrak{P}(\mathfrak{h}_\theta)$.
\end{definition}

Most internalization processes we consider in this work fall within this definition, as the internalization is performed using some training data following a particular distribution.


\begin{example}[Internalizing self-play]\label{ex:self-play}
    We formalize the setting of AlphaGo Zero~\citep{Sil+17}.
    Let $S$ be a set of states and $A$ be a finite set of actions. Let $\texttt{enc}_S: S \to \Sigma^\star$ be an encoding of states using an alphabet $\Sigma$, and let $\texttt{enc}_{Y}: \Delta(A) \times [0, 1] \to \Sigma^\star$ be an encoding of tuples of distributions over actions and state values. Let $\mathcal{X} \subseteq \Sigma^\star$ and $\mathcal{Y} \subseteq \Sigma^\star$ be the input and output spaces. Let $\mathcal{H} = \left\{ h_\theta \middle | \theta \in \Theta \right\} \subseteq \mathcal{Y}^\mathcal{X}$ be a hypothesis class of functions that map (encoded) states to an (encoded) tuple of distribution over actions and winning probability. In~\citet{Sil+17}, this class was realized by a convolutional neural network with two separate heads predicting each element of the output tuple. Assume for simplicity that each state $s$ contains the information of which player is currently playing and this information is encoded in the last character of $\texttt{enc}_S(s)$.

    Fix a $\theta \in \Theta$. Let $\mathfrak{P}_{\mathrm{SP}}(\mathfrak{h}_\theta)$ be a self-play program that receives as input an input $x \in \mathcal{X}$ and computes $P_{\mathrm{SP}}(h_\theta)(x)$. This program performs the following steps:
    \begin{itemize}
        \item[-] Computes an updated distribution $\pi(\cdot | x) \in \Delta(A)$ over actions by performing a search over the tree of possible game continuations of the current state.
        \item[-] Samples\footnote{Our definition does not formally allow randomized algorithms. In order to accommodate this, we can embed, for example, the random seed $\rho$ as part of the input $x$.} a next move from $\pi(\cdot | x)$ and advances the board.
        \item[-] Repeats until it reaches a terminal state.
        \item[-] Returns an (encoded) tuple with the updated distribution of the first round (that corresponds to the input state $x$) and a bit that indicates whether the player playing at the input state $x$ ended up winning.
    \end{itemize}
    See Algorithm~\ref{alg:self-play-internalization} for a pseudocode.
    \begin{algorithm}[t]
        \caption{The slow self-play program $\mathfrak{P}_{\mathrm{SP}}(\mathfrak{h}_\theta)$.}
        \label{alg:self-play-internalization}
        \begin{algorithmic}[1]
        \Require A program $\mathfrak{h}_\theta$ and an initial encoded state $x \in \mathcal{X}$.  A tree-search program \textsc{TreeSearch}. A board advancement program \textsc{Board}.
        \State $s_1 \gets x$.
        \State $t \gets 1$.
        \State $\hat{\pi} \gets \mathfrak{h}_\theta[0]$.
        
        \While{$s_t$ is not terminal}
            \State Compute
            \begin{equation}
                \pi_t(\cdot \mid s_t)
                \gets
                \textsc{TreeSearch}(\mathfrak{h}_\theta)(s_t).
            \end{equation}
        
            \If{$t = 1$}
                \State $\hat{\pi} \gets \pi_t(\cdot \mid s_t)$.
            \EndIf
        
            \State Sample
            \begin{equation}
                a_t \sim \pi_t(\cdot \mid s_t).
            \end{equation}
        
            \State Advance the game state
            \begin{equation}
                s_{t+1} \gets \textsc{Board}(s_t,a_t).
            \end{equation}
        
            \State $t \gets t+1$.
        \EndWhile
        
        \State Let $s_{\mathrm{final}} \gets s_t$.
        \State Define the terminal outcome from the perspective of the player to move at $x$
        \begin{equation}
            z(x)
            :=
            \mathds{1}
            \left\{
            \textsc{Winner}(s_{\mathrm{final}})
            =
            \textsc{Player}(x)
            \right\}.
        \end{equation}
        
        \State \Return $\texttt{enc}_{Y} \left( \left(\hat{\pi},z(x)\right) \right)$.
        \end{algorithmic}
    \end{algorithm}
    
    Suppose there exists $\theta^\prime \neq \theta$ such that 
    \begin{equation}
        h_{\theta^\prime}(x) = P_{\mathrm{SP}}(h_\theta)(x), \qquad \forall x \in \mathcal{X}.
    \end{equation}
    If we assume that each call to $\mathfrak{h}_\phi$ has the same cost for all $\phi \in \Theta$, then, according to Definition~\ref{def:internalization}, $\mathfrak{h}_{\theta^\prime}$ internalizes $\mathfrak{P}_{\mathrm{SP}}(\mathfrak{h}_\theta)$. Note that the algorithm of AlphaGo Zero specified a loss that tried to do exactly that: find weights $\theta^\prime$ so that $\mathfrak{h}_{\theta^\prime}$ would match $\mathfrak{P}_{\mathrm{SP}}(\mathfrak{h}_\theta)$ on all states visited during a game.
\end{example}

\section{Representation of \texorpdfstring{$C_n$}{} semiautomata with transformers}

We present here the proof of our representational result on counter semiautomata.

\begin{definition}[Counter $\bmod n$ semiautomaton]
    Let $n \in \mathbb{N}_+$. We define the $\bmod n$ \textit{counter} semiautomaton $C_n$ as the tuple $C_n = \left(Q, \Sigma, \delta_{\bmod \! n} \right)$, where $Q = \Sigma = \mathbb{Z}_n : = \left\{ 0, 1, \ldots, n-1 \right\}$, and $\delta_{\bmod \! n}(a, b) : = \left(a+b\right) \bmod  n$ for all $a, b \in \mathbb{Z}_n$.
\end{definition}
See Figure~\ref{fig:counter_semiautomata} for an illustration of the counter semiautomata $C_2, C_3, C_5$.
\tikzset{
    ->, >=Stealth, shorten >=1pt, auto,
    state/.style={circle, draw, minimum size=0.8cm, font=\bfseries},
    lbl/.style={font=\small, inner sep=1pt}
}
\begin{figure}
    \centering

    \newcommand{\FigWa}{0.30\textwidth}
    \newcommand{\FigWb}{0.30\textwidth}
    \newcommand{\FigWc}{0.30\textwidth}

    \begin{subfigure}[t]{\FigWa}
        \centering
        \resizebox{\linewidth}{!}{%
        \begin{tikzpicture}
            \node[state] (s0) at (0,0) {0};
            \node[state] (s1) at (3,0) {1};

            \draw[gray] (s0) edge [loop left] node[lbl] {0} (s0);
            \draw[gray] (s1) edge [loop right] node[lbl] {0} (s1);

            \draw[red] (s0) edge [bend left=20] node[lbl] {1} (s1);
            \draw[red] (s1) edge [bend left=20] node[lbl] {1} (s0);
        \end{tikzpicture}%
        }
        \subcaption{$C_2$}
        \label{fig:c2}
    \end{subfigure}\hfill
    \begin{subfigure}[t]{\FigWb}
        \centering
        \resizebox{\linewidth}{!}{%
        \begin{tikzpicture}
            \foreach \i in {0,1,2} {
                \node[state] (s\i) at ({90-\i*120}:2cm) {\i};
            }

            \foreach \i in {0,1,2} {
                \draw[gray] (s\i) to[out={90-\i*120+30}, in={90-\i*120-30}, looseness=5]
                    node[lbl, midway] {0} (s\i);

                \pgfmathtruncatemacro{\dest}{mod(\i+1, 3)}
                \draw[red] (s\i) edge [bend left=20] node[lbl] {1} (s\dest);

                \pgfmathtruncatemacro{\dest}{mod(\i+2, 3)}
                \draw[blue] (s\i) edge [bend left=20] node[lbl] {2} (s\dest);
            }
        \end{tikzpicture}%
        }
        \subcaption{$C_3$}
        \label{fig:c3}
    \end{subfigure}\hfill
    \begin{subfigure}[t]{\FigWc}
        \centering
        \resizebox{\linewidth}{!}{%
        \begin{tikzpicture}
            \foreach \i in {0,...,4} {
                \node[state] (s\i) at ({90-\i*72}:3cm) {\i};
            }

            \foreach \i in {0,...,4} {
                \draw[gray] (s\i) to[out={90-\i*72+25}, in={90-\i*72-25}, looseness=6]
                    node[lbl, midway] {0} (s\i);

                \pgfmathtruncatemacro{\dest}{mod(\i+1, 5)}
                \draw[red] (s\i) edge [bend left=15] node[lbl] {1} (s\dest);

                \pgfmathtruncatemacro{\dest}{mod(\i+2, 5)}
                \draw[blue] (s\i) edge [bend left=15] node[lbl] {2} (s\dest);

                \pgfmathtruncatemacro{\dest}{mod(\i+3, 5)}
                \draw[green!60!black] (s\i) edge [bend left=15] node[lbl] {3} (s\dest);

                \pgfmathtruncatemacro{\dest}{mod(\i+4, 5)}
                \draw[orange] (s\i) edge [bend left=15] node[lbl] {4} (s\dest);
            }
        \end{tikzpicture}%
        }
        \subcaption{$C_5$}
        \label{fig:c5}
    \end{subfigure}
    \caption{Counter semiautomata for $n=2, 3, 5$. Each color corresponds to a different function $f: \mathbb{Z}_n \to \mathbb{Z}_n$ that is simply the application of the partial transition function for a certain input symbol, i.e., $\delta_{\bmod\! n}(\cdot, \sigma)$.}
    \label{fig:counter_semiautomata}
\end{figure}

\subsection{Transformer architecture}
We consider a decoder-only transformer with $L$ layers and $H$ heads whose outputs are summed together. We are only interested in the last output of the transformer operating on a sequence. We denote the model by $f_{\mathrm{TF}}: \mathbb{R}^{d_{\mathrm{embd}} \times (T+1)} \times \Theta \to \mathbb{R}^{d_{\mathrm{embd}}}$ and it is defined as an alternating composition of Multi-Layer-Perceptron and self-attention layers:
\begin{equation}
    f_{\mathrm{TF}} = \pi_{\mathrm{last-token}} \circ f_{\mathrm{MLP}}^{(L)} \circ f_{\mathrm{attn}}^{(L)} \circ \ldots \circ f_{\mathrm{MLP}}^{(1)} \circ f_{\mathrm{attn}}^{(1)},
\end{equation}
where $\pi_{\mathrm{last-token}}(X) = X_{:, T+1}$ simply collects the last token's representation.
The input $X \in \mathbb{R}^{d_{\mathrm{embd}} \times (T+1)}$ should be understood as an embedding of the input sequence through an embedding layer $E: \Sigma^T \times Q \to \mathbb{R}^{d_{\mathrm{embd}} \times (T+1)}$. The output of the transformer gets decoded through an un-embedding layer $U: \mathbb{R}^{d_{\mathrm{embd}}} \to Q$.

\paragraph{Self-attention.} A single-headed ($H=1$) self-attention block with attention width $k$ is denoted by $f_{\mathrm{attn}}: \mathbb{R}^{d_{\mathrm{embd}} \times (T+1)} \times \mathbb{R}^{4d_{\mathrm{embd}}k} \to \mathbb{R}^{d_{\mathrm{embd}} \times (T+1)}$ and is parameterized by $\theta_{\mathrm{attn}} = \texttt{flatten}( \begin{bmatrix}
    W_Q, W_K, W_V, W_C
\end{bmatrix}) \in \mathbb{R}^{4kd_{\mathrm{embd}}}$, where $W_Q, W_K, W_V, W_C^\top \in \mathbb{R}^{k \times d_{\mathrm{embd}}}$. For all $X \in \mathbb{R}^{d_{\mathrm{embd}} \times (T+1)}$, the self-attention block is defined as:
\begin{equation}
    f_{\mathrm{attn}} (X; W_Q, W_K, W_V, W_C) = W_C W_V X \texttt{CausalSoftmax} \left( X^\top W_Q^\top W_K X \right)^\top,
\end{equation}
where $\texttt{CausalSoftmax}: \mathbb{R}^{(T+1) \times (T+1)} \to \mathbb{R}^{(T+1) \times (T+1)}$ applies a row-wise causal normalization through the soft-max function. That is, for any $A = [A_{ij}]_{i = 1, j = 1}^{T+1} \in \mathbb{R}^{(T+1) \times (T+1)}$, it holds:
\begin{equation}
    \texttt{CausalSoftmax}(A) = \begin{bmatrix}
        1 & 0 & 0 & \ldots & 0\\
        \frac{e^{A_{21}}}{e^{A_{21}} + e^{A_{22}}} & \frac{e^{A_{22}}}{e^{A_{21}} + e^{A_{22}}} & 0 & \ldots & 0 \\
        \vdots & \vdots & \vdots & \ldots & \vdots \\
        \frac{e^{A_{(T+1)1}}}{\sum_{j = 1}^{T+1}e^{A_{(T+1)j}}} & \frac{e^{A_{(T+1)2}}}{\sum_{j = 1}^{T+1}e^{A_{(T+1)j}}} & \frac{e^{A_{(T+1)3}}}{\sum_{j = 1}^{T+1}e^{A_{(T+1)j}}} & \ldots & \frac{e^{A_{(T+1)(T+1)}}}{\sum_{j = 1}^{T+1}e^{A_{(T+1)j}}}
    \end{bmatrix}.
\end{equation}
We omit a standard normalization by the square root of the attention width in the input of the \texttt{CausalSoftmax} operation.
For multi-head self-attention with $H$ heads, we have $H$ self-attention blocks $f_{\mathrm{attn}, 1}, \ldots, f_{\mathrm{attn}, H}$ each with its own parameters, and we simply sum their outputs:
\begin{equation}
    f_{\mathrm{attn}}\left(X; \left\{ W_Q^{(h)}, W_K^{(h)}, W_V^{(h)}, W_C^{(h)} \right\}_{h = 1}^H\right) = \sum_{h = 1}^{H} W_C^{(h)} W_V^{(h)} X \texttt{CausalSoftmax} \left( X^\top W_Q^{(h)\top} W_K^{(h)} X \right)^\top.
\end{equation}
Some works, or software implementations, opt to have $W_C \in \mathbb{R}^{k \times k}$ (or omit it altogether) with $k = d_{\mathrm{embd}} / H$ and then concatenate rather than sum the outputs of the self-attention heads. 

\paragraph{Multi-Layer Perceptron.} In the transformer architecture, each layer shares one MLP across positions. We consider MLPs with the ReLU activation function.

\subsection{Auxiliary lemmata}

We recall the following lemma on approximating hard arg max with the softmax non-linearity.
\begin{lemma}{(Softmax approximates arg max~\citep{Liu+23})}\label{lem:arg-softmax}
    Let $z \in \mathbb{R}^T$, and let $t^\star = \arg \max_{t \in [T]} z_t$ be the index of the largest element of $z$. Let $\texttt{softmax}: \mathbb{R}^T \to \mathbb{R}^T$ be defined as follows:
    \begin{equation*}
        \texttt{softmax}(z)_t = \frac{e^{z_t}}{\sum_{j = 1}^T e^{z_j}}.
    \end{equation*}
    Then, if there exists $\gamma > 0$ such that $z_{t} \leq z_{t^\star} - \gamma$ for all $t \neq t^\star$, it holds:
    \begin{equation*}
        \left\| \texttt{softmax}(z) - e_{t^\star} \right\|_1 \leq 2Te^{-\gamma}.
    \end{equation*}
\end{lemma}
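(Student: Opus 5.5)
The plan is to compute the $\ell_1$ distance exactly and then bound the mass outside $t^\star$. Write $p = \texttt{softmax}(z)$ and $S = \sum_{j=1}^T e^{z_j}$, so that $p_t = e^{z_t}/S$. Since $p$ is a probability vector and $e_{t^\star}$ puts all of its mass on $t^\star$, the distance splits into two parts:
\begin{equation*}
\left\| p - e_{t^\star} \right\|_1 = (1 - p_{t^\star}) + \sum_{t \neq t^\star} p_t .
\end{equation*}
Because $\sum_t p_t = 1$, the sum over $t \neq t^\star$ equals $1 - p_{t^\star}$. Hence $\left\| p - e_{t^\star} \right\|_1 = 2(1 - p_{t^\star})$, and it suffices to show $1 - p_{t^\star} \le T e^{-\gamma}$.

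Next I write the off-maximum mass explicitly:
\begin{equation*}
1 - p_{t^\star} = \frac{S - e^{z_{t^\star}}}{S} = \frac{\sum_{t \neq t^\star} e^{z_t}}{S}.
\end{equation*}
All terms of $S$ are positive, so $S \ge e^{z_{t^\star}}$. Replacing the denominator by this smaller quantity gives
\begin{equation*}
1 - p_{t^\star} \le \sum_{t \neq t^\star} e^{z_t - z_{t^\star}}.
\end{equation*}
By the gap assumption $z_t \le z_{t^\star} - \gamma$ for every $t \neq t^\star$, each of the $T-1$ summands is at most $e^{-\gamma}$. Therefore $1 - p_{t^\star} \le (T-1) e^{-\gamma} \le T e^{-\gamma}$, which gives $\left\| p - e_{t^\star} \right\|_1 \le 2(T-1)e^{-\gamma} \le 2Te^{-\gamma}$.

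There is no real obstacle. The only points needing care are the identity $\|p - e_{t^\star}\|_1 = 2(1 - p_{t^\star})$, which holds because $p$ sums to one, and the direction of the inequality when the denominator $S$ is lowered to $e^{z_{t^\star}}$. The bound is even slightly stronger than stated, with $T-1$ in place of $T$.
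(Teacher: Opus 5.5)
Your proof is correct and complete. The identity $\|p-e_{t^\star}\|_1 = 2(1-p_{t^\star})$, followed by lowering the normalizer to $e^{z_{t^\star}}$, is the standard argument, and it gives the slightly sharper bound $2(T-1)e^{-\gamma}$. The paper gives no proof of its own (it cites Liu et al.), so there is no route to compare against; your derivation suffices for how the lemma is used later, on causal rows of length at most $M$ with gap $\beta\Delta_M/2$.
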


We will also make use of the following lemma on approximating modular addition with two-layer MLPs.
\begin{lemma}[Robust ReLU implementation of modular addition]\label{lem:robust-mod-add}
Fix $N\in\mathbb{N}_+$ and $\alpha\in(0,1/16]$. There is a scalar one-hidden-layer ReLU network $A_N:\R^2\to\R$ with width at most $4N+2$ and weights bounded by $C_N=O(N)$, for a universal implicit constant, such that for all $a,b\in\mathbb{Z}_N$ and all $u,v\in\R$ satisfying
\[
    |u-a|\le\alpha,
    \qquad
    |v-b|\le\alpha,
\]
one has
\[
    A_N(u,v) = (a+b)\bmod N .
\]
\end{lemma}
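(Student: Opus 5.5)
We construct $A_N$ explicitly as a sum of ``bump'' functions, one per possible value of the integer sum. The domain of interest is a neighborhood of the grid $\mathbb{Z}_N\times\mathbb{Z}_N$. The integer sum $s=a+b$ lies in $\{0,1,\dots,2N-2\}$, and $(a+b)\bmod N$ depends only on $s$. So we first form the scalar $w=u+v$. It satisfies $|w-s|\le 2\alpha\le 1/8$, so different values of $s$ give disjoint windows $[s-1/8,s+1/8]$ separated by gaps of length at least $3/4$. Since $u+v$ is linear, it can be folded into the first-layer weights. Every hidden unit therefore has the form $\mathrm{ReLU}(u+v-c)$ for a threshold $c$, and the network is really a one-dimensional piecewise-linear function $g(w)$ composed with $w=u+v$.

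It then suffices to build a continuous piecewise-linear $g:\R\to\R$ that equals $s\bmod N$ on each window $[s-1/4,s+1/4]$, for $s=0,\dots,2N-2$. Take $g$ equal to $0$ for $w\le 1/4$, and on each interval $[s+1/4,s+3/4]$ let it move linearly from $s\bmod N$ to $(s+1)\bmod N$. The target value rises by $1$ at each step except at $s=N-1$, where it drops from $N-1$ to $0$. Write $g(w)=\sum_{s=0}^{2N-3}\Delta_s\,\rho(w-s-\tfrac14)$, where $\rho(t)=2\mathrm{ReLU}(t)-2\mathrm{ReLU}(t-\tfrac12)$ is a clipped ramp from $0$ to $1$. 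Here $\Delta_s=+1$ for $s\ne N-1$ and $\Delta_{N-1}=-(N-1)$. This uses $2(2N-2)\le 4N$ hidden ReLU units; we add a couple of spare units or a bias to reach the stated width bound $4N+2$. The first-layer weights are $1$ with biases of size $O(N)$, and the output weights are at most $2(N-1)=O(N)$, which gives $C_N=O(N)$.

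To verify correctness, fix $a,b\in\mathbb{Z}_N$ and $u,v$ within $\alpha$ of them. Then $w=u+v\in[s-1/8,s+1/8]$ with $s=a+b$. On that window every ramp with index $s'<s$ is saturated at $1$ and every ramp with $s'\ge s$ is $0$, because their active intervals $[s'+1/4,s'+3/4]$ avoid the window. Hence $g(w)=\sum_{s'<s}\Delta_{s'}$. For $s\le N-1$ this equals $s$. For $s\ge N$ it equals $s-N$, since the sum is $(s-1)$ plus the single $-(N-1)$ jump, minus $1$ because the $N-1$ term does not contribute its $+1$. In both cases $g(w)=(a+b)\bmod N$ exactly.

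The main point requiring care is this exact-output bookkeeping: ramp endpoints must never intersect the tolerance windows, and the wrap-around jump must be counted correctly. The condition $\alpha\le 1/16$ gives the ample slack needed, so only the width and weight constants remain to be checked.
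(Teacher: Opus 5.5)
Your proposal is correct and is essentially the paper's argument. Both set $A_N(u,v)=g(u+v)$ for a continuous piecewise-linear $g$ that is flat at value $s \bmod N$ near each $s\in\{0,\dots,2N-2\}$ and interpolates linearly in between; your clipped-ramp sum is an explicit instance of the paper's breakpoint expansion $c_0+c_1x+\sum_r c_r\max(x-\theta_r,0)$, with $\alpha$-independent plateaus $[s-\tfrac14,s+\tfrac14]$, which is fine. Two minor points: no padding is needed, since $4N-4$ units already meet the bound ``at most $4N+2$''; and your wrap-around sentence should read simply $(s-1)\cdot 1-(N-1)=s-N$ with no extra $-1$, though the conclusion $s-N$ you state is right.
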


\begin{proof}
It is enough to construct a continuous piecewise-affine function $g_N:\R\to\R$ such that
\[
    g_N(x)=s\bmod N
    \qquad\text{whenever}\qquad
    |x-s|\le 2\alpha,
\]
for every integer $s\in\{0,\ldots,2N-2\}$; then set $A_N(u,v)=g_N(u+v)$.

We construct $g_N$ by making it constant on each interval $[s-2\alpha,s+2\alpha]$, with value $s\bmod N$, and linearly interpolating between consecutive constant intervals.  Since $4\alpha\le 1/4$, the interpolation gaps have length at least $3/4$.  The largest jump in the target values is at most $N-1$, so all slopes are $O(N)$.  The resulting function has at most $2(2N-1)\le 4N$ ``breakpoints''.  Every continuous piecewise-affine function with $K$ breakpoints can be written as
\[
    c_0+c_1x+\sum_{r=1}^{K} c_r\max(x-\theta_r, 0),
\]
so it is representable by a one-hidden-layer ReLU network of width $K + 2 \le 4N+2$ (the linear term can be represented by two ReLU neurons). The claimed weight bound follows from the slope bound and the construction.
\end{proof}

\subsection{Main result}

We present next our result on how the semiautomaton $C_n, \, n = \prod_{j = 1}^m p_j^{v_j}$ can be represented by a transformer of $\Theta (\log T)$ layers and $\Theta\left(\sum_{j = 1}^m p_j^{v_j}\right)$ MLP width.

\begin{theorem}[Simulation of $C_n, \, n = \prod_{j = 1}^m p_j^{v_j}$, with $\Theta(\log T)$ depth and $\Theta(\sum_{j = 1}^m p_j^{v_j})$ width]\label{thm:log_depth_log_width_construction}
    Let $n \geq 2$, $q_0 \in \mathbb{Z}_{n}$ and $T \in \mathbb{N}_+$. Let $n=\prod_{j=1}^m p_j^{v_j}$ be its prime factorization, and set $N_j=p_j^{v_j}$ and $N_{\max}=\max_{1\leq j\leq m}N_j$. There exists a depth-$\lceil \log_2(T+1)\rceil$ Transformer $f_{\mathrm{TF}}$ which, together with the embedding and the un-embedding layers, implements $\delta^{(T)}_{\bmod n}(\sigma_1,\ldots,\sigma_T,q_0)=q_0+\sum_{t=1}^T\sigma_t \pmod n$. The construction has embedding dimension $d_{\mathrm{embd}}=2m+2$, MLP width $4\sum_{j=1}^m N_j+4+2m$, $H=2$ heads each of dimension $m+2$, and uses a two-layer ReLU MLP. Moreover, all weights have $\ell_\infty$ norm at most $C\max\{N_{\max},T\sqrt{\log(TN_{\max})}\}$ for a universal constant $C>0$.
\end{theorem}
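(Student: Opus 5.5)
We will build a parallel-prefix (binary-tree) construction in which every layer doubles the span of inputs each position has summed, and all arithmetic is done in CRT coordinates. By the Chinese Remainder Theorem, $\mathbb{Z}_n\cong\prod_{j=1}^m\mathbb{Z}_{N_j}$, so the embedding $E$ will store each token $\sigma_t$ (and $q_0$, placed at position $1$ as the sequence's first token) through its residues $r_j=\sigma_t\bmod N_j$. This gives $m$ content coordinates. We add two positional coordinates, for example $t/(T+1)$ and a constant $1$, for a total of $m+2$ coordinates. The remaining $m$ coordinates of $d_{\mathrm{embd}}=2m+2$ serve as a scratch block that receives attention outputs.

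The invariant after layer $\ell$ is the following: position $t$ holds in its content block the residues of $S_\ell(t):=\sum_{s=\max(1,t-2^\ell+1)}^{t}\sigma_s \bmod N_j$ (with $\sigma_1:=q_0$ after re-indexing). Each coordinate is within $\alpha$ of the exact integer, and the scratch block is $\approx 0$. After $\lceil\log_2(T+1)\rceil$ layers, the last position holds $q_0+\sum_t\sigma_t$ in CRT coordinates. The un-embedding $U$ then rounds each coordinate and applies the CRT inverse map to return an element of $Q$.

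To go from layer $\ell$ to layer $\ell+1$, position $t$ must fetch the content of position $t-2^\ell$, or receive $0$ if $t-2^\ell<1$. We will use head 1 as a hard-attention "shift by $2^\ell$" head. Its query/key dimension uses the positional features: score $z_s=-\beta\,(s-(t-2^\ell))^2$ up to terms constant in $s$, which expands into bilinear products of $(s,s^2,1)$-type features. To keep the head dimension at $m+2$, we would instead use positions encoded on a circle, $(\cos(\pi s/(T+1)),\sin(\pi s/(T+1)))$, so that the maximizing key is $s=t-2^\ell$ with a margin of order $\beta/T^2$. The value/output maps copy the $m$ content residues into the scratch block.

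The out-of-range case $t\le 2^\ell$ is handled as follows. We let the first position carry a flag and arrange for head 2 to pull attention onto it in a way that cancels the contribution. An alternative is to include a zero-content "BOS" slot. By \Cref{lem:arg-softmax}, choosing $\beta$ so that the margin $\gamma\gtrsim\log(TN_{\max}/\alpha)$ makes the attention error in each coordinate at most $2Te^{-\gamma}N_{\max}\le\alpha$. This is where the weight scale $T\sqrt{\log(TN_{\max})}$ comes from: the factor $\beta^{1/2}$ is split between $W_Q$ and $W_K$ and multiplied by the $T$-scaled positional features.

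The MLP then applies, for each $j$, the network $A_{N_j}$ of \Cref{lem:robust-mod-add} to the pair (content$_j$, scratch$_j$). The outputs are written back to the content block through the residual connection: add $A_{N_j}-\text{content}_j$, and subtract the scratch block. These terms use $\sum_j(4N_j+2)$ neurons plus $O(m)$ and $O(1)$ neurons to pass the linear identity terms and positional coordinates through ReLUs as $x=\mathrm{ReLU}(x)-\mathrm{ReLU}(-x)$. This matches the stated width $4\sum_jN_j+4+2m$ and weights $O(N_{\max})$. Because $A_{N_j}$ returns exact integers whenever its inputs are $\alpha$-close, errors do not accumulate across layers. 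This is the key point that makes the depth-independent tolerance work.

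The main obstacle is implementing the exact shift attention robustly with only $m+2$ head dimensions and bounded weights. In particular, this means treating the boundary $t\le 2^\ell$ correctly and keeping the softmax margin uniform over all $t$, which matters because the circular encoding has curvature $\sim 1/T^2$. I would first try the circle encoding combined with a dedicated BOS/$q_0$ position of zero residual content, which absorbs the out-of-range attention. Only if the margin bookkeeping fails would I fall back to using the second head to zero out contributions.
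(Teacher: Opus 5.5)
Your outline follows the paper's proof in its main components: CRT residue coordinates, a stride-$2^{\ell}$ look-back head driven by circular positional features and a rotated query, the robust adders $A_{N_j}$ of \Cref{lem:robust-mod-add}, \Cref{lem:arg-softmax} for the softmax error, and inverse CRT in $U$. The gap is that your layer update relies on a residual stream, and the paper's $f_{\mathrm{TF}}$ has none. It is the plain alternation $f^{(L)}_{\mathrm{MLP}}\circ f^{(L)}_{\mathrm{attn}}\circ\cdots$, so each attention output \emph{replaces} the token representation. Your shift head writes only into the scratch block. After it, the current position's residues and its positional coordinates are gone, and ``write back through the residual connection'' is not an available step.

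The repair is the one the paper uses. Make your second head a self-attending head that copies the residue block and the two positional coordinates; that head is free once a BOS absorbs out-of-range queries. Then let the MLP output $A_{N_j}(a_j,b_j)$ directly and re-emit the positional coordinates as $\mathrm{ReLU}(z)-\mathrm{ReLU}(-z)$, which is the $+4$ in the width. But now the positional features are rebuilt at every layer as softmax averages and are no longer exact. Your claim that ``errors do not accumulate across layers'' holds for the residues, which $A_{N_j}$ snaps back to integers, but fails for the positions. The missing argument is an inductive invariant: after layer $\ell$, every positional vector is within $\ell\eta$ of its ideal value, with $\eta\lesssim\Delta/L$ and $\Delta=1-\cos(\text{angular step})$. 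This keeps score perturbations below $\Delta/2$, so both heads retain an argmax gap of at least $\beta\Delta/2$ in all $L$ layers. It is why the paper takes $\eta\le\Delta_M/(12L)$ and $\beta=\frac{2}{\Delta_M}\log(2M/\eta)$. In a transformer with genuine residual connections your argument does go through, and is simpler, since positions stay exact.

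Your boundary treatment is a valid alternative to the paper's. The paper prepends $T$ zero dummies and places $M=2T+1$ points on the full circle, so every non-dummy query $i-2^{\ell-1}$ is in range and dummy rows only ever see zero residues. Your version puts a single zero-content BOS at the left end of a half circle. An out-of-range query at $t<2^{\ell}$ then sees only offsets $k=s-t+2^{\ell}\in[1,T]$, on which $\cos(\pi k/(T+1))$ is decreasing. So the BOS is the unique argmax with gap $\cos a-\cos(a+\delta)\ge 1-\cos\delta$, where $\delta=\pi/(T+1)$, the same margin as in range. This shortens the sequence from $2T+1$ to $T+2$. The absorbing slot must be a separate zero token, not $q_0$'s slot; otherwise $q_0$ is counted a second time at every out-of-range position.

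Finally, with unit-norm circular features the weight bound comes from $\sqrt\beta=O(T\sqrt{\log(TN_{\max})})$ alone, since $\Delta\asymp T^{-2}$. There are no $T$-scaled features to multiply by.
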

\begin{proof}
    The proof introduces two basic changes from the generic $O(\log T)$-depth construction of~\citet{Liu+23}. First, instead of embedding each input symbol $\sigma \in \mathbb{Z}_n$ to its transition map $\delta_{\bmod n}(\cdot,\sigma):\mathbb{Z}_n\to\mathbb{Z}_n$, we use the prime-power factorization embedding
    \[
        \sigma \mapsto (\sigma \bmod N_1,\ldots,\sigma \bmod N_m),
        \qquad N_j=p_j^{v_j}.
    \]
    By the Chinese Remainder Theorem, $x \bmod n$ is uniquely determined by $(x\bmod N_1,\ldots,x\bmod N_m)$. Hence the MLP only needs to implement $m$ parallel modular additions, one modulo each $N_j$. Second, the initial state $q_0$ is explicitly fed in the input, hence we require one additional attention layer compared to~\citet{Liu+23} general construction.
    
    As in the construction of~\citet{Liu+23}, we prepend $T$ dummy zero tokens. Thus the transformer is run on the padded sequence
    \[
        y=(y_1,\ldots,y_M)
        :=
        (0,\ldots,0,\sigma_1,\ldots,\sigma_T,q_0),
        \qquad M:=2T+1.
    \]

    Set $L:=\lceil \log_2(T+1)\rceil$ and $\Delta_M:=1-\cos(2\pi/M)$. Choose
    \[
        \eta:=\min\left\{\frac{1}{16N_{\max}},\frac{\Delta_M}{12L},\frac{1}{16}\right\},
        \qquad
        \beta:=\frac{2}{\Delta_M}\log\left(\frac{2M}{\eta}\right).
    \]
    The quantity $\beta$ is the attention-score scale used in the definition of the query-key matrices.

    \paragraph{Embedding.}
    Let $d:=2m+2$ and $k:=m+2$. For each position $i\in[M]$, define the positional vector
    \[
        p_i:=\left(\cos\frac{2\pi i}{M},\sin\frac{2\pi i}{M}\right)^\top\in\mathbb{R}^2 .
    \]
    The initial embedding is
    \[
        X_i^{(0)}
        :=
        \left(
            y_i\bmod N_1,\ldots,y_i\bmod N_m,
            \underbrace{0,\ldots,0}_{m\text{ auxiliary coordinates}},
            p_i^\top
        \right)^\top
        \in\mathbb{R}^{2m+2}.
    \]
    Let $r_j^{(\ell)}(i)$ denote the first-block residue coordinate modulo $N_j$ at position $i$ after the $\ell$th MLP.

    We use the following coordinate extraction matrices. For
    \[
        x=(a_1,\ldots,a_m,b_1,\ldots,b_m,z_1,z_2)^\top\in\mathbb{R}^{2m+2},
    \]
    define
    \[
        E_1x:=(a_1,\ldots,a_m)^\top,\qquad
        E_2x:=(b_1,\ldots,b_m)^\top,\qquad
        E_px:=(z_1,z_2)^\top .
    \]
    Thus $E_1,E_2\in\mathbb{R}^{m\times d}$ and $E_p\in\mathbb{R}^{2\times d}$. Let $I_1:=E_1^\top$, $I_2:=E_2^\top$, and $I_p:=E_p^\top$ be the corresponding insertion matrices. Therefore $I_1u$ inserts $u\in\mathbb{R}^m$ into the first residue block, $I_2u$ inserts it into the auxiliary residue block, and $I_pv$ inserts $v\in\mathbb{R}^2$ into the final two positional coordinates.

    \paragraph{The attention matrices.}
    At layer $\ell\in\{1,\ldots,L\}$, set the stride $s_\ell:=2^{\ell-1}$. Since $L=\lceil\log_2(T+1)\rceil$, we have $s_\ell\le 2^{L-1}\le T$. 

    The construction uses two heads as in the construction of~\citet{Liu+23}.
    The first head attends to itself. Its query, key, value, and output matrices are
    \[
        W_Q^{(\ell,1)}
        :=
        \sqrt{\beta}
        \begin{pmatrix}
            0_{m\times d}\\
            E_p
        \end{pmatrix},
        \qquad
        W_K^{(\ell,1)}
        :=
        \sqrt{\beta}
        \begin{pmatrix}
            0_{m\times d}\\
            E_p
        \end{pmatrix},
    \]
    and
    \[
        W_V^{(\ell,1)}
        :=
        \begin{pmatrix}
            E_1\\
            E_p
        \end{pmatrix},
        \qquad
        W_C^{(\ell,1)}
        :=
        \begin{pmatrix}
            I_1 & I_p
        \end{pmatrix}.
    \]
    These matrices have dimensions $W_Q^{(\ell,1)},W_K^{(\ell,1)},W_V^{(\ell,1)}\in\mathbb{R}^{k\times d}$ and $W_C^{(\ell,1)}\in\mathbb{R}^{d\times k}$. For tokens $X_i,X_j$, the attention score is
    \[
        X_i^\top (W_Q^{(\ell,1)})^\top W_K^{(\ell,1)}X_j
        =
        \beta\,(E_pX_i)^\top(E_pX_j).
    \]
    Moreover,
    \[
        W_C^{(\ell,1)}W_V^{(\ell,1)}X_j
        =
        I_1E_1X_j+I_pE_pX_j,
    \]
    so the self head copies the first residue block and the positional coordinates.

    The second head looks back $s_\ell$ positions. Let $R_{-s_\ell}\in\mathbb{R}^{2\times 2}$ be the rotation matrix
    \[
        R_{-s_\ell}
        :=
        \begin{pmatrix}
            \cos(2\pi s_\ell/M) & \sin(2\pi s_\ell/M)\\
            -\sin(2\pi s_\ell/M) & \cos(2\pi s_\ell/M)
        \end{pmatrix},
    \]
    so that $R_{-s_\ell}p_i=p_{i-s_\ell}$ with indices being interpreted cyclically. Define
    \[
        W_Q^{(\ell,2)}
        :=
        \sqrt{\beta}
        \begin{pmatrix}
            0_{m\times d}\\
            R_{-s_\ell}E_p
        \end{pmatrix},
        \qquad
        W_K^{(\ell,2)}
        :=
        \sqrt{\beta}
        \begin{pmatrix}
            0_{m\times d}\\
            E_p
        \end{pmatrix},
    \]
    and
    \[
        W_V^{(\ell,2)}
        :=
        \begin{pmatrix}
            E_1\\
            0_{2\times d}
        \end{pmatrix},
        \qquad
        W_C^{(\ell,2)}
        :=
        \begin{pmatrix}
            I_2 & 0_{d\times 2}
        \end{pmatrix}.
    \]
    Thus
    \[
        X_i^\top (W_Q^{(\ell,2)})^\top W_K^{(\ell,2)}X_j
        =
        \beta\,(R_{-s_\ell}E_pX_i)^\top(E_pX_j),
    \]
    and
    \[
        W_C^{(\ell,2)}W_V^{(\ell,2)}X_j
        =
        I_2E_1X_j.
    \]
    Therefore the second head copies the first residue block of the attended token into the auxiliary residue block.

    \paragraph{The ideal hard-attention computation.}
    First replace each causal softmax row by its argmax one-hot vector. We later justify this replacement using Lemma~\ref{lem:arg-softmax} and robustness of the MLP.

    Under hard attention, the first head attends from position $i$ to position $i$. For every non-dummy position $i\ge T+1$, the second head attends from position $i$ to position $i-s_\ell$. Therefore the summed two-head output at position $i\ge T+1$ is
    \[
        Y_i^{(\ell)}
        =
        \left(
            r_1^{(\ell-1)}(i),\ldots,r_m^{(\ell-1)}(i),
            r_1^{(\ell-1)}(i-s_\ell),\ldots,r_m^{(\ell-1)}(i-s_\ell),
            p_i^\top
        \right)^\top .
    \]
    On the first $T$ dummy positions, the second head may attend arbitrarily when $i-s_\ell<1$, but this does not affect the residue computation.

    The MLP maps
    \[
        (a_1,\ldots,a_m,b_1,\ldots,b_m,z_1,z_2)^\top
    \]
    to
    \[
        \left(
            A_{N_1}(a_1,b_1),\ldots,A_{N_m}(a_m,b_m),
            \underbrace{0,\ldots,0}_{m\text{ auxiliary coordinates}},
            z_1,z_2
        \right)^\top ,
    \]
    where each $A_{N_j}$ is the robust modular-adder network from Lemma~\ref{lem:robust-mod-add} with robustness parameter $\alpha=1/16$. The two identity coordinates $z_1,z_2$ are implemented by $z=\operatorname{ReLU}(z)-\operatorname{ReLU}(-z)$, using two hidden units per coordinate. Hence the MLP width is at most $4\sum_{j=1}^m N_j+2m+4$.

    For hard attention, this gives the exact recurrence, for every non-dummy position $i\ge T+1$,
    \[
        r_j^{(\ell)}(i)
        =
        r_j^{(\ell-1)}(i)
        +
        r_j^{(\ell-1)}(i-s_\ell)
        \pmod{N_j}.
    \]
    By induction on $\ell$,
    \[
        r_j^{(\ell)}(i)
        =
        \sum_{u=i-2^\ell+1}^{i} y_u
        \pmod{N_j}
    \]
    for every non-dummy position $i\ge T+1$, with the convention $y_u=0$ for $u<1$. The base case $\ell=0$ is the embedding. For the induction step, the current term $r_j^{(\ell-1)}(i)$ gives the window $[i-2^{\ell-1}+1,i]$. The shifted term is at $k=i-s_\ell=i-2^{\ell-1}$. If $k\ge T+1$, the induction hypothesis gives the window $[i-2^\ell+1,i-2^{\ell-1}]$. If $k\le T$, then the shifted position is dummy and its residue is zero; the same shifted window also has zero sum, because all its indices are at most $T$ or below $1$. Thus the two windows combine to $[i-2^\ell+1,i]$.

    At the last position $M=2T+1$ and depth $L$, the window length is $2^L\ge T+1$. Therefore the window ending at $M$ contains all nonzero semantic tokens $\sigma_1,\ldots,\sigma_T,q_0$ and may contain only dummy zeros before them. Hence
    \[
        r_j^{(L)}(M)
        =
        q_0+\sum_{t=1}^T\sigma_t
        \pmod{N_j}
    \]
    for every $j\in[m]$.

    \paragraph{Softmax approximation.}
    It remains to justify replacing hard attention by ordinary causal softmax attention with finite weights. The two heads are analyzed by the same argument.

    For the ideal positional vectors, if the target position is $t^\star$, then $p_{t^\star}^\top p_{t^\star}=1$, while for every other position $j\neq t^\star$,
    \[
        p_{t^\star}^\top p_j
        \leq
        \cos\frac{2\pi}{M}
        =
        1-\Delta_M .
    \]
    Thus the ideal unscaled dot-product gap is $\Delta_M$.

    Because the positional coordinates are copied through softmax attention, they are not exact after the first layer (note that this is because we do not use a residual connection in our theoretical construction). However, we prove the following invariant. After layer $\ell$, all first-block residue coordinates are exact, all auxiliary residue coordinates are zero, and each positional vector differs from its ideal $p_i$ by at most $\ell\eta$ in Euclidean norm.

    The invariant is true at $\ell=0$. Suppose it holds at layer $\ell-1$. Since $(\ell-1)\eta\le L\eta\le \Delta_M/12$, the perturbation of any positional dot product is at most $3L\eta$, and the perturbation of the difference between a target score and a non-target score is at most $6L\eta\le \Delta_M/2$. Therefore, after multiplying by $\beta$, every relevant causal row has argmax gap at least
    \[
        \gamma=\frac{\beta\Delta_M}{2}.
    \]
    For the first head this holds for every row. For the second head this holds for every non-dummy row $i\ge T+1$, whose target is $i-s_\ell$. On dummy rows, all causal residue values are zero, so no selection guarantee is needed.

    Applying Lemma~\ref{lem:arg-softmax} to a causal row of length at most $M$ gives
    \[
        \left\|\texttt{CausalSoftmax}(A)_{i,1:i}-e_{t^\star}\right\|_1
        \leq
        2M e^{-\beta\Delta_M/2}
        \leq
        \eta
    \]
    by the definition of $\beta$.

    Consequently, every copied residue coordinate is within $\eta N_{\max}\le 1/16$ of the corresponding exact residue. The MLP modular adders from Lemma~\ref{lem:robust-mod-add} therefore output the exact modular sums. This proves exactness of the first residue block and zeroing of the auxiliary residue block at layer $\ell$.

    For positional coordinates, the first head outputs an $\eta$-accurate convex combination around the target position. More explicitly, if the previous positional error was at most $(\ell-1)\eta$, then
    \[
        \left\|
            \sum_{j\le i} w_j E_pX_j^{(\ell-1)} - p_i
        \right\|_2
        \leq
        (\ell-1)\eta+\eta
        =
        \ell\eta,
    \]
    where $w$ is the first head's attention row and $\|w-e_i\|_1\le \eta$. The MLP passes the two positional coordinates through the identity. Thus, the invariant holds at layer $\ell$.

    By induction, the finite-softmax transformer has the same exact residue coordinates as the hard-attention transformer after every MLP layer.

    \paragraph{Decoding.}
    At the last token, the first $m$ coordinates are exactly
    \[
        \left(
            q_0+\sum_{t=1}^T\sigma_t \bmod N_1,
            \ldots,
            q_0+\sum_{t=1}^T\sigma_t \bmod N_m
        \right)^\top .
    \]
    The unembedding $U$ applies the inverse Chinese remainder map, yielding the unique element of $\mathbb{Z}_n$ with these residues. Therefore
    \[
        U\,f_{\mathrm{TF}}(E(\sigma_1,\ldots,\sigma_T,q_0))
        =
        q_0+\sum_{t=1}^T\sigma_t \pmod n .
    \]
    Finally, the nonzero entries in the attention matrices are bounded by $\sqrt{\beta}$ or $1$, and the MLP weights are bounded by $C_{\mathrm{MLP}}N_{\max}$ for a universal constant $C_{\mathrm{MLP}}$. Since $\Delta_M^{-1}=O(T^2)$ and $\eta^{-1}=O(T^2L N_{\max})$, we have $\sqrt{\beta}=O(T\sqrt{\log(TN_{\max})})$. Hence all weights have $\ell_\infty$ norm at most $C\max\{N_{\max},T\sqrt{\log(TN_{\max})}\}$ for a universal constant $C>0$. This completes the proof.
\end{proof}

\section{Learning Parity in 1-Layer Transformer with Linear Attention}\label{sec:app:proof-provable-parity-main}
We first start by specifying the architecture, curriculum, and details of the algorithm, and we then proceed to the proof.
\subsection{Architecture, Left Removal Curriculum, and Algorithm}\label{subsec:architecture}

\paragraph{Causal linear attention architecture.} Let $D$ be the maximum sequence length, which will be $d+k$ for our setup in internalizing parities. Our architecture, as specified below, is parameterized by a single attention head score matrix $W\in \R^{D \times D}$. We will fix other components of the architecture such as MLP weights, and value matrix. Each of the \(D\) tokens is represented in dimension \(D+1\). The first coordinate is a scalar content coordinate, and the remaining \(D\) coordinates are one-hot positional encodings.

Given a content vector
\( 
a=(a_1,\ldots,a_D)\in\mathbb{R}^D,
\)
we form the token matrix
\[
X(a)
=
\begin{pmatrix}
a_1 & 1 & 0 & \cdots & 0\\
a_2 & 0 & 1 & \cdots & 0\\
\vdots & \vdots & \vdots & \ddots & \vdots\\
a_D & 0 & 0 & \cdots & 1
\end{pmatrix}
\in\mathbb{R}^{D\times(D+1)}.
\]
Equivalently, the \(i\)-th token is
\[
X_i(a)=(a_i,e_i)\in\mathbb{R}^{1+D},
\]
where \(e_i\in\mathbb{R}^D\) is the \(i\)-th standard basis vector.

We use a single attention head without softmax normalization. The query-key scores are bilinear:
\[
S=XW_{QK}X^\top.
\]
Here
\( 
W_{QK}\in\mathbb{R}^{(D+1)\times(D+1)}.
\) Equivalently, if \(Q=XW_Q\) and \(K=XW_K\), then
\( 
QK^\top=XW_QW_K^\top X^\top,
\)
and we collapse the parameterization
\( 
W_{QK}:=W_QW_K^\top
\)
into a single parameter. We always fix the value matrix to be the identity,
\[
W_V=I_{D+1}.
\]

We constrain \(W_{QK}\) to ignore the scalar content coordinate and to be causal in the positional block. Namely, we write
\[
W_{QK}
=
\begin{pmatrix}
0 & 0\\
0 & W
\end{pmatrix}, \quad \text{ where } \quad W\in\mathbb{R}^{D\times D}\,,
\]
is an unconstrained trainable matrix. The first row and column of \(W_{QK}\) are zero, so the bilinear query-key scores depend only on the positional coordinates. For tokens \(i,j\in[D]\), this gives
\[
X_i(a)^\top W_{QK}X_j(a)
=
(a_i,e_i)^\top
\begin{pmatrix}
0 & 0\\
0 & W
\end{pmatrix}
(a_j,e_j)
=
e_i^\top W e_j
=
W_{ij}.
\]
Hence
\[
X(a)W_{QK}X(a)^\top=W.
\]

Causality is enforced by applying the lower-triangular mask
\[
M\in\{0,1\}^{D\times D},
\qquad
M_{ij}
=
\begin{cases}
1, & j\le i,\\
0, & j> i.
\end{cases}
\]
Thus token \(i\) may attend only to tokens at positions \(j\le i\), including itself. The masked score matrix is
\[
M\odot X(a)W_{QK}X(a)^\top
=
M\odot W,
\]
where \(\odot\) denotes entrywise multiplication. Since we fix the value matrix to be the identity, the causal linear-attention output is
\[
H(X(a))
=
\left(M\odot X(a)W_{QK}X(a)^\top\right)X(a)
=
(M\odot W)X(a).
\]

\paragraph{Output of Attention on shorter input length} The same construction is consistent under restriction to shorter sequences. If we use the same input representation but restrict the sequence to the first \(D'\leq D\) tokens, then the restricted token matrix is
\[
X_{[D']}(a)
=
X(a)_{1:D',:}
\in\mathbb{R}^{D'\times(D+1)}.
\]
The bilinear score matrix becomes
\[
X_{[D']}(a)W_{QK}X_{[D']}(a)^\top
=
W_{[D']\times[D']},
\]
the upper-left \(D'\times D'\) restriction of \(W\). After applying the \(D'\times D'\) causal mask \(M^{(D')}\), defined by
\( 
M^{(D')}_{ij} = \mathbf{1}\{j \leq i\}\)
the corresponding causal attention output is
\[
H_{[D']}(X(a))
=
\left(M^{(D')}\odot W_{[D']\times[D']}\right)X_{[D']}(a).
\]
Thus restricting the input to its first \(D'\) tokens simply restricts the positional score matrix \(W\) to its first \(D'\) rows and columns.

\paragraph{Final output with fixed MLP} For each output position \(i\in[D]\), we define the scalar prediction
\[
\widehat y_i:=f_{W}(a)_i=\sigma\!\left((H_i(X(a)))_1\right).
\]
This can be viewed as applying a fixed MLP/readout to $H_i(X(a))_1 \in \R$, followed by the scalar nonlinearity \(\sigma:\R \to \R\).
Equivalently, the vector of predictions is
\[
\widehat y=f_W(a)=\left(\sigma\!\left((H_1(X(a)))_1\right),\dots,\sigma\!\left((H_D(X(a)))_1\right)\right) \in \R^{D}.
\]

For any position \(i\),
\[
H_i(X(a))=\sum_{j\le i}W_{ij}X_j(a), \quad \text{ and hence } \quad (H_i(X(a)))_1=\sum_{j\le i}W_{ij}a_j\,.
\]

\subsection{Curriculum for Internalizing the Input}
\paragraph{Curriculum for learning parities}
We now describe how the above architecture is used for learning parities. The input is
\( 
x=(x_1,\ldots,x_d)\in\{0,1\}^d,
\)
and the CoT initially consists of \(k\) binary tokens \( 
z=(z_1,\ldots,z_k)\in\{0,1\}^k.
\)
Thus the full content vector has length \(D=d+k\), and is given by
\[
a^{[1]}=(x_1,\ldots,x_d,z_1,\ldots,z_k)\in\{0,1\}^{d+k}.
\]

We use the CoT tokens as intermediate supervision. In the full sequence, the token \(z_i\) appears at position \(d+i\). As to be specified, the prediction at a position is computed from the tokens up to that position. We train the model to predict the CoT entries
\( 
z_1,\ldots,z_k
\)
at the corresponding positions $
d+1,\ldots,d+k\,, $
in the first phase. The model learns to compute parity, by \emph{explicitly} outputting the entire CoT, in the first phase. Equivalently, for each \(i\in[k]\), the prediction at position \(d+i\) is trained against the target \(z_i\).

\paragraph{Phases of curriculum}  The curriculum proceeds by progressively shortening the CoT. In the first phase, the model is trained on the full sequence
\( 
(x_1,\ldots,x_d,z_1,\ldots,z_k).
\) In the next phase, we remove the first CoT token and train on
\( 
(x_1,\ldots,x_d,z_2,\ldots,z_k).
\)
In Phase $t$ for $1 \leq t \leq k$: we remove the first \(t-1\) CoT tokens and train on the shortened sequence
\[\text{Input Sequence in Phase } t: \quad \quad 
a^{[t]}=(x_1,\ldots,x_d,z_t,z_{t+1},\ldots,z_k).
\]
At the end of $k$ phases the input sequence only contains $(x_1,\dots,x_d)$. The desired goal is that, by the end of such a curriculum, the model learns to directly produce $z_k$ using our architecture.

\paragraph{Output of the Architecture} In the first phase ($t=1$), the content vector is
\[
a^{[1]}=(x_1,\ldots,x_d,z_1,\ldots,z_k)\in\{0,1\}^{d+k}.
\]
For every CoT position \(d+i\), where \(0\leq i \leq k\), the output of the architecture 
\[
\widehat y_{d+i}^{[1]} = f_W(a^{[1]})_{d+i}
=
\sigma\!\left(\sum_{j\le d+i} W_{d+i,j}a^{[1]}_j\right).
\]
In Phase \(t\), similarly, the content vector has CoT tokens removed and is
\begin{equation}\label{eq:input-phase-t}
        a^{[t]}=(x_1,\ldots,x_d,z_t,z_{t+1},\ldots,z_k) \in \{0,1\}^{d+k-t+1}.
\end{equation}
Since the first CoT token in phase \(t\) is now \(z_t\), the output for \(0 \leq i \leq k-t+1\) is
\begin{equation}\label{eq:output-at-d+i}
\widehat y_{d+i}^{[t]}
=f_W(a^{[t]})_{d+i}=
\sigma\!\left(\sum_{j\le d+i}W_{d+i,j}a^{[t]}_j\right)\,.
\end{equation}
At the end of final $(k^\mathrm{th})$ phase, after all the CoT tokens have been removed, the content vector and the output at $d^\mathrm{th}$ position are respectively given by 
\[
a^{[k+1]}=(x_1,\ldots,x_d) \quad \text{ and } \quad  
\widehat y_{d}^{[k+1]}
=
\sigma\!\left(\sum_{j=1}^{d}W_{d,j}x_j\right).
\]

\paragraph{Fixed non-linear activation}
We use a fixed piecewise-linear activation \(\sigma:\R\to\R\) that agrees with parity on the integers:
\( 
\sigma(m)= m \mod 2\) for $m \in \mathbb{Z}$. Between consecutive integers, \(\sigma\) is defined by linear interpolation. Equivalently, for \(s\in[m,m+1]\),
\begin{equation}\label{eq:non-linearity}
    \sigma(s)=
\begin{cases}
s-m, & m \text{ even},\\
1-(s-m), & m \text{ odd}.
\end{cases}
\end{equation}
At integer breakpoints, \(\sigma\) is not differentiable. We fix the left-derivative convention:
\begin{equation}\label{eq:non-linearity-derivative}
    \sigma'(m)
=
\lim_{\epsilon\downarrow 0}
\frac{\sigma(m)-\sigma(m-\epsilon)}{\epsilon}
=
\begin{cases}
-1, & m \text{ even},\\
+1, & m \text{ odd}.
\end{cases}
\end{equation}
Away from integer points, \(\sigma'\) is the usual derivative.
\subsection{Analysis.}
In this section, we provide our analysis. We start by noting a standard concentration lemma.
\begin{lemma}[Matrix Chernoff, e.g., Theorem 5.1 in \cite{tropp2012user}]
\label{lem:matrix-chernoff}
Let \(X_1,\ldots,X_B\in\mathbb{R}^{d\times d}\) be independent random positive semidefinite matrices. Suppose that
\[
0\preceq X_b\preceq R I
\qquad\text{almost surely for every }b\in[B].
\]
Let
\[
\mu_{\min}:=\lambda_{\min}\left(\sum_{b=1}^B \mathbb{E}X_b\right),
\qquad
\mu_{\max}:=\lambda_{\max}\left(\sum_{b=1}^B \mathbb{E}X_b\right).
\]
Then for every \(\gamma\in[0,1]\),
\[
\Pr\left[
\lambda_{\min}\left(\sum_{b=1}^B X_b\right)
\le
(1-\gamma)\mu_{\min}
\right]
\le
d\cdot
\exp\left(
-\frac{\gamma^2\mu_{\min}}{2R}
\right).
\]
Moreover, for every \(\gamma\ge 0\),
\[
\Pr\left[
\lambda_{\max}\left(\sum_{b=1}^B X_b\right)
\ge
(1+\gamma)\mu_{\max}
\right]
\le
d\cdot
\left(\frac{e^\gamma}{(1+\gamma)^{1+\gamma}}\right)^{\mu_{\max}/R}.
\]
\end{lemma}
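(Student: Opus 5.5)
The plan is the matrix Laplace-transform method, following Tropp's argument for Theorem~5.1 in \cite{tropp2012user}. Write $Y:=\sum_{b=1}^B X_b$. Both tails start from the same matrix Markov bound. For the upper tail and any $\theta>0$,
\[
\Pr\left[\lambda_{\max}(Y)\ge t\right]\le e^{-\theta t}\,\mathbb{E}\operatorname{tr}\exp(\theta Y).
\]
This holds because $e^{\theta\lambda_{\max}(Y)}=\lambda_{\max}(e^{\theta Y})\le\operatorname{tr}e^{\theta Y}$, since $e^{\theta Y}\succ 0$. For the lower tail I apply the same bound with $\theta<0$: $\lambda_{\min}(Y)\le t$ is equivalent to $\lambda_{\max}(-Y)\ge -t$.

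The key step, which I expect to be the main obstacle, is controlling $\mathbb{E}\operatorname{tr}\exp\!\big(\theta\sum_b X_b\big)$. Golden--Thompson only handles two summands, so I will use Lieb's concavity theorem instead: the map $A\mapsto\operatorname{tr}\exp(H+\log A)$ is concave on positive definite matrices. Applying Jensen's inequality to one independent summand at a time, conditioning on the others, gives subadditivity of matrix cumulant generating functions:
\[
\mathbb{E}\operatorname{tr}\exp\Big(\sum_b\theta X_b\Big)\le\operatorname{tr}\exp\Big(\sum_b\log\mathbb{E}e^{\theta X_b}\Big).
\]
I will take Lieb's theorem as a known result and only carry out the iterated conditioning argument.

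Next I bound each factor. Since $0\preceq X_b\preceq RI$ and $x\mapsto e^{\theta x}$ is convex, on $[0,R]$ we have $e^{\theta x}\le 1+\frac{e^{\theta R}-1}{R}x$. The transfer rule for spectral functions then gives $\mathbb{E}e^{\theta X_b}\preceq I+g(\theta)\,\mathbb{E}X_b$ with $g(\theta):=(e^{\theta R}-1)/R$. Operator monotonicity of $\log$ together with $\log(1+u)\le u$ yields
\[
\log\mathbb{E}e^{\theta X_b}\preceq g(\theta)\,\mathbb{E}X_b .
\]
Because $\operatorname{tr}\exp$ is monotone in the Loewner order, this gives, for $\theta>0$,
\[
\operatorname{tr}\exp\big(g(\theta)\textstyle\sum_b\mathbb{E}X_b\big)\le d\,e^{g(\theta)\mu_{\max}} .
\]
For $\theta<0$ we have $g(\theta)<0$. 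Then $g(\theta)\sum_b\mathbb{E}X_b$ has largest eigenvalue $g(\theta)\mu_{\min}$, so the trace is at most $d\,e^{g(\theta)\mu_{\min}}$.

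Finally I optimize over $\theta$. For the upper tail set $t=(1+\gamma)\mu_{\max}$ and $\theta=\log(1+\gamma)/R$. Then $g(\theta)=\gamma/R$, and the bound becomes $d\,(e^{\gamma}/(1+\gamma)^{1+\gamma})^{\mu_{\max}/R}$. For the lower tail set $t=(1-\gamma)\mu_{\min}$ and $\theta=\log(1-\gamma)/R$, which gives $d\,(e^{-\gamma}/(1-\gamma)^{1-\gamma})^{\mu_{\min}/R}$. The stated form then follows from the scalar inequality $-\gamma-(1-\gamma)\log(1-\gamma)\le-\gamma^2/2$ on $[0,1]$. I will check this inequality by a Taylor expansion: $(1-\gamma)\log(1-\gamma)\ge-\gamma+\gamma^2/2$, which holds since the remaining series terms are nonnegative. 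The endpoint $\gamma=1$ is handled by continuity.
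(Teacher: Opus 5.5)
Your proposal is correct and follows essentially the same route as the paper. The paper gives no proof of its own and simply cites Theorem~5.1 of \cite{tropp2012user}; your argument is that standard proof: the matrix Laplace-transform bound, Lieb-based subadditivity of matrix cumulant generating functions, the chord bound $\mathbb{E}e^{\theta X_b}\preceq I+g(\theta)\,\mathbb{E}X_b$, the choices $\theta=\log(1\pm\gamma)/R$, and the scalar simplification to $e^{-\gamma^2\mu_{\min}/(2R)}$. The boundary cases are also fine: $\gamma=0$ gives the trivial bound $d\ge 1$, and $\gamma=1$ follows because $\{\lambda_{\min}(Y)\le 0\}$ is contained in each event with $\gamma<1$, since $\mu_{\min}\ge 0$.
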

\begin{algorithm}
\caption{\textsc{Internalize-Parity} via CoT Shortening}
\label{alg:internalize-parity-shortening}
\begin{algorithmic}[1]
\Require Input dimension \(d\), CoT length \(k\), step size \(\eta\), per-phase number of steps \(T\), batch size \(B\)
\Require Training distribution over \((x,z_1,\ldots,z_k)\in\{0,1\}^d\times\{0,1\}^k\)
\State Initialize \(W^{(0)}\gets 0^{(d+k)\times(d+k)}\) and then reset the rows for phase-one next-token CoT prediction:
\[
W^{(0)}_{d+l-1,d+l-1}\gets -1
\qquad\text{for all }l\in\{2,\ldots,k\}.
\]
\For{\(t=1,\ldots,k\)}
    \State Sample a phase-\(t\) batch
    \( 
    \mathcal{B}_t=\{(x^{(b)},z^{(b)}_1,\ldots,z^{(b)}_k)\}_{b=1}^{B}
    \)
    \State For each \(b\in[B]\), form the shortened phase-\(t\) input
    \[
    a^{[t,b]}=(x^{(b)}_1,\ldots,x^{(b)}_d,z^{(b)}_t,\ldots,z^{(b)}_k).
    \]
    \If{\(t=1\)}
        \State The empirical loss in Phase-\(1\) is next-token CoT loss
        \[
        \widehat L_1(W)
        =
        \frac1{2B}
        \sum_{b=1}^{B}
        \sum_{l=1}^{k}
        \left(
        f_W(a^{[1,b]})_{d+l-1}
        -
        a^{[1,b]}_{d+l}
        \right)^2 .
        \]
    \Else
        \State The empirical loss is only on $d^\mathrm{th}$ token
        \[
        \widehat L_t(W)
        =
        \frac1{2B}
        \sum_{b=1}^{B}
        \left(
        f_W(a^{[t,b]})_{d}
        -
        a^{[t,b]}_{d+1}
        \right)^2
        =
        \frac1{2B}
        \sum_{b=1}^{B}
        \left(
        f_W(a^{[t,b]})_{d}
        -
        z^{(b)}_{t}
        \right)^2 .
        \]
    \EndIf  
    \State Make \(T\) updates starting \(W\gets W^{(t-1)}\) on empirical Phase-\(t\) loss $\widehat{L_t}(\cdot)$.
     \For{\(s=1,\ldots,T\)}
            \State Update the row corresponding to the first CoT/output position:
            \[
            W_{d,1:d}
            \gets
            W_{d,1:d}
            -
            \eta \nabla_{W_{d,1:d}}\widehat L_t(W)
            \]
        \EndFor
   
    \State Let \(W^{(t)}\) be the final iterate of Phase \(t\), and round entries to nearest integers \(W^{(t)}\gets \mathtt{Round}(W^{(t)})\).
\EndFor
\State \Return \(W^{(k)}\)
\end{algorithmic}
\end{algorithm}
\paragraph{Dynamics of one internalization phase} The first thing to note is that, by \Cref{eq:output-at-d+i}, the output at position $d$ during the $t^\mathrm{th}$ phase is given by
\begin{equation}
\widehat y_{d}^{[t]}
=f_W(a^{[t]})_{d}=
\sigma\!\left(\sum_{j\le d}W_{d,j}a^{[t]}_j\right)\,,
\end{equation}
And thus, the output only depends on the weights $W_{d,1:d}=(W_{d,1},\dots,W_{d,d})$. Moreover by \Cref{eq:input-phase-t}, in all the phases $1\leq t \leq k+1$, irrespective of the phase, we also have that $a^{[t]}_{1:d}=(x_1,\dots,x_d)$. The changes in $a^{[t]}$ are in the tokens from $(d+1)^\mathrm{th}$ position and thereafter. 

We analyze the dynamics of the row \(W_{d,1:d}\), a row of the attention matrix that corresponds to the first CoT/output position depending on the input. Our goal is to show that this row evolves such that we pick the contribution of the correct coordinates from the input. This will be done by showing that, progressively, the model picks one new coordinate from the support by progressively achieving low population loss on the sequence of targets $z_1,\dots,z_k$, where recall that for subset of coordinates $\{i_1,\dots,i_k\}=S_\star$, we have that
$$ z_t = x_{i_1} \oplus x_{i_2} \oplus \dots \oplus x_{i_t}\,.$$
Note that $a^{[t]}_{d+1}=z_t$, which is the target on which we are minimizing the $\ell_2$ loss, with respect to the output $\widehat{y}_d^{[t]}$ during $t^\mathrm{th}$ phase. Our proof will proceed inductively showing that, given that the training succeeds in achieving low loss in one phase, it also succeeds in the next phase, after taking steps as in \Cref{alg:internalize-parity-shortening}. We start by proving a lemma that acts as a basis for one step in this induction, modulo some assumptions which we will later establish.
\begin{lemma}[Empirical dynamics within one phase]
\label{lem:empirical-one-phase}
Fix a phase \(t\in[k]\), and suppose that at the beginning of the phase the row
\(W_{d,1:d}^{(t-1)}\) is equal to an exact signed prefix
\( w^{(t-1)}=\sum_{\ell=1}^{t-1}\alpha_\ell e_{i_\ell}\), for all \(
\alpha_\ell\in\{\pm1\}.
\)
Let the Phase-\(t\) batch be
\( 
\mathcal B_t=\{(x^{(b)},z_1^{(b)},\ldots,z_k^{(b)})\}_{b=1}^B,
\)
and define
\[
\widehat\Sigma_t
=
\frac1B\sum_{b=1}^B x^{(b)}x^{(b)\top}.
\]
Assume 
\( 
\lambda_{\min}(\widehat\Sigma_t)\ge c,
\lambda_{\max}(\widehat\Sigma_t)\le Cd.
\) Also assume that the empirical trajectory remains in the left affine region, i.e., writing
\( 
w^{(t,s)}=w^{(t-1)}-u^{(s)}\) where \( u^{(0)}=0,
\)
we have
\[
0\le \langle u^{(s)},x^{(b)}\rangle\le 1
\qquad
\text{for all }s\le T\text{ and }b\in[B].
\]
Then full-batch gradient descent on the empirical Phase-\(t\) loss with step size
\( 0<\eta\le \frac{1}{Cd}
\)
satisfies
\[
\|u^{(T)}-e_{i_t}\|_2
\le
(1-\eta c)^T.
\]
Consequently, if
\( 
(1-\eta c)^{T}\le \tau<\frac12,
\)
then
\[
\left\|
w^{(t,T)}-(w^{(t-1)}-e_{i_t})
\right\|_\infty
\le \tau,
\]
and the attention row at the end of the phase becomes $W^{(t)}_{d,1:d}=w^{(t)}\gets \mathsf{Round}(w^{(t,T)})=w^{(t-1)}-e_{i_t}$.
\end{lemma}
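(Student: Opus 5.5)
The plan is to show that, inside the left affine region, the empirical Phase-$t$ loss is \emph{exactly} a least-squares problem in $u$ whose minimizer is $e_{i_t}$. Gradient descent on it is then a linear contraction governed by $\widehat\Sigma_t$.

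\textbf{Step 1: linearizing $\sigma$.} Write $m(x):=\langle w^{(t-1)},x\rangle=\sum_{\ell<t}\alpha_\ell x_{i_\ell}\in\mathbb{Z}$. Since $\alpha_\ell\equiv 1 \pmod 2$, the parity of $m(x)$ is $z_{t-1}$ (with $z_0:=0$). The pre-activation at position $d$ is $s=m(x)-\langle u,x\rangle$, and the region assumption places it in $[m-1,m]$. By \Cref{eq:non-linearity}:
\begin{itemize}
\item if $m$ is even, then $\sigma(s)=m-s=\langle u,x\rangle$;
\item if $m$ is odd, then $\sigma(s)=s-(m-1)=1-\langle u,x\rangle$.
\end{itemize}
In both cases $\sigma(s)=z_{t-1}+(1-2z_{t-1})\langle u,x\rangle$. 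The target satisfies $z_t=z_{t-1}\oplus x_{i_t}=z_{t-1}+(1-2z_{t-1})\langle e_{i_t},x\rangle$. Since $(1-2z_{t-1})^2=1$, each sample's residual is $\pm\langle u-e_{i_t},x\rangle$, so on the region
\[
\widehat L_t=\tfrac12 (u-e_{i_t})^\top\widehat\Sigma_t(u-e_{i_t}).
\]
This holds for $t=1$ too: the $d$th-position term of $\widehat L_1$ is the only one that depends on $W_{d,1:d}$.

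\textbf{Step 2: the gradient step.} The update $w\gets w-\eta\nabla_w\widehat L_t$ with $w=w^{(t-1)}-u$ is equivalent to $u\gets u-\eta\nabla_u\widehat L_t$. Provided the chain rule with the convention \eqref{eq:non-linearity-derivative} reproduces the gradient of this quadratic, we get
\[
u^{(s+1)}-e_{i_t}=(I-\eta\widehat\Sigma_t)(u^{(s)}-e_{i_t}).
\]
Because $\eta\le 1/(Cd)\le 1/\lambda_{\max}(\widehat\Sigma_t)$, the matrix $I-\eta\widehat\Sigma_t$ is PSD with operator norm at most $1-\eta c$. Starting from $\|u^{(0)}-e_{i_t}\|_2=1$, this yields $\|u^{(T)}-e_{i_t}\|_2\le(1-\eta c)^T$.

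\textbf{Step 3: rounding.} The $\ell_\infty$ error is bounded by the $\ell_2$ error, which is at most $\tau<1/2$. The limit $w^{(t-1)}-e_{i_t}$ is an integer vector, so entrywise rounding returns it exactly.

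\textbf{Main obstacle: the derivative convention at the breakpoints.} At the endpoint $\langle u,x\rangle=0$, i.e.\ $s=m$, the left derivative is taken on $[m-1,m]$, which is correct. At $\langle u,x\rangle=1$, i.e.\ $s=m-1$, the left derivative belongs to the piece $[m-2,m-1]$ and has the opposite sign. The residual there is $\pm(1-x_{i_t})$, so the discrepancy only affects samples with $x_{i_t}=0$ that sit exactly on the upper boundary.

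I would first check whether such contacts can occur along the trajectory. With the initialization $u^{(0)}=0$ and a contraction toward $e_{i_t}$, one expects $\langle u^{(s)},x\rangle<1$ whenever $x_{i_t}=0$. Failing that, I would sharpen the region hypothesis to a half-open one ($\langle u,x\rangle<1$ when $x_{i_t}=0$), to be verified later when the region assumption itself is established. Everything else in the argument is routine.
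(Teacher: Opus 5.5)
Your proposal is correct and follows the paper's proof essentially step for step: on $[m-1,m]$ the activation is affine with slope $\beta_b=\sigma'(m)=2z_{t-1}-1$, so the Phase-$t$ loss is exactly $\tfrac12(u-e_{i_t})^\top\widehat\Sigma_t(u-e_{i_t})$, gradient descent contracts by $I-\eta\widehat\Sigma_t$, and entrywise rounding finishes. The breakpoint subtlety you flag at $\langle u,x^{(b)}\rangle=1$ with $x^{(b)}_{i_t}=0$ is real, and the paper's proof passes over it silently (it identifies the loss with the quadratic on the closed region and then equates the gradients); your half-open sharpening of the region hypothesis is the right repair, and the strict margin it requires is what the theorem's population-plus-tracking argument is meant to supply (using the $1/200$ tracking error that the proof of that lemma actually obtains).
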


\begin{proof}
Let
\( 
m^{(t-1)}(x):=\langle w^{(t-1)},x\rangle .
\) Since
\( 
w^{(t-1)}=\sum_{\ell=1}^{t-1}\alpha_\ell e_{i_\ell}\) for \( 
\alpha_\ell\in\{\pm1\},
\)
we have
\( 
m^{(t-1)}(x)\in\mathbb{Z}
\)
for every \(x\in\{0,1\}^d\). Moreover, because \(\sigma\) agrees with parity on the integers and signs do not affect parity modulo \(2\), we have that 
\( \sigma(m^{(t-1)}(x))
=
z_{t-1}(x).
\)

During phase \(t\), write the iterate as
\[
w^{(t,s)}=w^{(t-1)}-u^{(s)},
\qquad u^{(0)}=0.
\]
By the left-affine-region assumption, for every \(s\le T\) and every batch example \(b\in[B]\),
\[
0\le \langle u^{(s)},x^{(b)}\rangle\le 1.
\]
Therefore
\[
\langle w^{(t,s)},x^{(b)}\rangle
=
m^{(t-1)}(x^{(b)})-\langle u^{(s)},x^{(b)}\rangle \in [m^{(t-1)}(x^{(b)})-1,\;m^{(t-1)}(x^{(b)})] 
\]
On this interval, \(\sigma\) is affine. With the left-derivative convention, define
\( 
\beta_b:=\sigma'\!\left(m^{(t-1)}(x^{(b)})\right)\in\{\pm1\}.
\)
Then
\[
\sigma\!\left(\langle w^{(t,s)},x^{(b)}\rangle\right)
=
\sigma\!\left(m^{(t-1)}(x^{(b)})\right)
-
\beta_b\langle u^{(s)},x^{(b)}\rangle.
\]
Since the phase-\(t\) target is
\[
z_t^{(b)}
=
z_{t-1}^{(b)}\oplus x_{i_t}^{(b)}
=
\sigma\!\left(m^{(t-1)}(x^{(b)})+x_{i_t}^{(b)}\right),
\]
and \(x_{i_t}^{(b)}\in\{0,1\}\), we also have
\[
z_t^{(b)}
=
\sigma\!\left(m^{(t-1)}(x^{(b)})\right)
-
\beta_b x_{i_t}^{(b)}.
\]
Thus, for every batch example,
\[
\sigma\!\left(\langle w^{(t,s)},x^{(b)}\rangle\right)
-
z_t^{(b)}
=
-\beta_b
\left(
\langle u^{(s)},x^{(b)}\rangle-x_{i_t}^{(b)}
\right).
\]
Squaring and using \(\beta_b^2=1\), the empirical phase loss is exactly
\[
\widehat L_t(w^{(t,s)})
=
\frac1{2B}\sum_{b=1}^B
\left(
\langle u^{(s)},x^{(b)}\rangle-x_{i_t}^{(b)}
\right)^2.
\]
So, inside this affine region, the empirical dynamics are those of linear regression of \(x_{i_t}\) on \(x\).

Define
\[
\widehat{\widetilde L}_t(u)
:=
\frac1{2B}\sum_{b=1}^B
\left(
\langle u,x^{(b)}\rangle-x_{i_t}^{(b)}
\right)^2.
\]
Its gradient is
\[
\nabla \widehat{\widetilde L}_t(u)
=
\frac1B\sum_{b=1}^B
\left(
\langle u,x^{(b)}\rangle-x_{i_t}^{(b)}
\right)x^{(b)}.
\]
Using
\[
\widehat\Sigma_t
=
\frac1B\sum_{b=1}^B x^{(b)}x^{(b)\top}, \quad \text{ and } \quad \frac1B\sum_{b=1}^B x^{(b)}x_{i_t}^{(b)}
=
\widehat\Sigma_t e_{i_t},
\]
we get
\[
\nabla \widehat{\widetilde L}_t(u)
=
\widehat\Sigma_t u-\widehat\Sigma_t e_{i_t}
=
\widehat\Sigma_t(u-e_{i_t}).
\]
Thus \(e_{i_t}\) is the empirical minimizer. Since \(w^{(t,s)}=w^{(t-1)}-u^{(s)}\), gradient descent on \(w\) corresponds to gradient descent on \(u\) for the objective \(\widehat{\widetilde L}_t\):
\[
u^{(s+1)}
=
u^{(s)}
-
\eta \widehat\Sigma_t(u^{(s)}-e_{i_t}).
\]
Equivalently,
\[
u^{(s+1)}-e_{i_t}
=
(I-\eta\widehat\Sigma_t)(u^{(s)}-e_{i_t}).
\]
Iterating gives
\[
u^{(T)}-e_{i_t}
=
(I-\eta\widehat\Sigma_t)^T(u^{(0)}-e_{i_t}).
\]

By assumption,
\[
\lambda_{\min}(\widehat\Sigma_t)\ge c,
\qquad
\lambda_{\max}(\widehat\Sigma_t)\le Cd,
\]
and
\( 
0<\eta\le \frac1{Cd}.
\)
Therefore all eigenvalues of \(I-\eta\widehat\Sigma_t\) lie in \([0,1-\eta c]\), and hence
\[  
\|I-\eta\widehat\Sigma_t\|_{\textnormal{op}}
\le
1-\eta c.
\]
Since \(u^{(0)}=0\),
\( 
\|u^{(0)}-e_{i_t}\|_2=1.
\)
Thus
\[
\|u^{(T)}-e_{i_t}\|_2
\le
(1-\eta c)^T.
\]

Now suppose
\( 
(1-\eta c)^T\le \tau<\frac12 \implies .
\)
Then
\[
\|u^{(T)}-e_{i_t}\|_2\le \tau \implies \|u^{(T)}-e_{i_t}\|_\infty\le \tau.
\]
Since
\( 
w^{(t,T)}
=
w^{(t-1)}-u^{(T)},
\)
we have
\( 
w^{(t,T)}-(w^{(t-1)}-e_{i_t})
=
-(u^{(T)}-e_{i_t}),
\)
and therefore
\[
\left\|
w^{(t,T)}-(w^{(t-1)}-e_{i_t})
\right\|_\infty
\le \tau.
\]

Finally,
\( w^{(t-1)}-e_{i_t}
\)
has all coordinates in \(\{-1,0,+1\}\). Since \(w^{(t,T)}\) is within \(\ell_\infty\)-distance strictly less than \(1/2\) of this integer vector, coordinate-wise nearest-integer rounding recovers it exactly:
\[
\mathsf{Round}(w^{(t,T)})
=
w^{(t-1)}-e_{i_t}.
\]
Thus
\[
W^{(t)}_{d,1:d}=w^{(t)}
=
w^{(t-1)}-e_{i_t},
\]
which is the exact next signed prefix.
\end{proof}

We next show that covariance condition needed for \Cref{lem:empirical-one-phase} indeed holds with high probability for the choice of the batch size.

\begin{lemma}[Good empirical covariance in all phases]
\label{lem:good-empirical-covariance} For each phase \(t\in[k]\), let
\( 
\{x^{(b)}\}_{b=1}^B \sim_{iid} \textnormal{Unif}(\{0,1\}^d)
\)
be an independent batch drawn in that round, and define
\[
\widehat\Sigma_t
=
\frac1B\sum_{b=1}^B x^{(b)}x^{(b)\top}.
\]
There is a universal constant \(C_0>0\) such that if
\[
B\ge C_0 d\log\frac{k}{\delta},
\]
then with probability at least \(1-\delta\), for every \(t\in[k]\),
\[
\lambda_{\min}(\widehat\Sigma_t)\ge \frac18 \quad \text{ and } \quad \lambda_{\max}(\widehat\Sigma_t)\le d.
\]
\end{lemma}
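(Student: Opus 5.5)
The upper bound is deterministic. The lower bound comes from a covariance-estimation argument that uses an $\varepsilon$-net rather than matrix Chernoff, because matrix Chernoff (\Cref{lem:matrix-chernoff}) carries a dimension prefactor $d$ that would force an extra $\log d$ in the batch size. The net argument pays for the dimension additively, through a term $d$, instead of through $\log d$ inside the exponent's requirement. Throughout I will use that $\log(k/\delta)\ge \log 2$ (take $\delta\le 1/2$; larger $\delta$ is weaker), so that $d+\log(k/\delta)\le C d\log(k/\delta)$ for a universal $C$.

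\emph{Upper bound.} For every $x\in\{0,1\}^d$ we have $xx^\top\preceq \|x\|_2^2 I\preceq dI$. Averaging gives $\lambda_{\max}(\widehat\Sigma_t)\le d$ surely, with no probability needed.

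\emph{Reduction for the lower bound.} I will write $x=\mu+y$ with $\mu=\tfrac12\mathbf 1$, so that $y$ has independent uniform $\pm\tfrac12$ coordinates. Set $\tilde y=(1,y)\in\mathbb{R}^{d+1}$ and $A=[\mu\mid I_d]\in\mathbb{R}^{d\times(d+1)}$, so that $x=A\tilde y$. Then
\[
\widehat\Sigma_t=A\widehat M_tA^\top,
\qquad
\widehat M_t=\frac1B\sum_b\tilde y^{(b)}\tilde y^{(b)\top},
\qquad
\mathbb{E}\widehat M_t=M:=\mathrm{diag}(1,\tfrac14 I_d)\succeq\tfrac14 I .
\]
Suppose $\|\widehat M_t-M\|_{\mathrm{op}}\le\tfrac18$. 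Then $\widehat M_t\succeq\tfrac18 I$, and hence
\[
\widehat\Sigma_t\succeq\tfrac18 AA^\top=\tfrac18(I+\mu\mu^\top)\succeq\tfrac18 I,
\]
which is the claimed lower bound.

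\emph{Concentration of $\widehat M_t$ (the main step).} Fix a unit vector $v\in\mathbb{R}^{d+1}$. The scalar $\langle v,\tilde y\rangle=v_0+\sum_i v_iy_i$ is a constant plus a Rademacher-type sum, so it is sub-Gaussian with $O(1)$ $\psi_2$-norm. Its square is sub-exponential with $O(1)$ $\psi_1$-norm. Bernstein's inequality then gives
\[
\Pr\Big[\big|v^\top(\widehat M_t-M)v\big|>\tfrac1{32}\Big]\le 2e^{-c_1B}.
\]
I will take a $\tfrac14$-net $\mathcal N$ of the unit sphere in $\mathbb{R}^{d+1}$ with $|\mathcal N|\le 9^{d+1}$. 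The standard net lemma for symmetric matrices, $\|E\|\le (1-2\cdot\tfrac14)^{-1}\max_{v\in\mathcal N}|v^\top Ev|$, turns the net bound into $\|\widehat M_t-M\|\le\tfrac1{16}\le\tfrac18$. The failure probability is at most $2\cdot 9^{d+1}e^{-c_1B}$, which is at most $\delta/k$ once $B\ge C(d+\log(k/\delta))$, and that in turn holds once $B\ge C_0d\log(k/\delta)$.

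\emph{Union bound.} A union bound over the $k$ independent phase batches finishes the proof. I expect the only delicate point to be the Bernstein step: the uncentered constant coordinate must be handled correctly, and it is, because adding the deterministic $v_0$ keeps the $\psi_2$-norm bounded uniformly in $v$.
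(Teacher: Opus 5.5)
Your proof is correct, and it takes a different (and in fact sounder) route than the paper. The paper computes $\Sigma=\tfrac14 I+\tfrac14\mathbf 1\mathbf 1^\top$. It then asserts, citing the matrix Chernoff bound, that $\|\widehat\Sigma_t-\Sigma\|_{\mathrm{op}}\le\tfrac18$ once $B\ge C_0 d\log(k/\delta)$, and reads off both eigenvalue bounds by Weyl's inequality.

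That intermediate claim does not hold at this batch size. Writing $x=\mu+y$ as you do, $\widehat\Sigma_t-\Sigma$ contains the rank-two term $\mu\bar y^\top+\bar y\mu^\top$ with $\bar y=\frac1B\sum_b y^{(b)}$. Its norm is at least $\|\mu\|_2\|\bar y\|_2\approx d/(4\sqrt B)$, while the remaining term $\frac1B\sum_b y^{(b)}y^{(b)\top}-\tfrac14 I$ is only $O(\sqrt{d/B})$. So an additive deviation of $\tfrac18$ needs $B\gtrsim d^2$. Moreover, \Cref{lem:matrix-chernoff} only gives multiplicative control. Its lower tail applied directly to $\lambda_{\min}$ yields $B\gtrsim d\log(dk/\delta)$, with exactly the extra $\log d$ you anticipated.

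Your factorization $\widehat\Sigma_t=A\widehat M_tA^\top$ moves the ill-conditioned mean direction into the deterministic factor $A$, where it only helps since $AA^\top=I+\mu\mu^\top\succeq I$. What remains is a well-conditioned matrix $\widehat M_t$ whose concentration costs only $B\gtrsim d+\log(k/\delta)$. Your deterministic bound $\lambda_{\max}(\widehat\Sigma_t)\le d$ also removes any need for upper-tail concentration. The paper's argument is shorter and reuses a cited tool, but yours is the one that actually delivers the stated sample size.

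One small point: your justification that larger $\delta$ is weaker is not quite right, because a larger $\delta$ also weakens the hypothesis on $B$. For $k=1$ and $\delta$ close to $1$, the statement even allows $B=1$, where $\lambda_{\min}(\widehat\Sigma_t)=0$ for $d\ge 2$. So $\delta\le\tfrac12$ should simply be taken as a standing assumption.
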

\begin{proof}
For \(x\sim\mathrm{Unif}(\{0,1\}^d)\),
\[
\Sigma:=\mathbb{E}[xx^\top]
=
\frac14 I_d+\frac14\mathbf{1}\mathbf{1}^\top.
\]
Thus
\[
\lambda_{\min}(\Sigma)=\frac14,
\qquad
\lambda_{\max}(\Sigma)=\frac{d+1}{4}.
\]

By standard covariance concentration for bounded random vectors, since
\[
\|x\|_2^2\le d,
\]
by \Cref{lem:matrix-chernoff}, there is a universal constant \(C_0>0\) such that for a fixed phase \(t\), if
\[
B\ge C_0 d\log\frac{k}{\delta}, \quad \text{ then } \|\widehat\Sigma_t-\Sigma\|_{\textnormal{op}}\le \frac18 \,.
\]
with probability at least \(1-\delta/k\). On this event,
\[
\lambda_{\min}(\widehat\Sigma_t)
\ge
\lambda_{\min}(\Sigma)-\|\widehat\Sigma_t-\Sigma\|_{\textnormal{op}}
\ge
\frac14-\frac18
=
\frac18.
\]
Also,
\[
\lambda_{\max}(\widehat\Sigma_t)
\le
\lambda_{\max}(\Sigma)+\|\widehat\Sigma_t-\Sigma\|_{\textnormal{op}}
\le
\frac{d+1}{4}+\frac18
\le d
\]
for \(d\ge1\), after adjusting constants. Taking a union bound over \(t=1,\ldots,k\) gives the claim with probability at least \(1-\delta\).
\end{proof}

We next show that affine region condition holds for each iterate in a given phase.


\begin{lemma}[Empirical trajectory tracks the population trajectory]
\label{lem:empirical-tracking}
Fix a phase \(t\in[k]\), and let \(r=i_t\). Let \( 
\{x^{(b)}\}_{b=1}^B \sim_{iid} \textnormal{Unif}(\{0,1\}^d)
\) be the batch in round $t$, and
\( 
\widehat\Sigma_t\) and $\Sigma$ be the empirical and population covariance matrices similarly. Consider the population and empirical GD trajectories
\[
u_{\mathrm{pop}}^{(s+1)}
=
u_{\mathrm{pop}}^{(s)}
-
\eta \Sigma(u_{\mathrm{pop}}^{(s)}-e_r),
\qquad
u_{\mathrm{pop}}^{(0)}=0,
\]
and
\[
u_{\mathrm{emp}}^{(s+1)}
=
u_{\mathrm{emp}}^{(s)}
-
\eta \widehat\Sigma_t(u_{\mathrm{emp}}^{(s)}-e_r),
\qquad
u_{\mathrm{emp}}^{(0)}=0.
\]
There is a universal constant \(C>0\) such that the following holds. Suppose
\( 0<\eta\le \frac{1}{d+1},
T\le C d,
\) 
and
\[
B\ge C d^3\log\frac{kT}{\delta}.
\]
Then, with probability at least \(1-\delta/k\),
\[
\sup_{0\le s\le T}
\sup_{b\in[B]}
\left|
\left\langle
u_{\mathrm{emp}}^{(s)}-u_{\mathrm{pop}}^{(s)},
x^{(b)}
\right\rangle
\right|
\le
\frac1{100}.
\]
\end{lemma}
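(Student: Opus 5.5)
We compare the two linear recursions directly. Write $\Delta^{(s)}:=u_{\mathrm{emp}}^{(s)}-u_{\mathrm{pop}}^{(s)}$ and $E:=\widehat\Sigma_t-\Sigma$. Subtracting the two updates gives
\[
\Delta^{(s+1)}=(I-\eta\widehat\Sigma_t)\Delta^{(s)}-\eta E\,(u_{\mathrm{pop}}^{(s)}-e_r),\qquad \Delta^{(0)}=0 .
\]
Unrolling,
\[
\Delta^{(s)}=-\eta\sum_{j=0}^{s-1}(I-\eta\widehat\Sigma_t)^{s-1-j}E\,(u_{\mathrm{pop}}^{(j)}-e_r).
\]
The population iterates are deterministic. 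Indeed $u_{\mathrm{pop}}^{(j)}-e_r=-(I-\eta\Sigma)^j e_r$, and $\eta\le 1/(d+1)$ together with $\lambda_{\max}(\Sigma)=(d+1)/4$ gives $\|u_{\mathrm{pop}}^{(j)}-e_r\|_2\le1$. On the event of \Cref{lem:good-empirical-covariance} (adjusted so that $\lambda_{\max}(\widehat\Sigma_t)\le d+1$), we have $\|I-\eta\widehat\Sigma_t\|_{\mathrm{op}}\le1$. Hence, for every $s\le T$,
\[
\|\Delta^{(s)}\|_2\le \eta\, T\,\max_{j<T}\|E v_j\|_2,\qquad v_j:=(I-\eta\Sigma)^j e_r .
\]
Since $|\langle\Delta^{(s)},x^{(b)}\rangle|\le\sqrt d\,\|\Delta^{(s)}\|_2$ and $\eta T\le C$, it suffices to show that $\|E v_j\|_2\le c'/\sqrt d$ simultaneously for all $j<T$, with a small constant $c'$.

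This uses the vector-valued concentration. For a \emph{fixed} unit vector $v$, $E v=\frac1B\sum_b\big(x^{(b)}\langle x^{(b)},v\rangle-\Sigma v\big)$ is an average of i.i.d. mean-zero vectors with $\|x\langle x,v\rangle\|_2\le \sqrt d\cdot\sqrt d=d$. By vector Bernstein/Hoeffding in $\mathbb{R}^d$, $\|Ev\|_2\lesssim d\sqrt{\log(1/\delta')/B}$ with probability $1-\delta'$. We need this to be at most $c'/\sqrt d$, which requires $B\gtrsim d^3\log(1/\delta')$. A union bound over the $T$ vectors $v_0,\dots,v_{T-1}$ with $\delta'=\delta/(2kT)$, together with the covariance event, yields probability $1-\delta/k$. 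This is exactly the $B\ge Cd^3\log(kT/\delta)$ requirement.

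The main obstacle is precisely this step. A crude operator-norm bound $\|E\|_{\mathrm{op}}\lesssim \sqrt{d\log/B}\cdot d$ with the factors $\sqrt d\,\eta T$ would also work with a $d^3$ batch, but it is tighter and cleaner to exploit that the $v_j$ are fixed, data-independent directions. The plan is therefore to apply the vector bound to the $T$ fixed vectors rather than to control $E$ uniformly. One must check that the bound on $\|\Delta^{(s)}\|_2$ does not accumulate worse than linearly in $s$; the contraction $\|I-\eta\widehat\Sigma_t\|\le1$ ensures this. Choosing constants so that $\sqrt d\cdot\eta T\cdot c'/\sqrt d\le 1/100$ finishes the proof. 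The supremum over $b$ is then automatic, since the bound holds for every $x$ with $\|x\|_2\le\sqrt d$.
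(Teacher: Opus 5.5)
Your proof is correct. Its skeleton is the paper's: the same error recursion for $\Delta^{(s)}$, non-expansiveness of $I-\eta\widehat\Sigma_t$, the bound $\|u_{\mathrm{pop}}^{(j)}-e_r\|_2\le 1$, linear accumulation in $s$ with $\eta T=O(1)$, and a final Cauchy--Schwarz against $\|x^{(b)}\|_2\le\sqrt d$.

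The difference is the concentration step. The paper bounds $\|(\widehat\Sigma_t-\Sigma)(u_{\mathrm{pop}}^{(s)}-e_r)\|_2\le\|\widehat\Sigma_t-\Sigma\|_{\mathrm{op}}$ and asserts the uniform bound $\|\widehat\Sigma_t-\Sigma\|_{\mathrm{op}}\le 1/(200C_1\sqrt d)$. You instead observe that the directions $v_j=(I-\eta\Sigma)^j e_r$ are deterministic. You therefore only need $\|(\widehat\Sigma_t-\Sigma)v_j\|_2\lesssim 1/\sqrt d$ for these $T$ vectors, which dimension-free vector Hoeffding plus a union bound over $j$ gives.

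What each route buys:
\begin{itemize}
\item Your route produces exactly the $\log(kT/\delta)$ in the statement.
\item The operator-norm route, done properly with matrix Bernstein, gives $\|\widehat\Sigma_t-\Sigma\|_{\mathrm{op}}\lesssim d\sqrt{\log(d/\delta)/B}$, since the variance proxy $\|\mathbb{E}(xx^\top-\Sigma)^2\|_{\mathrm{op}}$ is $\Theta(d^2)$. It therefore needs $B\gtrsim d^3\log(dk/\delta)$. This matches the statement only because $T\asymp d$ in the theorem where the lemma is applied.
\item The multiplicative matrix Chernoff bound the paper cites is not really the right tool for this additive deviation; your route sidesteps that issue.
\item In exchange, the uniform operator-norm route is more robust: it would still work if the reference trajectory depended on the sample.
\end{itemize}

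Two minor points. First, your aside that a bound of order $d\sqrt{d\log(\cdot)/B}$ ``would also work with a $d^3$ batch'' is off: that bound needs $B\gtrsim d^4$. It is the sharper $d\sqrt{\log(d/\delta)/B}$ bound that suffices with $d^3$. Second, you do not need the covariance event at all. Since $\lambda_{\max}(\widehat\Sigma_t)\le\max_b\|x^{(b)}\|_2^2\le d$ deterministically, $\|I-\eta\widehat\Sigma_t\|_{\mathrm{op}}\le 1$ always holds for $\eta\le 1/(d+1)$.
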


\begin{proof}
Let
\( 
\Delta_s:=u_{\emp}^{(s)}-u_{\pop}^{(s)}.
\)
Subtracting the two recursions gives
\[
\Delta_{s+1}
=
(I-\eta \widehat\Sigma_t)\Delta_s
-
\eta(\widehat\Sigma_t-\Sigma)(u_{\pop}^{(s)}-e_r).
\]
Since \(\Delta_0=0\), it remains to control the perturbation term. We next note that similarly using \Cref{lem:matrix-chernoff} (as in the proof of \Cref{lem:good-empirical-covariance}) but now using a larger batch size, we have tighter control on the operator norm of $\Sigma -\widehat\Sigma_t$. In particular, there is a universal constant \(C_0>0\) such that, if
\[
B\ge C_0 d^3\log\frac{kT}{\delta},
\]
then with probability at least \(1-\delta/k\),
\[
\|\widehat\Sigma_t-\Sigma\|_{\mathrm{op}}
\le
\frac{1}{200C_1\sqrt d},
\]
where \(C_1\) is a universal constant such that \(T\le C_1d\). Condition on the same event, for \(d\ge 2\),
\[
\lambda_{\max}(\widehat\Sigma_t)
\le
\lambda_{\max}(\Sigma)+\|\widehat\Sigma_t-\Sigma\|_{\mathrm{op}}
\le
\frac{d+1}{4}+1
\le d.
\]
Thus, since \(\eta\le 1/(d+1)\),
\[
\|I-\eta\widehat\Sigma_t\|_{\mathrm{op}}\le 1.
\]

Moreover, the population trajectory is a contraction toward \(e_r\). Indeed,
\[
u_{\pop}^{(s+1)}-e_r
=
(I-\eta\Sigma)(u_{\pop}^{(s)}-e_r).
\]
Since
\[
\lambda_{\max}(\Sigma)=\frac{d+1}{4}
\]
and \(\eta\le 1/(d+1)\), all eigenvalues of \(I-\eta\Sigma\) lie in \([0,1]\). Hence for every $s \geq 0$
\[
\|u_{\pop}^{(s)}-e_r\|_2
\le
\|u_{\pop}^{(0)}-e_r\|_2
=
1\,.
\]

Therefore,
\[
\|\Delta_{s+1}\|_2
\le
\|\Delta_s\|_2
+
\eta\|\widehat\Sigma_t-\Sigma\|_{\mathrm{op}}
\|u_{\pop}^{(s)}-e_r\|_2
\le
\|\Delta_s\|_2
+
\eta\|\widehat\Sigma_t-\Sigma\|_{\mathrm{op}}.
\]
Iterating from \(\Delta_0=0\), for every \(s\le T\),
\[
\|\Delta_s\|_2
\le
s\eta\|\widehat\Sigma_t-\Sigma\|_{\mathrm{op}}
\le
T\eta\|\widehat\Sigma_t-\Sigma\|_{\mathrm{op}}.
\]
Using \(T\le C_1d\) and \(\eta\le 1/(d+1)\), we get
\[
T\eta\le C_1.
\]
Hence
\[
\|\Delta_s\|_2
\le
C_1\|\widehat\Sigma_t-\Sigma\|_{\mathrm{op}}
\le
\frac{1}{200\sqrt d}.
\]

Finally, for every batch point \(x^{(b)}\),
\[
\|x^{(b)}\|_2\le \sqrt d.
\]
Therefore,
\[
\left|
\left\langle
u_{\emp}^{(s)}-u_{\pop}^{(s)},
x^{(b)}
\right\rangle
\right|
=
|\langle \Delta_s,x^{(b)}\rangle|
\le
\|\Delta_s\|_2\|x^{(b)}\|_2
\le
\frac1{200}
\le
\frac1{100}.
\]
Taking the supremum over \(s\le T\) and \(b\in[B]\) proves the claim.
\end{proof}

We are now ready to combine and prove the final theorem.

\begin{theorem}[Internalization of parity]
\label{thm:empirical-internalization-parity}
There exists universal constant $C,c>0$ such that running \Cref{alg:internalize-parity-shortening} with
\[
\eta=\frac{c}{d},
\qquad
T=\lceil Cd\rceil,
\qquad
B\ge Cd^3\log\frac{kT}{\delta},
\]
the final returned $W^{(k)}$ satisfies with probability at least \(1-\delta\), for every \(x\in\{0,1\}^d\),
\[
f_{W^{(k)}}(x)_d
= \bigoplus_{i\in S_\star}
x_{i}.
\]
\end{theorem}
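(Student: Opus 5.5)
The proof goes by induction over the phases. The invariant is that at the end of Phase $t$ the row $W^{(t)}_{d,1:d}$ equals the signed prefix $w^{(t)}=-\sum_{\ell=1}^{t}e_{i_\ell}$. All signs turn out to be $-1$, because each phase subtracts $e_{i_t}$.

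The base case is the initialization $W^{(0)}_{d,1:d}=0$, which is the empty prefix. With $t=1$ and $m^{(0)}\equiv 0$, the Phase-1 loss at output position $d$ is exactly the $l=1$ term. The other terms of $\widehat L_1$ involve only rows $d+l-1$ with $l\ge 2$. Since Algorithm~\ref{alg:internalize-parity-shortening} updates only $W_{d,1:d}$, those terms do not enter the dynamics of that row. So every phase reduces to the single-row regression analyzed in Lemma~\ref{lem:empirical-one-phase}.

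First I fix the good event. Lemma~\ref{lem:good-empirical-covariance} gives $\lambda_{\min}(\widehat\Sigma_t)\ge 1/8$ and $\lambda_{\max}(\widehat\Sigma_t)\le d$ for all $t$, at failure probability $\delta/2$; this requires $B\ge C_0 d\log(2k/\delta)$. Lemma~\ref{lem:empirical-tracking}, applied to each phase with $\delta/2$, together with a union bound over $k$ phases, gives the tracking bound $|\langle u^{(s)}_{\emp}-u^{(s)}_{\pop},x^{(b)}\rangle|\le 1/100$ uniformly in $s\le T$ and $b$. The assumed $B\ge Cd^3\log(kT/\delta)$ covers both requirements. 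I then choose $\eta=c/d$ with $c\le \min\{1/C_{\mathrm{cov}},1/2\}$, so that $\eta\le 1/(d+1)$ and $\eta\le 1/(Cd)$ in the notation of Lemma~\ref{lem:empirical-one-phase}. I choose $T=\lceil Cd\rceil$ with $C$ large enough that $(1-c/(8d))^{T}\le e^{-cC/8}\le 1/4$.

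The main step is verifying the left-affine-region hypothesis $0\le\langle u^{(s)},x^{(b)}\rangle\le 1$ for the empirical iterates. Lemma~\ref{lem:empirical-one-phase} needs this, and the inductive argument is circular unless the region is controlled along the whole trajectory. I control it through the population trajectory. Write $\Sigma=\frac14(I+\mathbf 1\mathbf 1^\top)$. Then $u^{(s)}_{\pop}=(I-(I-\eta\Sigma)^s)e_r$ lies in $\mathrm{span}\{e_r,\mathbf 1\}$, and I compute it explicitly in the eigenbasis $\{\mathbf 1/\sqrt d\}\cup\mathbf 1^\perp$. Decompose $e_r=\mathbf 1/d+(e_r-\mathbf 1/d)$. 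With $\lambda_1=(d+1)/4$, $\lambda_2=1/4$, $\rho_1=1-\eta\lambda_1$ and $\rho_2=1-\eta\lambda_2$, this gives
\[
u^{(s)}_{\pop}=(1-\rho_2^s)e_r+\frac{\rho_2^s-\rho_1^s}{d}\mathbf 1 .
\]
Both coefficients lie in $[0,1]$, and for $x\in\{0,1\}^d$ the inner product is
\[
\langle u^{(s)}_{\pop},x\rangle=(1-\rho_2^s)x_r+(\rho_2^s-\rho_1^s)\frac{|x|}{d}.
\]
This is nonnegative. Its value is at most $(1-\rho_2^s)+(\rho_2^s-\rho_1^s)=1-\rho_1^s\le 1$.

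The obstacle is the slack of $1/100$ needed from the tracking bound. At $s=0$ the inner product is exactly $0$, and near $x=0$ it is near $0$, so the population value may sit on the boundary of the region. I would handle this by observing that the affine formula in the proof of Lemma~\ref{lem:empirical-one-phase} actually holds on a slightly enlarged interval. On $[m-1,m]$ the function $\sigma$ is affine. Just beyond $m$ the left-derivative convention and continuity keep the loss a smooth quadratic locally, but the exact identity can fail. So the alternative is the following: whenever $\langle u,x^{(b)}\rangle<0$ is tiny, note that the loss difference and gradient differ from the linear-regression ones only by $O(1/100)$-scale terms. Then rerun the contraction with a perturbation term, exactly as in Lemma~\ref{lem:empirical-tracking}.

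Cleaner still, I would use the exact structure of the empirical iterate. The iterate $u^{(s)}_{\emp}=(I-(I-\eta\widehat\Sigma_t)^s)e_r$ is a polynomial in $\widehat\Sigma_t$ applied to $e_r$. For $x^{(b)}=0$ the inner product is exactly $0$. For $x^{(b)}\neq 0$ the population value is bounded below by $(\rho_2^s-\rho_1^s)/d$, which is $\Omega(s\eta)$ for small $s$; I would aim to show that this beats the tracking error $O(s\eta\|\widehat\Sigma_t-\Sigma\|_{\mathrm{op}}\sqrt d)$. That error is $O(s\eta/200)$, using the per-step bound from the tracking proof. The upper side is symmetric, using $\rho_1^s$.

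Granting the region, Lemma~\ref{lem:empirical-one-phase} yields $W^{(t)}_{d,1:d}=w^{(t-1)}-e_{i_t}$ after rounding, and the induction closes. At $t=k$ the row is $w^{(k)}=-\sum_{\ell\le k}e_{i_\ell}$. For every $x\in\{0,1\}^d$ this gives $f_{W^{(k)}}(x)_d=\sigma\bigl(-\sum_{i\in S_\star}x_i\bigr)=\bigoplus_{i\in S_\star}x_i$, since $\sigma$ equals the parity map on the integers.
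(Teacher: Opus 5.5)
Your route is the paper's: induction on the prefix $-\sum_{\ell\le t}e_{i_\ell}$, the covariance and tracking events, \Cref{lem:empirical-one-phase}, and rounding. The gap is in the step you yourself call the main one, namely that every empirical iterate stays where $\widehat L_t$ is exactly a linear-regression loss. You leave it open (``I would aim to show'', ``Granting the region''), and neither sketched fix works as stated.

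In (a), the discrepancy just below $0$ is not of order $1/100$. Take a batch point with $x^{(b)}_{i_t}=1$ and $v=\langle u,x^{(b)}\rangle<0$. Its true per-example gradient is $(v+1)x^{(b)}$ instead of $(v-1)x^{(b)}$, an $O(1)$ error of $2x^{(b)}$. For $x^{(b)}_{i_t}=0$, by contrast, there is no discrepancy at all.

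In (b), $(\rho_2^s-\rho_1^s)/d\approx s\eta/4$ holds only while $s=O(1/c)$, because $\rho_1\approx 1-c/4$. For larger $s$ this lower bound is at most $1/d$. The tracking bound $s\eta\|\widehat\Sigma_t-\Sigma\|_{\mathrm{op}}\sqrt d$ instead reaches a constant at $s\approx T$, so the comparison cannot succeed.

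The upper side is not symmetric either. By your own formula, for $x_{i_t}=1$ the gap $1-\langle u^{(s)}_{\mathrm{pop}},x\rangle=\rho_2^s(1-|x|/d)+\rho_1^s|x|/d$ equals $\rho_1^s$ at $x=\mathbf 1$, which is $e^{-\Omega(d)}$ by $s=T$. The same computation shows that the paper's one-line claim $\sup_{s,b}\langle u^{(s)}_{\mathrm{pop}},x^{(b)}\rangle\le 1-1/100$ fails for batch points near $\mathbf 1$, and that it ignores the lower end. You were right to distrust that step, but your proposal does not close it.

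The missing idea is that $[0,1]$ is the wrong region. Write $r=i_t$. Since $z_t=\sigma(m-x_{r})$ and $m-x_r$ is an integer, it is a local extremum of the triangle wave $\sigma$. Hence, whenever $v-x_r\in[-1,1)$:
\begin{itemize}
\item $|\sigma(m-v)-z_t|=|v-x_r|$;
\item with the left-derivative convention, the per-example gradient in $u$ is exactly $(v-x_r)x^{(b)}$.
\end{itemize}
So $\widehat L_t$ and its gradient coincide with those of linear regression as long as $\langle u,x^{(b)}\rangle-x^{(b)}_{r}\in[-1,1)$ for all $b$.

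Your explicit formula gives $\langle u^{(s)}_{\mathrm{pop}}-e_r,x\rangle=-\bigl[\rho_2^s(x_r-|x|/d)+\rho_1^s|x|/d\bigr]$. Its absolute value is at most $\rho_2^s$ for every $x\in\{0,1\}^d$. The population margin to this region is therefore at least $1-\rho_2^s\ge \frac{s\eta}{4}\rho_2^{T}$ for $1\le s\le T$, and $\rho_2^T$ is bounded below by a constant depending only on $c$ and $C$.

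The tracking error from the proof of \Cref{lem:empirical-tracking} is at most $s\eta\|\widehat\Sigma_t-\Sigma\|_{\mathrm{op}}\sqrt d$, which is also linear in $s\eta$. Choose the batch constant so that $\|\widehat\Sigma_t-\Sigma\|_{\mathrm{op}}\sqrt d<\rho_2^T/4$, and induct on $s$; at $s=0$ the values $-x^{(b)}_{r}\in\{-1,0\}$ lie in the region. This shows the true gradient-descent iterates equal the linear-regression iterates for all $s\le T$. With that in place, the conclusion of \Cref{lem:empirical-one-phase} and the rest of your induction go through.
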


\begin{proof}
For the ease of notation, let
\( 
S_\star=\{i_1,\ldots,i_k\},
\) ordered so that the CoT tokens are
\[
z_t=x_{i_1}\oplus\cdots\oplus x_{i_t},
\qquad t\in[k].
\]
Recall that we use
\( 
w^{(t)}:=W^{(t)}_{d,1:d}
\)
for the row of the attention matrix that determines the output at position \(d\). We prove by induction that after phase \(t\), after the rounding step,
\[
w^{(t)}
=
-\sum_{\ell=1}^{t} e_{i_\ell}.
\]

By initialization,
\( 
W^{(0)}=0,
\)
so
\( 
w^{(0)}=0,
\) which is the empty signed prefix. Now suppose
\[
w^{(t-1)}
=
-\sum_{\ell=1}^{t-1} e_{i_\ell}.
\]
Then for every \(x\in\{0,1\}^d\),
\( 
\langle w^{(t-1)},x\rangle
=
-\sum_{\ell=1}^{t-1}x_{i_\ell}
\in\mathbb{Z}.
\) Moreover, since \(\sigma\) agrees with parity on the integers,
\[
\sigma(\langle w^{(t-1)},x\rangle)
=
x_{i_1}\oplus\cdots\oplus x_{i_{t-1}}
=
z_{t-1}.
\]

During phase \(t\), the target is
\[
z_t
=
z_{t-1}\oplus x_{i_t}.
\]
By \Cref{lem:empirical-tracking}, with probability $1-\delta$ and a union bound, we have 
\begin{equation}\label{eq:}
    \sup_{0\le s\le T}
\sup_{b\in[B]}
\left|
\left\langle
u_{\mathrm{emp}}^{(s)}-u_{\mathrm{pop}}^{(s)},
x^{(b)}
\right\rangle
\right|
\le
\frac1{100}.
\end{equation}
By the choice of $\eta=c/d, T=\lceil Cd \rceil$, and the fact that $\lambda_{\min}(\Sigma)=1/4$, the population trajectory has the property that it stays in the left affine region and sufficiently away from saddle for the choice of correct constants, i.e., 
$ \sup_{0 \leq s \leq T} \sup_{b \in [B]} \langle u_{pop}^{(s)}, x^{(b)}\rangle \leq 1-1/100$. Combining both we have that even $u^{(s)}_{emp}$, stays in the affine region, and the condition needed to apply \Cref{lem:empirical-one-phase} holds. The other condition needed also holds due to \Cref{lem:good-empirical-covariance}. 

Applying \Cref{lem:empirical-one-phase} with probability at least \(1-\delta/k\) over the phase-\(t\) batch, the row before rounding satisfies
\[
\left\|
w^{(t,T)}
-
\left(w^{(t-1)}-e_{i_t}\right)
\right\|_\infty
<
\frac12.
\]
Since
\[
w^{(t-1)}-e_{i_t}
=
-\sum_{\ell=1}^{t}e_{i_\ell}
\in\{-1,0,+1\}^d,
\]
coordinate-wise nearest-integer rounding recovers it exactly:
\[
w^{(t)}
=
\mathsf{Round}(w^{(t,T)})
=
w^{(t-1)}-e_{i_t}
=
-\sum_{\ell=1}^{t}e_{i_\ell}.
\]
This completes the induction step. Taking a union bound over all \(k\) phases, the above event holds simultaneously for every phase with probability at least $1-\delta$. Hence, for every \(x\in\{0,1\}^d\), the final output at position \(d\) is
\[
f_{W^{(k)}}(x)_d
=
\sigma\left(\left\langle W^{(k)}_{d,1:d},x\right\rangle\right)
=
\sigma\left(-\sum_{\ell=1}^{k}x_{i_\ell}\right) \equiv
\sum_{\ell=1}^{k}x_{i_\ell} \pmod 2 =\bigoplus_{i\in S_\star}x_i
\pmod 2.
\]
\end{proof}

\paragraph{Phase-1 next-token dynamics on entire CoT.}
In the first phase, the model is trained autoregressively on the full CoT sequence. For each \(l\in[k]\), the target \(z_l\) is predicted at position \(d+l-1\). Thus
\[
f_W(a^{[1]})_{d+l-1}
\quad\text{is trained against}\quad
a^{[1]}_{d+l}=z_l=z_{l-1}\oplus x_{i_l}.
\]
For \(i=1\), this is the task of predicting
\[
z_1=x_{i_1}
\]
from the input bits. Since \(l\ge2\), it is guaranteed that at position \(d+i-1\), contains the previous CoT prefix \(z_{i-1}\). We initialize the corresponding row so that it reads this current CoT token with coefficient \(-1\):
\[
W_{d+l-1,d+l-1}=-1.
\]
Since \(\sigma(-z_{l-1})=z_{l-1}\) for \(z_{l-1}\in\{0,1\}\), this row already represents the previous prefix parity. The remaining task is to learn the next input coordinate \(x_{i_l}\). Thus learning the rest of the tokens in Phase 1 also correspond to another one step linear regression problem for the rest of the rows in the attention matrix. 
\begin{lemma}[Phase-one next-token CoT learning]

\label{lem:phase-one-next-token-dynamics}

Consider running \Cref{alg:internalize-parity-shortening}. Recall that the initialization 
\[
W^{(0)}_{d+l-1,d+l-1}=-1
\qquad\text{for all }l\in\{2,\ldots,k\}.
\]

Then, with probability at least \(1-\delta\) over the random batch in Phase \(1\), for every \(l\in[k]\),
\[
f_{W^{(1)}}(a^{[1]})_{d+l-1}=z_l
\qquad
\text{for all }x\in\{0,1\}^d.
\]
\end{lemma}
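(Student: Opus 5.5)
The plan is to reduce Phase 1 to $k$ decoupled copies of the single-phase analysis in \Cref{lem:empirical-one-phase}, one for each row $d+l-1$, $l\in[k]$. We read the Phase-1 update of \Cref{alg:internalize-parity-shortening} as gradient descent on the columns $1{:}d$ of every row that carries a loss term. The entries $W_{d+l-1,d+l-1}=-1$ stay fixed.

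\textbf{Step 1: the loss decouples by rows.} By \Cref{eq:output-at-i}, the prediction $f_W(a^{[1]})_{d+l-1}$ depends only on row $W_{d+l-1,:}$. So $\widehat L_1(W)$ is a sum of $k$ terms, each depending on a different row, and the gradient descent trajectories of the rows evolve independently.

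\textbf{Step 2: each row is a shifted copy of a one-phase problem.} Fix $l\ge 2$ and write the trainable part of the row as $W_{d+l-1,1:d}=-u$, with $u^{(0)}=0$. At the start, the only other nonzero entry in this row is $W_{d+l-1,d+l-1}=-1$, and it multiplies $a^{[1]}_{d+l-1}=z_{l-1}$. Hence the pre-activation equals
\[
m_b-\langle u,x^{(b)}\rangle, \qquad m_b:=-z^{(b)}_{l-1}\in\mathbb{Z}.
\]
Moreover $\sigma(m_b)=z^{(b)}_{l-1}$ and $z^{(b)}_l=\sigma(m_b-x^{(b)}_{i_l})$.

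Exactly as in the proof of \Cref{lem:empirical-one-phase}, if the trajectory stays in the region $0\le\langle u^{(s)},x^{(b)}\rangle\le 1$, then $\sigma$ is affine there with slope $\beta_b=\sigma'(m_b)\in\{\pm1\}$. The row's loss then equals
\[
\frac{1}{2B}\sum_b\bigl(\langle u,x^{(b)}\rangle-x^{(b)}_{i_l}\bigr)^2,
\]
which is linear regression with minimizer $e_{i_l}$. The only change from that lemma is that the integer offset comes from the CoT token instead of a signed prefix in the input row. The argument never used where the integer came from. For $l=1$ the row is row $d$ with zero initialization, which is literally phase $t=1$ of the induction in \Cref{thm:empirical-internalization-parity}.

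\textbf{Step 3: verify the assumptions and round.} All $k$ rows use the same Phase-1 batch and hence the same $\widehat\Sigma_1$. So the event of \Cref{lem:good-empirical-covariance} ($\lambda_{\min}\ge 1/8$, $\lambda_{\max}\le d$) is one event for all rows.

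The main obstacle I expect is the affine-region condition, which must hold for all $k$ rows at once. I would use \Cref{lem:empirical-tracking}. Its proof bounds $\|\Delta_s\|_2$ through $\|\widehat\Sigma_1-\Sigma\|_{\mathrm{op}}$ only, uniformly in the target index $r$. So a single batch event gives the $1/100$ tracking bound simultaneously for every $r=i_l$, with no extra union bound beyond what the batch size $B\ge Cd^3\log(kT/\delta)$ already covers. Combined with the population trajectory staying in $[0,1-1/100]$, as argued in \Cref{thm:empirical-internalization-parity}, the empirical trajectory stays in the affine region.

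\Cref{lem:empirical-one-phase} then gives $\|u^{(T)}-e_{i_l}\|_\infty<1/2$, so rounding yields $W^{(1)}_{d+l-1,1:d}=-e_{i_l}$. The integer diagonal entry $-1$ is unchanged by rounding. Therefore
\[
f_{W^{(1)}}(a^{[1]})_{d+l-1}=\sigma\bigl(-z_{l-1}-x_{i_l}\bigr)=z_{l-1}\oplus x_{i_l}=z_l
\]
for every $x\in\{0,1\}^d$, and for $l=1$ we get $\sigma(-x_{i_1})=z_1$. By the correctness of each next-token prediction, induction on $l$ then gives the autoregressive statement \Cref{eq:full-CoT}.
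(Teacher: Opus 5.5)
Your proposal is correct and follows the paper's proof. Each supervised row $d+l-1$ reduces to the one-coordinate regression dynamics of \Cref{lem:empirical-one-phase}, with the integer offset $-z_{l-1}$ supplied by the initialized diagonal entry, and the hypotheses come from the events of \Cref{lem:good-empirical-covariance,lem:empirical-tracking}. Your two refinements are valid and make the argument slightly cleaner. First, you state explicitly that Phase 1 must update columns $1{:}d$ of every supervised row, whereas the pseudocode literally updates only $W_{d,1:d}$. Second, you note that the tracking bound depends only on $\|\widehat\Sigma_1-\Sigma\|_{\mathrm{op}}$, so it holds for all $r=i_l$ at once, where the paper instead takes a union bound over $l$.
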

\begin{proof} We only focus on the case \(l\ge2\), as $l=1$ correspond to the $d^\mathrm{th}$ position, which is already covered in \Cref{thm:empirical-internalization-parity}. At position \(d+l-1\), the current token is \(z_{l-1}\), and by initialization
\[
W^{(0)}_{d+l-1,d+l-1}=-1.
\]
Thus the row initially represents the previous prefix by the logit \(-z_{l-1}\). Since
\[
z_l=z_{l-1}\oplus x_{i_l}
=
\sigma(-z_{l-1}-x_{i_l}),
\]
learning the next token reduces to the same one-coordinate dynamics from \Cref{lem:empirical-one-phase}, now with target coordinate \(x_{i_l}\). The high probability events again hold as in \Cref{lem:good-empirical-covariance,lem:empirical-tracking} using similar calculations, and now taking union bound for \(l\in[k]\) for the same value of the batch size. Therefore, after Phase \(1\) (and rounding)
\[
W^{(1)}_{d+l-1,i_l}=-1,
\]
while the initialized coefficient on \(z_{l-1}\) remains \(-1\). Hence
\[
f_{W^{(1)}}(a^{[1]})_{d+l-1}
=
\sigma(-z_{l-1}-x_{i_l})
=
z_l.
\] 
\end{proof}
We conclude the analysis by noting that \Cref{thm:provable-parity-main} simply combines \Cref{thm:empirical-internalization-parity} and \Cref{lem:phase-one-next-token-dynamics}.

\subsection{Experiments for Learning Parity with Simplified Transformer}
We also experimentally see the learning mechanics by training the same transformer architecture as in \Cref{alg:internalize-parity-shortening} on the parity task, but without the rounding step. We set \(d=20\) and \(k=9\), use batch size \(B=1024\), step size \(\eta=10^{-3}\). Rather than switching phases after a fixed iteration budget \(T\), we advance to the next phase once the training loss falls below a prescribed threshold. The learned attention matrices are reported in \Cref{fig:attention-parity}. Consistent with the theoretical mechanism, the attention weights progressively concentrate on the relevant coordinates, and by the end of the curriculum the model selects the desired coordinates from the hidden parity support.
\begin{figure}
    \centering
\includegraphics[width=1.0\linewidth]{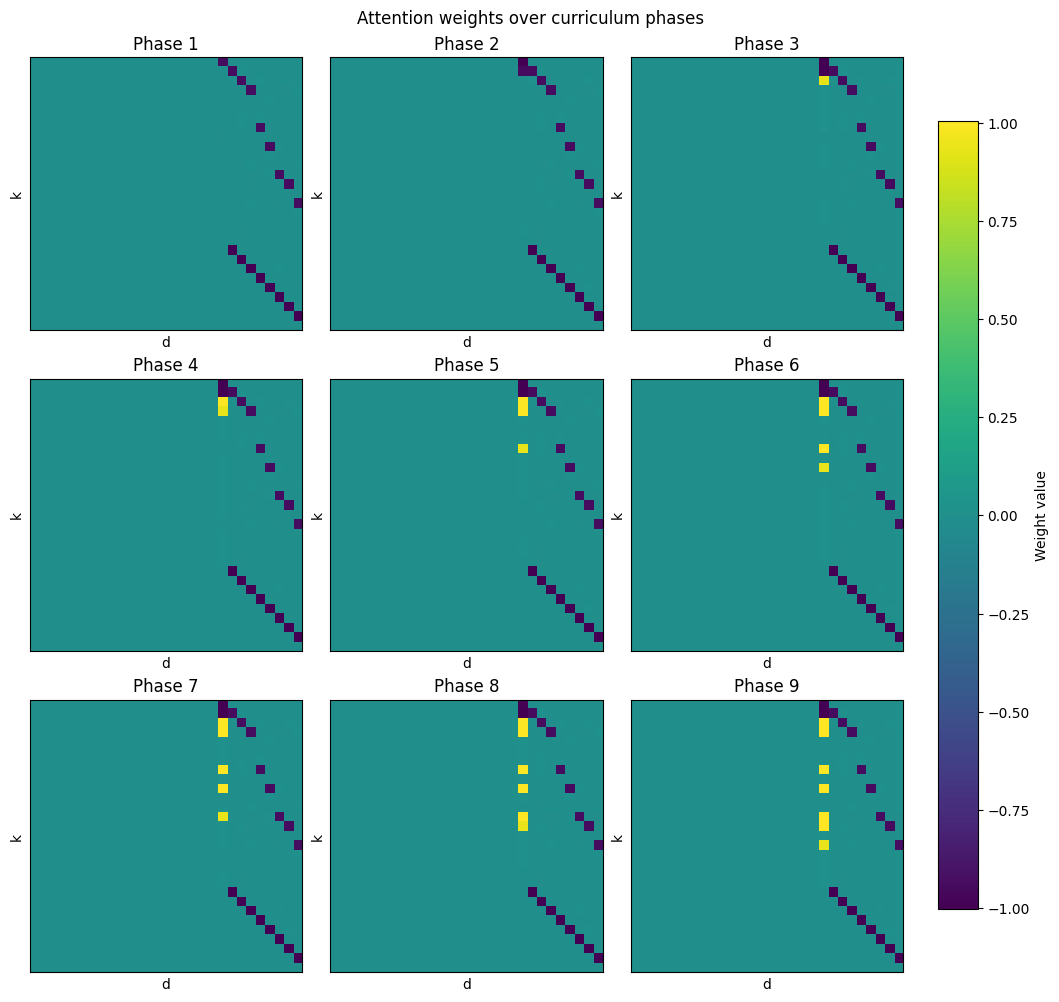}
    \caption{Evolution of the learned attention weights during curriculum training for the parity task. Each panel shows the attention-weight matrix after one internalization phase, where the model is trained after progressively removing one CoT token. The visualization suggests that the same architecture gradually reorganizes its attention weights to recover the hidden parity support as the explicit CoT is removed.}
    \label{fig:attention-parity}
\end{figure}

\newpage

\section{Internalizing semiautomata: experimental details and additional figures}\label{sec:app:experimental-details-semiautomata}

In this section, we provide more experimental details and results on internalizing the simulation of semiautomata.
We describe our learning setting in more detail.

\paragraph{Setup} A semiautomaton $\mathcal{A}$ is a tuple $\mathcal{A} = \left(Q, \Sigma, \delta\right)$ where $Q$ is a set of \textit{states}, $\Sigma$ is an input alphabet and $\delta: Q \times \Sigma \to Q$ denotes a \textit{transition function}. We are interested in predicting the state of the semiautomaton after $T$ transition steps, starting from a given initial state $q_0 \in Q$. Equivalently, we are interested in learning the \textit{extended} transition function $\delta^{(t)}: Q \times \Sigma^T \to Q$. 
We consider a uniform distribution over input symbols $\sigma_1, \ldots, \sigma_T \sim_{iid} \mathrm{Unif} \left( \Sigma \right)$. Each input sequence is concatenated with a fixed initial state $q_0$, that is $z =(\sigma_1, \ldots, \sigma_T, q_0) \in \Sigma^{T} \times Q$. The learning goal is to find a predictor $h: \Sigma^{T} \times Q \to Q$ that minimizes the probability of predicting the wrong final state:
\begin{equation}
    L_{\mathcal{A}}(h) = \mathbb{P}_{\sigma_1, \ldots, \sigma_T} \left[ h\left( \sigma_1, \ldots, \sigma_T, q_0 \right) \neq \delta^{(T)}\left( q_0,  \sigma_1 \ldots \sigma_T \right) \right].
\end{equation}

\paragraph{Autoregressive training} We consider various training procedures that differ in the type and structure of the training data:

\begin{itemize}
    \item[-] \textit{Chain-of-thought} (CoT) training: We consider sequences $z = (\sigma_1, \ldots, \sigma_T, q_0, q_1, \ldots, q_T)$ that contain all the intermediate states between input and final state, where $q_t = \delta(q_{t-1}, \sigma_t, ) = \delta^{(t)} \left(q_0, \sigma_1\ldots\sigma_t \right) $ for $t \in [T]$.
    \item[-] \textit{End-to-end} (E2E) training: Only the final state $q_T \in Q$ is appended to the input sequence. 
    \item[-] \textit{Internalization via left curriculum}: We consider $T$ stages of a curriculum, where at each stage $t$ we use data of the form $z^{(t)} = (\sigma_1, \ldots, \sigma_T, q_0, q_t, \ldots, q_T)$. That is, we omit the first $t-1$ states from the sequence.
    \item[-] \textit{Internalization via right curriculum}: Similarly, we consider $T$ stages of a curriculum, where at each stage $t$ we use data of the form $z^{(t)} = (\sigma_1, \ldots, \sigma_T, q_0, q_1, \ldots, q_{T-t}, q_T)$. In other words, we omit the last $t-1$ states from the sequence (starting from state $q_{T-1}$).
    \item[-] \textit{Internalization via inductive curriculum}: We consider $T$ stages of a curriculum, where at each stage $t$ we use data with a chain of thought that contains leaps of length $t$, i.e., $z^{(t)} = (\sigma_1, \ldots, \sigma_T, q_0, q_t, q_{2t}, \ldots, q_{kt})$ where $k = \arg \max \{n \in \mathbb{N}: n t \leq T \}$. Note that the last token might or might not be the final state $q_T$ in some of the stages.
\end{itemize}

\begin{figure}
    \centering
    \includegraphics[scale=0.4]{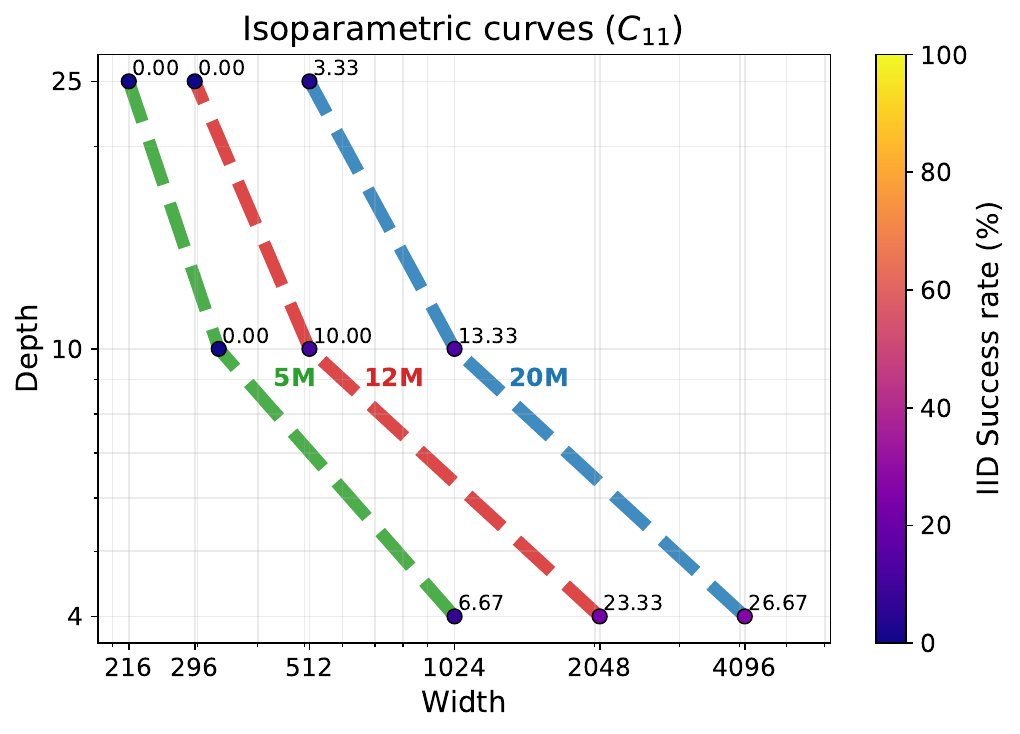}
    \includegraphics[scale=0.4]{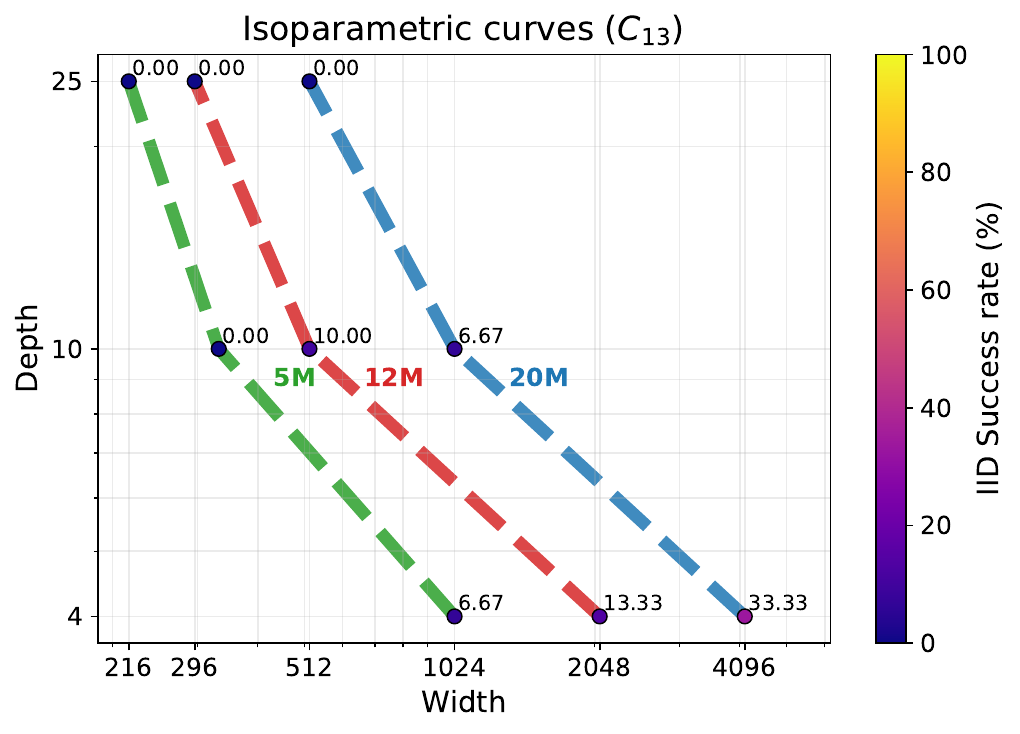}
    \caption{\textbf{Internalization success rates across isoparametric curves for $C_{11}$ and $C_{13}$.} We observe that increasing width, rather than depth, improves the chances of internalization.}
    \label{fig:isoparams_more}
\end{figure}

We train with a next-token prediction $\log$ loss applied only at the state positions of each sequence (i.e., the predictable part of the sequence). 
We start all curricula at stage 1 (when the data contain the full history of states $q_1, \ldots, q_T$) and increase the stage after a fixed number of iterations. 

For counter automata, the alphabet $\Sigma$ and state space $Q$ are the same, that is $\Sigma = Q = \mathbb{Z}_n$. We encode input symbols and state symbols using different symbols in our transformer implementation. For $S_3$, the input symbols are 4 permutations: identity, transposition $1 \leftrightarrow 2$, transposition $2 \leftrightarrow 1$ and a shift to the left by 1. We encoded each state using their Lehmer rank. We always append an EOS token at the end of the sequences.
For OOD evaluation on $C_n, \, n > 2$, each input token is sampled independently as $\sigma_t \sim \mathrm{Binomial}(n-1,p)$, where $n$ is the size of the input space and $p$ is a parameter. For the parity semiautomaton $C_2$, OOD is measured by simply changing the bias of sampling `1' vs `0' from 0.5 to $p$. Unless stated otherwise, we always use $p=0.8$ which induces a distribution concentrated on larger symbols and larger total sequence sums.

\paragraph{Experimental details}

We run experiments using Huggingface's transformers library. The architecture is a GPT2-type architecture with absolute positional embeddings, normalization layers, residual connections and uses the GeLU activation function. We use the AdamW optimizer with constant learning rate $3\cdot 10^{-4}$ and parameters $\left(\beta_1, \beta_2\right) = (0.9, 0.95)$, where the reduced momentum coefficients enable faster adaptation to the changing loss between curriculum stages. We reset the optimizer's state in the beginning of each curriculum stage.

Each run for a counter semiautomaton completes in $50,000$ training iterations of online training with 128 freshly sampled sequences per iteration. All curriculum stages consist of an equal amount of training iterations equal to $50, 000 / T$, where $T$ is the automaton simulation time. The experiments on $S_3$ run for $200,000$ training iterations. We deem an internalization run as \textit{successful} if in-distribution test accuracy is above $95\%$ at the end of training. This is how we compute successs rates across the section.

The main set of results of Figure~\ref{fig:cot_e2e_internalization_comparison} consists of 30 random seeds combination of semiautomaton and simulation length. The depth of the network is set to 4, the embedding dimension is set to $512$, the number of heads is set to $128$ and the MLP width at each layer is $2048$.

Figure~\ref{fig:isoparams_and_cprime} (left), which compares success rates across different semiautomata for $T=25$, presents simulations with networks of depth 4, embedding dimension 512, number of heads 128, and MLP width 256 and 2048.

Figure~\ref{fig:isoparams_and_cprime} (right) and Figure~\ref{fig:isoparams_more} show success rates across varying width and depth configurations for \(C_7\) and \(C_{11}, C_{13}\), respectively. In these figures, width is defined as the maximum of the embedding dimension and MLP width. We evaluate three parameter-matched architecture families:
\[
\text{20M}: [(4,512,4096), (10,512,1024), (25,344,512)],
\]
\[
\text{12M}: [(4,376,1024), (10,332,128), (25,216,64)],
\]
\[
\text{5M}: [(4,512,2048), (10,448,512), (25,296,256)],
\]
where each tuple denotes \((\text{depth}, \text{embedding dimension}, \text{MLP width})\). In all cases, we use per-head-dimension $4$ which determines the number of heads for each configuration.

Figure~\ref{fig:scatter_C2_C3_C5} presents scatter plots of in-distribution (ID) and out-of-distribution (OOD) test accuracies at the end of internalization across multiple random seeds. For \(C_2\), we use an embedding dimension of 64 with 32 attention heads under the inductive curriculum. For \(C_3\), the depth-1 configuration uses embedding dimension 2048, MLP width 8192, and 512 attention heads, also trained with the inductive curriculum. The depth-4 and depth-10 configurations for both \(C_3\) and \(C_5\) use embedding dimension 512 and 128 attention heads, and are trained using the left curriculum. We report depth-1 results with the inductive curriculum because it consistently achieved stronger internalization performance than the left curriculum in shallow architectures.

\paragraph{More results}

We found the number of attention heads to be highly important for successful internalization. In particular, for a fixed embedding dimension shared across heads, increasing the number of heads consistently improved the internalization success rate. Figure~\ref{fig:more_heads_help} demonstrates this effect for \(C_2\) across two embedding dimensions and multiple depth configurations.

\begin{figure}
    \centering
    \includegraphics[scale=0.45]{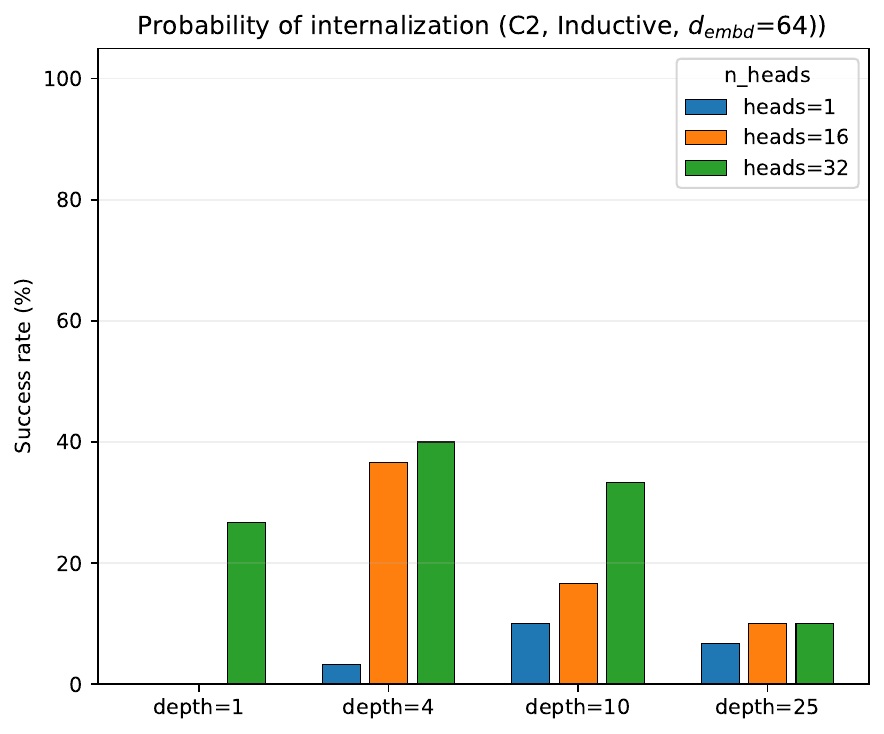}
    \includegraphics[scale=0.45]{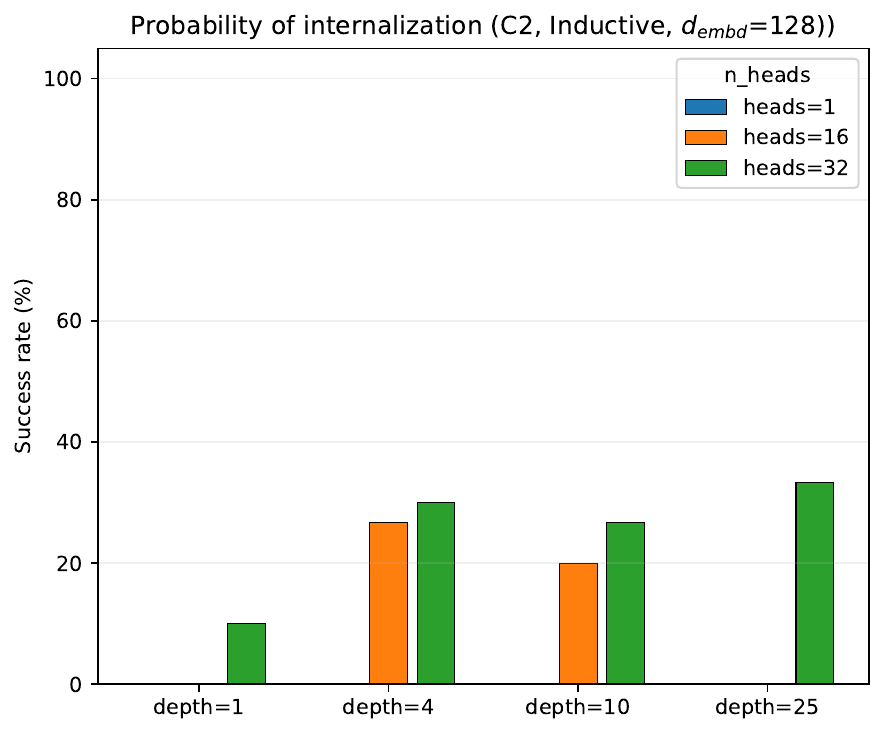}
    \caption{\textbf{Increasing the number of attention heads improves the success rate of internalization.} Results are shown for \(C_2\) under the inductive curriculum, with embedding dimensions 64 (left) and 128 (right), across multiple depth configurations. The MLP width is set to four times the embedding dimension.}
    \label{fig:more_heads_help}
\end{figure}

We found that a curriculum method that starts by removing tokens from the right is strictly worse at internalizing chain-of-thought reasoning than either a left-to-right or an inductive method. For example, in Figure~\ref{fig:left-vs-right}, we present success rates across three different semiautomata and observe that, while a left curriculum succeeded in internalizing approximately one-third of the time, the right curriculum failed in every case. Similar observations were made by \citet{DCS24}.

\begin{figure}
    \centering
    \includegraphics[scale=0.425]{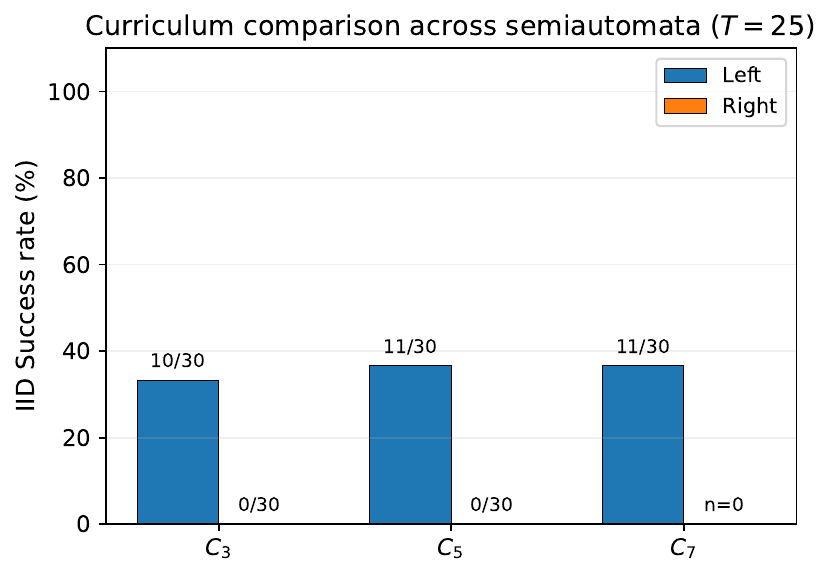}
    \caption{\textbf{Comparison across left and right curricula for internalization of $C_3, C_5$ and $C_7$.} This is for transformers of depth 4, embedding dimension 512, number of heads set to 128 and MLP width equal to 2048.}
    \label{fig:left-vs-right}
\end{figure}

We provide an intuitive explanation for this phenomenon. We believe that this shortcoming is largely attributable to the use of absolute positional embeddings in the architecture. As chain-of-thought tokens are internalized, the MLP -- which is shared across positions -- must play a dual role: it must implement both the simple one-step operation (computing $\bmod n$ over two numbers) and the increasingly complex aggregated operation of addition modulo $n$ over $t$ numbers, where $t$ denotes the current stage of the curriculum. Since our transformer uses absolute positional embeddings, it must use positional information to gate the MLP into one of these two operations. If we internalize from the left, then the position at which the complex operation occurs remains fixed at $T+1$. In contrast, if we internalize from the right, then this position changes at every stage, creating a difficult moving target for the MLP. We believe that other architectures, such as mixture-of-experts models, may not exhibit such a difference between the two methods.

\subsection{Learning Semiautomata with Mixture: Experimental Details}\label{app:mixture}

We also consider learning semiautomata from a single mixture distribution over CoT lengths, rather than from an ordered curriculum. Let
\[
    \mathcal D_1,\ldots,\mathcal D_T
\]
denote the stage distributions corresponding to one of the CoT-removal procedures defined above. The mixture distribution is $\mathcal D_{\mathrm{mix}}
    =
    \frac{1}{T}\sum_{t=1}^T \mathcal D_t$ .
In our implementation, at each training step we sample a stage \(t\sim\mathrm{Unif}([T])\) and draw the batch from the corresponding stage distribution \(\mathcal D_t\). Thus, marginally over training, the model is trained on a single distribution supported on examples with different CoT lengths. We use the same number of samples in total for the curriculum and mixture runs so that the comparisons are fair.  

The main difference from the curriculum experiments is having additional token for the answer length hints. In the original semiautomata experiments, an example consists of the input symbols, followed by the fixed initial state \(q_0\), followed by the retained state tokens and \(\texttt{EOS}\). In the mixture experiments, we insert one additional token after \(q_0\) and before the answer block:
\[
    (\sigma_1,\ldots,\sigma_T,q_0,\texttt{LEN}_L,
    q_{j_1},\ldots,q_{j_K},\texttt{EOS}).
\]
Here $1\le j_1<\cdots<j_K=T$,
so the final state \(q_T\) is always retained, and $L = K+1$
is the length of the answer block including the trailing \(\texttt{EOS}\). We add \(T\) new vocabulary tokens
\[
    \texttt{LEN}_2,\ldots,\texttt{LEN}_{T+1},
\]
disjoint from the input tokens, state tokens, and \(\texttt{EOS}\). The direct-answer case has \(K=1\), hence \(L=2\), corresponding to the output
\[
    (q_T,\texttt{EOS}).
\]

 The token \(\texttt{LEN}_L\) tells the model which output format is requested for the current example. The hint is part of the prompt and is not included in the loss; as above, the next-token loss is applied only to the predictable answer/state positions.

At evaluation time, we can either use the natural length hint associated with a mixture component or force a desired answer length. In particular, forcing $\texttt{LEN}_2$ asks the model to answer directly, while forcing larger values of \(L\) asks it to generate a longer CoT before the final answer. We report final-state accuracy, where the prediction is the last generated state token before \(\texttt{EOS}\). This lets us evaluate the same trained model both in a direct-answer regime and in longer-CoT regimes.

\section{Additional Details for Learning Through Internalization as Positive Distribution Shift}\label{app:pds-view}

\subsection{General Positive Distribution Shift}

Let $\mathcal X$ be the input space, and $(\mathcal D, f^*)$ be the target distribution, where $f^*:\mathcal X\to \Delta(\mathcal Y)$ is the target function. We aim to find a predictor $h$ that has low error $L_{(\mathcal D,f^*)}$ specified by loss $\ell:\mathcal Y \times \mathcal Y \to \mathbb R_{\ge 0}$  (we extend it to a randomized prediction by $\bar{l}(p,q) = \mathbb E_{\hat{a}\sim p,a\sim q} l(\hat{a},a), ~p,q\in \Delta(\mathcal Y)$). We are allowed to train on a (potentially larger) context space $\mathcal C$, which could include added hints, padding, prompts, or other information, and for which there is an extraction map $\pi: \mathcal C \to \mathcal X$ which maps the context $c$ back to an input $x = \pi(c)$. The label depends only on the input $\pi(c)$. We train on a distribution $\mathcal D_{\mathcal C}' \in \Delta(\mathcal C)$ labeled by $f^* \circ \pi$. For a class of models $\mathcal M$, we define an efficient transport map $Tr_{b}: \mathcal M \to \Delta(\mathcal Y)^{\mathcal X}$ for an inference budget $b$. For a hypothesis $h\in \mathcal H\subseteq \Delta(\mathcal Y)^{\mathcal X}$, we define $L_{(\mathcal D,f^*)}(h) = \mathbb E_{x\sim \mathcal D} \bar{l}(h(x),f^*(x))$ and $L_{(\mathcal D,f^*)}(\mathcal H) = \inf_{h\in \mathcal H} L_{(\mathcal D,f^*)} (h)$. We evaluate the loss of a model $M\in \mathcal M$ on the target function accuracy as $L_{(\mathcal D,f^*)}^b(M) = L_{(\mathcal D,f^*)}(Tr_b(M))= E_{x\sim \mathcal D} \bar{\ell}\left(Tr_{b}(M)(x),f^*(x) \right) $.

\begin{definition}[Generalized PDS learnability]
A learning rule $A$ generalized-PDS-learns $(\mathcal D,f^*)$ and a hypothesis class \(\mathcal H\subseteq\Delta(\mathcal Y)^{\mathcal X}\) with inference budget \(b\), training context $\mathcal C$, training distribution $\mathcal D_{\mathcal C}'$, extraction map $\pi$, model class $\mathcal M$, and efficient transport map $Tr_{b}$ with sample complexity \(m(\varepsilon)\) and runtime
\(T(\varepsilon)\) if for every $\varepsilon>0$ and for
\(
S=\{(c_i,y_i)\}_{i=1}^{m(\varepsilon)}
\sim
(\mathcal D'_{\mathcal C},f^\star\circ\pi)^{m(\varepsilon)},
\)
the learner outputs
\(
\hat M=A(S)\in\mathcal M
\)
in time at most \(T(\varepsilon)\), and
\[
\mathbb E_{S,A}
L_{(\mathcal D,f^\star)}
\bigl(
\operatorname{Tr}_b(\hat M)
\bigr)
\le
L_{(\mathcal D,f^\star)}(\mathcal H)+\varepsilon.
\]
\end{definition}

Depending on what $\mathcal C$ and $\mathcal D_{\mathcal C}$ (and the elements in its support) are allowed to depend on, we can define generalized-f-PDS and generalized-D(R)S-PAC learnabilities.

\paragraph{Original PDS.}
The original PDS setting from \cite{Med+26} is recovered by taking the training and
evaluation spaces to be the same. Let
\(
f^*:\mathcal X\to\Delta(\mathcal Y)
\)
be the target function, let \(\mathcal D\in\Delta(\mathcal X)\) be
the target input distribution, and let
\(
\mathcal H\subseteq \mathcal Y^{\mathcal X}
\)
be the hypothesis class, which is deterministic in this case. Set
\(
\mathcal C=\mathcal X,
\pi=\mathrm{id}_{\mathcal X}.
\)
Let \(\mathcal M=\mathcal H\). We only need to consider $b=0$ and we define
\(
\operatorname{Tr}_b(h)=h.
\)
We take 
\(
\mathcal D'_{\mathcal C}=\mathcal D'
\)
and use zero-one loss,
\(
\ell(\hat y,y)=\mathbf 1\{\hat y\neq y\}.
\)
We have
\[
L_{(\mathcal D,f^\star)}(\operatorname{Tr}_b(h))
=
\mathbb E_{x\sim \mathcal D}
\mathbb E_{y\sim f(\cdot\mid x)}
\mathbf 1\{h(x)\neq y\}
=
L_{\mathcal D,f}(h).
\]
Thus the generalized PDS guarantee becomes
\[
\mathbb E_{S\sim(D',f)^m}
L_{D,f}(A(S))
\le
L_{D,f}(\mathcal H)+\varepsilon,
\]
which is the original PDS guarantee.

\paragraph{Mixture over hinted parities in MLPs.}
For the mixture experiments with MLPs in \Cref{subsec:internalization-MLP}, we need to take $\mathcal C$ to be the expanded input space with the hints, and $Tr_{b}$ to be the operation that extracts the subnetwork from the bigger network. 

We instantiate $\mathcal X= \{0,1\}^d$, $\mathcal D=\text{Unif}(\{0,1\}^d)$ and $f^*(\cdot \mid x)$ is $\chi_{S}(x)$ with probability $1-\eta$ and $1-\chi_{S}(x)$ with probability $\eta$, where $S$ is the hidden parity support of size $k$. The loss is zero one loss $\ell(\hat y,y)=\mathbf 1\{\hat y\neq y\}$. We then take $\mathcal C=\{0,1\}^{d+k-1}$ and $\pi(c) = (c_1,\dots, c_d)$. The hints are given by $z_t(x) = \oplus_{j=1}^{t} x_{i_j}$, for $t=1,\dots,k-1$ and for $r=0,\dots,k-1$ let $c_r(x) = (x_1,\dots,x_d,z_1(x),\dots, z_{k-1-r}(x), 0,\dots,0) \in \mathcal C$. The hypothesis $\mathcal H_k$ here is the $k$-sparse parities. Let $\mathcal D_r$ be the distribution of $c_r(x)$ for $x\sim \mathcal D$ and take the training distribution to be $\mathcal D_{\mathcal C}' = \frac{1}{k} \sum_{r=0}^{k-1} \mathcal D_r$. The model class $\mathcal M$ is the class of expanded input MLPs $g_{\theta}(\tilde x) = F_\theta(\tilde x)
=
W_L\sigma\!\left(
W_{L-1}\sigma\!\left(
\cdots
\sigma(W_1\tilde x+b_1)
\cdots
\right)+b_{L-1}
\right)+b_L$. The transport map takes the subetwork $Tr_{0}(g_{\theta})(x) = \left(
W_L\sigma\!\left(
W_{L-1}\sigma\!\left(
\cdots
\sigma(W_1[:,1\!:\!d]\,x+b_1)
\cdots
\right)+b_{L-1}
\right)+b_L
\right)$. The learning algorithm $A$ here is the standard SGD on the expanded input MLP $g_{\theta}$.

Then the PDS learning condition is $\mathbb E_{S,A}
L_{(\mathcal D,f^\star)}
\bigl(
Tr_0(A(S))
\bigr)
\le
L_{(\mathcal D,f^\star)}(\mathcal H_{k})
+\varepsilon
=
\eta+\varepsilon.$

So our results with MLPs in \Cref{sec:pds-learning-internalization} imply generalized-PDS-learnability of the noisy parity task.

\paragraph{CoT mixture for learning semiautomata.}

For a semiautomaton $(Q,\Sigma, \delta)$ and horizon $T$, we take the input space $\mathcal X$ to be given by elements of the form $(\sigma_1,\dots, \sigma_T, q_0, \text{LEN}_{L})$, i.e. \[
\mathcal X = \Sigma^T \times Q \times \{\text{LEN}_2,\dots, \text{LEN}_{T+1} \}. 
\]
Let $u=(\sigma_1,\dots, \sigma_T,q_0)$. Given the answer length $K\in [T]$, if we have CoT state indices $1\le j_1 < \dots < j_K =T$ left after CoT removal, the answer is $a_K^*(u) = (q_{j_1}(u),\dots, q_{j_K}(u), \text{EOS})$ where $q_t(u) = \delta(q_{t-1}(u),\sigma_t)$ is the state trajectory $t=1,\dots, T$. The target function is $f^*(x) = f^*((u,\text{LEN}_{K})) = a_K^*(u)$, and the target distribution is the final state evaluation $\mathcal D = \mathcal D_{U} \otimes \delta_{\text{LEN}_2}$, where $\mathcal D_U$ is the distribution of $u=(\sigma_1,\dots, \sigma_T, q_0)$. The output space is $\mathcal Y= \cup_{L=2}^{T+1} (Q^{L-1} \times \{ \text{EOS} \})$. The loss is $l(\hat{a},a)=1_{\text{last}(\hat{a})\neq \text{last}(a)}$ where $\text{last}(a)$ is the last token before $\text{EOS}$ in $a$. The context distribution is given by \[
c_k(x) = (x, q_{j_1}(u),\dots, q_{j_K}(u),\text{EOS}).
\]
Let $\mathcal D_{K}$ be the distribution of $c_K(u)$ for $u\sim \mathcal D_U$ and let $\mathcal D_{\mathcal C}' = \frac{1}{T} \sum_{K=1}^{T} \mathcal \mathcal D_{K}$. Here $\mathcal M$ is the set of autoregressive transformers. The direct answering transport map is for $b=2$, $Tr_2(M)(x)$ given by the distribution of answers over outputs generated from $x=(u,\text{LEN}_2)$ and the full-CoT inference we take $b=T+1$ analogously. The learning algorithm $A$ here is standard next-token prediction training of a transformer on the mixture distribution. Thus this case also falls under the generalized-PDS learnability definition.


\subsection{Learning Through Internalization with MLPs: Experimental Details}\label{app:mlp-cot-details}

All notation follows \Cref{subsec:internalization-MLP}. In the experiments, we use the bipolar version of the parity task, which is equivalent to the binary domain: \(\oplus\) is represented by multiplication, the original data input has dimension \(d=25\), and the support size is \(k=12\). After relabeling the support, the target is \(y=\prod_{j=1}^{12}x_j\), with \(x\in\{\pm1\}^{25}\). The augmented input has \(d+k-1=36\) coordinates: the first \(25\) are data coordinates, and the remaining \(11\) are the hint coordinates \(z_1,\ldots,z_{11}\). 

For implementation, we index stages by \(r\in\{0,\ldots,k-1\}\), the number of removed hints. At stage \(r\), the \(r\) strongest/rightmost hints are removed and replaced by the neutral point ($0$ for binary input, $1$ for bipolar input)  
\[
    \tilde x^{(r)}
    =
    (x_1,\ldots,x_d,z_1,\ldots,z_{k-1-r},1,\dots,1).
\]
To make the task slightly harder and a better proxy for learning, instead of "zeroing out" the hints, we use independent random noise
\[
    \tilde x^{(r)}
    =
    (x_1,\ldots,x_d,z_1,\ldots,z_{k-1-r},\xi_{k-r},\ldots,\xi_{k-1}),
    \qquad
    \xi_i\sim \mathrm{Unif}\{\pm1\}.
\]
The plot in \Cref{fig:mixture} is shown with independent random noise.
We denote the corresponding input distribution by \(\mathcal D_r\). Thus \(\mathcal D_0\) is the full-hint distribution, while \(\mathcal D_{k-1}\) has no informative hints.

We compare three training conditions with matched total compute. The direct baseline trains only on \(\mathcal D_{k-1}\). The curriculum condition trains sequentially on \(\mathcal D_0,\mathcal D_1,\ldots,\mathcal D_{k-1}\), with equal budget per stage. The mixture condition trains on the single distribution $\mathcal D_{\mathrm{mix}}
    =
    \frac{1}{k}\sum_{r=0}^{k-1}\mathcal D_r$.
In the implementation of the mixture condition, each minibatch first samples \(r\sim\mathrm{Unif}\{0,\ldots,k-1\}\), and then all examples in that minibatch are sampled from \(\mathcal D_r\). Marginally, this is training on the fixed mixture distribution \(\mathcal D_{\mathrm{mix}}\), rather than on an ordered curriculum.

All models are evaluated on fresh samples from \(\mathcal D_{k-1}\), where every hint coordinate is independent noise. This measures performance when no informative hint is available, while keeping the implemented input dimension fixed at \(d+k-1\). When evaluating the formal subnetwork predictor from \Cref{subsec:internalization-MLP}, we instead evaluate \(h_\theta\) directly on \(x\in\{\pm1\}^{25}\).

We use fully connected ReLU MLPs with raw scalar output. The  configuration has depth \(2\) and width \(2048\), with approximately \(4.27\)M parameters. Here depth counts hidden ReLU layers: an input layer \(36\to w\), followed by depth-1 hidden layers \(w\to w\), and a final linear layer \(w\to 1\).

We train with mean-squared error against the \(\pm1\) target using plain SGD with learning rate \(3\times10^{-3}\), batch size \(64\), no momentum, no weight decay, and no learning-rate schedule. Training is online: every minibatch consists of freshly sampled examples. Each run uses \(600\) epochs with \(1000\) SGD steps per epoch, for \(6\times10^5\) total SGD steps. In the curriculum condition, this budget is divided evenly across the \(12\) stages, giving \(50\) epochs per stage. Accuracy is computed periodically on \(5000\) fresh evaluation samples, with a prediction counted as correct when its sign agrees with \(y\). Each condition is run with \(3\) random seeds.


\end{document}